\renewenvironment{abstract}
  {{\centering\large\bfseries Abstract\par}\vspace{0.7ex}%
    \bgroup
       \leftskip 20pt\rightskip 20pt\small\noindent\ignorespaces}%
  {\par\egroup\vskip 0.25ex}
\newenvironment{keywords}
{\vspace{0.05in}\bgroup\leftskip 20pt\rightskip 20pt \small\noindent{\bfseries
Keywords:} \ignorespaces}
{\par\egroup\vskip 0.25ex}
\theoremstyle{plain}
\newtheorem{theorem}{Theorem}
\newtheorem{lemma}{Lemma}
\newtheorem{proposition}{Proposition}
\newtheorem{corollary}{Corollary}
\newcommand{\defeq}{\stackrel{\text{def}}{=}}
\renewcommand{\(}{\left(}
\renewcommand{\)}{\right)}
\DeclareMathOperator*{\argmin}{arg\,min}
\DeclareMathOperator*{\tr}{Tr}
\DeclareMathOperator*{\rank}{rank}
\newcommand{\st}{\mathrm{s.t.}}
\newcommand{\bX}{{X}}
\newcommand{\bY}{{Y}}
\newcommand{\bZ}{{Z}}
\newcommand{\bE}{{E}}
\newcommand{\bL}{{L}}
\newcommand{\bU}{{U}}
\newcommand{\bV}{{V}}
\newcommand{\bA}{{A}}
\newcommand{\bB}{{B}}
\newcommand{\bD}{{D}}
\newcommand{\bM}{{M}}
\newcommand{\bG}{{G}}
\newcommand{\bI}{{I}}
\newcommand{\bW}{{W}}
\newcommand{\bH}{{H}}
\newcommand{\bx}{\boldsymbol{x}}
\newcommand{\by}{\boldsymbol{y}}
\newcommand{\bz}{\boldsymbol{z}}
\newcommand{\be}{\boldsymbol{e}}
\newcommand{\br}{\boldsymbol{r}}
\newcommand{\bu}{\boldsymbol{u}}
\newcommand{\bv}{\boldsymbol{v}}
\newcommand{\ba}{\boldsymbol{a}}
\newcommand{\bb}{\boldsymbol{b}}
\newcommand{\bd}{\boldsymbol{d}}
\newcommand{\bm}{\boldsymbol{m}}
\newcommand{\Rq}{\mathbb{R}^q}
\newcommand{\Rp}{\mathbb{R}^p}
\newcommand{\Rpd}{\mathbb{R}^{p\times d}}
\newcommand{\Rdd}{\mathbb{R}^{d\times d}}
\newcommand{\Rnd}{\mathbb{R}^{n\times d}}
\newcommand{\Rdn}{\mathbb{R}^{d\times n}}
\newcommand{\Rpn}{\mathbb{R}^{p\times n}}
\newcommand{\Rnn}{\mathbb{R}^{n\times n}}
\newcommand{\twonorm}[1]{\left\lVert #1 \right\rVert_{2}}
\newcommand{\onenorm}[1]{\left\lVert #1 \right\rVert_{1}}
\newcommand{\abs}[1]{\left\lvert #1 \right\rvert}
\newcommand{\nuclearnorm}[1]{\left\lVert #1 \right\rVert_{*}}
\newcommand{\fronorm}[1]{\left\lVert #1 \right\rVert_{F}}
\newcommand{\spenorm}[1]{\left\lVert #1 \right\rVert}
\newcommand{\infnorm}[1]{\left\lVert #1 \right\rVert_{\infty}}
\newcommand{\fractwo}[1]{\frac{#1}{2}}
\newcommand{\tl}{\tilde{\ell}}
\renewcommand{\th}{\tilde{h}}
\newcommand{\EXP}{\mathbb{E}}
\newcommand{\trans}{^\top}
\newcommand{\const}{\mathrm{C}}
\newcommand{\bzero}{\boldsymbol{0}}
\title{Efficient Online Minimization for Low-Rank Subspace Clustering}
\author{
{\bf Jie Shen}\\
Dept. of Computer Science\\
School of Arts and Sciences\\
Rutgers University\\
Piscataway, NJ 08854, USA\\
\texttt{js2007@rutgers.edu}
\and
{\bf Ping Li}\\
Dept. of Statistics \& Biostatistics\\
Department of Computer Science\\
Rutgers University\\
Piscataway, NJ 08854, USA\\
\texttt{pingli@stat.rutgers.edu}
\and
{\bf Huan Xu}\\
Industrial \& Systems Engineering\\
Georgia Institute of Technology\\
Atlanta, GA 30332, USA\\
\texttt{huan.xu@isye.gatech.edu}
}
\date{}
\begin{document}
\maketitle

\begin{abstract}
Low-rank representation~(LRR) has been a significant method for segmenting data that are generated from a union of subspaces. It is, however, known that solving the LRR program is challenging in terms of time complexity and memory footprint, in that the size of the nuclear norm regularized matrix is $n$-by-$n$ (where $n$ is the number of samples). In this paper, we thereby develop a fast online implementation of LRR that reduces the memory cost from $O(n^2)$ to $O(pd)$, with $p$ being the ambient dimension and $d$ being some estimated rank~($d < p \ll n$). The crux for this end is a non-convex reformulation of the LRR program, which pursues the basis dictionary that generates the (uncorrupted) observations. We build the theoretical guarantee that the sequence of the solutions produced by our algorithm  converges to a stationary point of the empirical and the expected loss function asymptotically. Extensive experiments on synthetic and realistic datasets further substantiate that our algorithm is fast, robust and memory efficient.
\end{abstract}

% REQUIRED
\begin{keywords}
Low-Rank Representation, Subspace Clustering, Online Optimization
\end{keywords}

% REQUIRED
%\begin{AMS}
%65K05, 68T01
%\end{AMS}

\section{Introduction}\label{sec:intro}
In modern scientific computing, data are routinely generated from a union of small subspaces for which, traditional tools such as principal component analysis are not able to identify the group structure. As an alternative, in the past a few years, subspace clustering~\citep{vidal2010tutorial,soltanolkotabi2012geometric} has been extensively studied and has established solid applications in, for example, computer vision~\citep{elhamifar2009sparse} and network topology inference~\citep{eriksson2011high}.  Among many subspace clustering methods which seek a structured representation  to fit the underlying data, two prominent examples are sparse subspace clustering~(SSC)~\citep{elhamifar2009sparse,soltanolkotabi2014robust} and low-rank representation~(LRR)~\citep{liu2013robust}. Both of them utilize the idea of self-expressiveness, i.e., expressing each sample as a linear combination of the remaining. Hence, the coefficients quickly suggest which data points belong to the same group (i.e., subspace). What is of difference is that SSC pursues a sparse solution, i.e., using a small fraction of samples to represent each data point. In this light, not only the clustering structure can be found by SSC, but also the most correlated samples are detected. For LRR, it prefers a (global) low-rank structure. This is motivated by many applications where the union of the subspaces is still of low rank. An appealing property of LRR is that it comes up with the theoretical guarantee of recovering the true low-rank model in addition to clustering the samples.

In this paper, we are interested in the LRR method, which is shown to achieve the state-of-the-art performance on a broad range of real-world problems~\citep{liu2013robust}. Recently, \citet{liu2014recovery} demonstrated that, when equipped with a proper dictionary, LRR can even handle the coherent data~--~a challenging issue in the literature~\citep{candes2009exact,candes2011robust} but is ubiquitous in realistic datasets such as the Netflix movie rating problem.

Formally, the LRR program we investigate here is formulated as follows~\citep{liu2013robust}:
\begin{align}
	\label{eq:lrr}
	\min_{\bX, \bE}\ \fractwo{\lambda_1} \fronorm{\bZ-\bY\bX-\bE}^2 + \nuclearnorm{\bX} + \lambda_2 \onenorm{\bE}.
\end{align}
Here, $\bZ = (\bz_1, \bz_2, \cdots, \bz_n) \in \Rpn$ is the observation matrix with $n$ samples, each of which is a $p$-dimensional column vector. The matrix $\bY \in \Rpn$ is a given dictionary, $\bE$ is some possible sparse corruption and $\lambda_1 > 0$ and $\lambda_2 > 0$ are two tunable parameters. Typically, $\bY$ is chosen as the dataset $\bZ$ itself, hence the idea of self-expressiveness. As $\bX$ is penalized by the nuclear norm which is a convex surrogate of the rank function, the program seeks a low-rank representation among all samples, each of which can be approximated by a linear combination of the atoms in the dictionary $\bY$. Note that a column of $\bX$ is the coefficients associated with an individual sample. When the optimal solution is obtained, one may perform spectral clustering~\citep{ng2002spectral} on $\bX$ since the entries reflect the correlation between the (uncorrupted) data.

To aid intuition and to introduce more background for the LRR program~\eqref{eq:lrr}, we discuss the ability of subspace clustering of LRR in the noiseless case. That is, the data matrix $\bZ$ is generated from a union of (low-rank) subspaces. Since each sample can be represented only by those belonging to the same subspace, one may simply solve the linear system
\begin{align*}
\bZ = \bZ \bX,
\end{align*}
which certainly produces a block diagonal solution of $\bX$ as long as the small subspaces are disjoint. That being said, a careful reader may observe that the above linear system does not exclude the trivial identity matrix $\bX = \bI_n$. In order to alleviate it and to conform the low-rank structure of the data, LRR looks for the one with the lowest rank, i.e.,
\begin{align}\label{eq:lrr noiseless}
\min_{\bX}\ \nuclearnorm{\bX},\ \st\ \bZ = \bZ\bX.
\end{align}
The following lemma, which is due to~\citet{liu2010lrr}, justifies that the above program correctly segments the data.
\begin{lemma}
Suppose that there are $k$ number of small subspaces and without loss of generality, the data matrix is organized as $\bZ = (\bZ_1, \bZ_2, \cdots, \bZ_k)$, where $\bZ_i \in \mathbb{R}^{p \times n_i}$ is the collection of samples coming from the $i$th subspace. Further assume that there are sufficient samples for each subspace such that $n_i > \rank(\bZ_i)$. If the subspaces are mutually disjoint, then there exists an optimal solution to~\eqref{eq:lrr noiseless},
\begin{align*}
\bX^* =
\begin{pmatrix}
\bX^*_1 & 0 & 0 & 0\\
0 & \bX_2^* & 0 & 0\\
0 & 0 & \ddots & 0\\
0 & 0 & 0 & \bX_k^*
\end{pmatrix}
\end{align*}
such that $\rank(\bX_i^*) = \rank(\bZ_i)$ for all $1 \leq i \leq k$.
\end{lemma}

More generally, one may think of the noisy model $\bZ = \bY \bX + \bE + \bG$ for some given dictionary $\bY$, a sparse corruption $\bE$ and a Gaussian noise $\bG$. In this case, program~\eqref{eq:lrr} is a straightforward extension to achieve the robustness. While a large body of work has shown that LRR is able to segment the data (see, e.g.,~\citet{liu2014recovery}), three issues are immediately incurred for LRR in the face of big data:
\begin{enumerate}[label=$(I\arabic*)$]
\item Memory cost of $\bX$. In the LRR formulation~\eqref{eq:lrr}, there is typically no sparsity assumption on $\bX$. Hence, the memory footprint of $\bX$ is proportional to $n^2$ which precludes most of the recently developed nuclear norm solvers~\citep{lin2010augmented,jaggi2010simple,avron2012efficient,hsieh2014nuclear}.\label{is:memX}

\item Computational cost of $\nuclearnorm{\bX}$. Since the size of the nuclear norm regularized matrix $\bX$ is $n \times n$, optimizing such problems can be computationally expensive even when $n$ is moderate, say $n = 1000$~\citep{recht2010guaranteed}.\label{is:compX}

\item Memory cost of $\bY$. As the size of  the dictionary $\bY$ is proportional to sample size $n$, it is prohibitive to store the entire dictionary $\bY$ during optimization when $n$ is large.\label{is:memY}
\end{enumerate}
To remedy these problems, especially the memory bottleneck, one potential way is solving the LRR program in an online manner. That is, we sequentially reveal the samples $\bz_1, \bz_2, \cdots, \bz_n$ and update the components in $\bX$ and $\bE$. Nevertheless, it turns out that the LRR program~\eqref{eq:lrr} is not suitable for online minimization. To see this, we note that each column of $\bX$ is the coefficients of a sample with respect to the {\em entire} dictionary $\bY$, for example, $\bz_1 \approx \bY \bx_1 + \be_1$. This indicates that without further technique, we have to load the entire dictionary $Y$ so as to update $\bx_1$ and $\be_1$. Again, this yields an $O(n)$ memory cost. Hence, for our purpose, we need to tackle a {more serious} challenge:
\begin{enumerate}[label=$(I4)$]
\item Partial realization of $\bY$. We are required to guarantee the optimality of the solution but can only access part of the atoms of $\bY$ in each iteration.\label{is:partialY}
\end{enumerate}

\subsection{Contribution} 
In this paper, henceforth, we propose a new algorithm termed online low-rank subspace clustering (OLRSC), which admits a low computational complexity. Compared to existing solvers, OLRSC reduces the memory cost of LRR from $O(n^2)$ to $O(pd)$ where $d$ is an estimated rank of the uncorrupted data~($d < p \ll n$). This nice property makes OLRSC an appealing solution for large-scale subspace clustering problems. Furthermore, we prove that the sequence of the solutions produced by OLRSC converges to a stationary point of the expected loss function asymptotically even though one atom of $\bY$ is available at each iteration. In a nutshell, OLRSC resolves {\em all} practical issues of LRR and still promotes global low-rank structure~--~the merit of LRR. Finally, concerning the robustness of the algorithm, our empirical study suggests accurate recovery of the true subspace even when a large fraction of the entries are corrupted.

%The major techniques consist of matrix factorization and basis dictionary pursuit, which will be presented later on. We demonstrate how matrix factorization sheds light on a framework that unifies LRR and other prevalent problems including robust principal component analysis and dictionary learning.

The present paper considerably extends our earlier conference version~\citep{shen2016online}. In particular, we make extra efforts to remove the assumption that the given dictionary $\bY$ has to be full-rank, which indicates that our algorithm can be applied to a broader range of problems. We also propose refined conditions on parameter scaling under which the algorithm converges. Lastly, we draw additional connections to the literature.

\subsection{Related Work}
Low-rankness has been studied for more than two decades. As one of the earliest work, \citet{fazel2001rank} appealed to a rank minimization program for system identification and signal processing. It was further suggested that the nuclear norm (also known as trace norm) is a good convex surrogate to the rank of a matrix. Such a key observation  was utilized in machine learning for recommender systems~\citep{srebro2004mmmf,srebro2005fast}. Though empirically effective, it was not clear under which conditions such a nuclear norm based program exactly returns the true low-rank solution. In the seminal work of~\citet{candes2011robust}, it was shown that if the singular vectors of the true low-rank matrix does not concentrate around the canonical basis, formally referred to as the incoherence property, then a convex program with proper parameter setting recovers the underlying matrix even a constant fraction of the entries are corrupted. Following~\citet{candes2011robust}, a considerable number of work extends the low-rank model to accommodate outliers~\citep{xu2013outlier}, high-dimensional case~\citep{xu2010principal}, graph clustering~\citep{chen2014clustering}, to name a few. This work considers the variant of multiple subspaces, which is due to~\citet{liu2010lrr}. In particular, in place of drawing insight on the statistical performance, we mainly focus on an efficient and provable algorithm for subspace clustering. There is a plethora of work attempting to mitigate the memory and computational bottleneck of the nuclear norm regularizer. However, to the best of our knowledge, none of them can handle Issue~\ref{is:memY} and Issue~\ref{is:partialY}.

One of the most popular ways to alleviate the huge memory cost is online implementation. \citet{feng2013online} devised an online algorithm for the robust principal component analysis~(RPCA) problem, which makes the memory cost independent of the sample size. Yet, compared to RPCA where the size of the nuclear norm regularized matrix is $p \times n$, that of LRR is $n\times n$~--~a worse and more challenging case. Moreover, their algorithm cannot address the partial dictionary issue that emerges in our case.

To tackle the computational overhead, \citet{jaggi2010simple} utilized a sparse semi-definite programming solver to derive a simple yet efficient algorithm. Unfortunately, the memory requirement of their algorithm is proportional to the number of observed entries, making it impractical when the regularized matrix is large and dense (which is the case of LRR). \citet{avron2012efficient} combined stochastic subgradient and incremental SVD to boost efficiency. But for the LRR problem, the type of the loss function does not meet the requirements and thus, it is still not practical to use that algorithm in our case.

Another line in the literature explores a structured formulation of LRR beyond the low-rankness. For example, \citet{wang2013provable} provably showed that combining LRR and SSC can take advantages of both methods. Whereas \citet{wang2013provable} promotes the sparsity on the representation matrix $\bX$, \citet{shen2016learning} demonstrated how to pursue a sparsity structure on the factors of $\bX$, which is known to be more flexible~\citep{haeffele2014structured}. Due to the non-convexity of matrix factorization, a large body of work is dedicated to characterizing the conditions under which gradient descent ensures global optimum~\citep{jain2013low,lee2016gradient,ge2016matrix}.

Subsequent to the conference version, the authors made use of similar idea to present a structured low-rank representation framework~\citep{shen2016learning}. Inspired by the unified framework, \citet{zhao2017unified} presents a non-asymptotic algorithm to solve the subspace clustering problem.

\subsection{Overview}
In Section~\ref{sec:setup}, we reformulate the LRR program~\eqref{eq:lrr} so that it is amenable to online optimization. The detailed algorithm is presented in Section~\ref{sec:alg}, along with a theoretical justification on its convergence behavior.  We illustrate the efficiency and efficacy of the proposed algorithm by performing extensive experiments in Section~\ref{sec:exp}. Finally, we conclude the paper in Section~\ref{sec:conclusion}. All the proofs can be found in the appendix.

\subsection{Notation}
We use bold lowercase letters, {e.g.,} $\bv$, to denote column vectors. The $\ell_2$ norm and $\ell_1$ norm of a vector $\bv$ are denoted by $\twonorm{\bv}$ and $\onenorm{\bv}$ respectively. Capital letters such as $\bM$ are used to denote a matrix, and its transpose is denoted by $\bM\trans$. For an invertible matrix $\bM$, we write its inverse as $\bM^{-1}$. The capital letter $\bI$ is reserved for the identity matrix. When necessary, we write $\bI_r$ to emphasize its size. The $j$th column of a matrix $\bM$ is denoted by $\bm_j$, and we write $m_{ij}$ for the $(i, j)$-th component of $\bM$. Three matrix norms  will be used: $\nuclearnorm{\bM}$ for the nuclear norm which is the sum of the singular values of $\bM$, $\fronorm{\bM}$ for the Frobenius norm and {$\onenorm{\bM}$ for the $\ell_1$ norm of a matrix seen as a long vector.} The trace of a square matrix $\bM$ is denoted by $\tr(\bM)$. 

For an integer $n$, we use $[n]$ to denote the integer set $\{1, 2, \cdots, n\}$. We follow the standard asymptotic notation. That is, for two functions $f(n)$ and $g(n)$, we use $f(n) = O(g(n))$ to denote that there exists an absolute constant $\const$ such that $f(n) \leq \const \cdot g(n)$ as $n$ goes to infinity. We also write $f(n) = \Omega(g(n))$ if $f(n) \geq \const \cdot g(n)$. When $\const_1 \cdot g(n) \leq f(n) \leq \const_2 \cdot g(n)$, we write $f(n) = \Theta(g(n))$.

\section{Problem Formulation and Algorithm}\label{sec:setup}

Recall the LRR program given in~\eqref{eq:lrr}. Our goal is to efficiently learn the representation matrix $\bX$ and the corruption matrix $\bE$ in an online manner so as to mitigate the issues mentioned in Section~\ref{sec:intro}. To this end, we reformulate it as an empirical risk minimization problem which is amenable for online optimization. 

The first technique for our purpose is a {\em non-convex reformulation} of the nuclear norm. Assume that the rank of $\bX$ is at most $d$. Then~\citet{fazel2001rank} showed that,
\begin{equation}
\label{eq:nuclear reform}
\nuclearnorm{\bX} = \min_{\bU, \bV, \bX=\bU\trans \bV} \fractwo{1} \( \fronorm{\bU}^2 + \fronorm{\bV}^2 \),
\end{equation}
where $\bU \in \Rdn$ and $\bV \in \Rdn$. The minimum can be attained at, for example, $\bU\trans = \bU_0 S_0^{1/2}$ and $\bV\trans = \bV_0 S_0^{1/2}$ where $\bX = \bU_0 S_0 \bV_0\trans$ is the singular value decomposition. In this way, \eqref{eq:lrr} can be written as follows:
\begin{align}\label{eq:nuclear imme}
\min_{\bU, \bV, \bE} \fractwo{\lambda_1}\fronorm{\bZ-\bY\bU\trans\bV -\bE}^2 + \fractwo{1} \fronorm{\bU}^2 + \fractwo{1}\fronorm{\bV}^2  + \lambda_2 \onenorm{\bE}.
\end{align}
Note that by this reformulation, updating the entries in $\bX$ amounts to sequentially updating the columns of $\bU$ and $\bV$, as shown below:
\begin{align*}
\underbrace{ \begin{pmatrix}
x_{11} & x_{12} & \cdots & x_{1n} \\
x_{21} & x_{22} & \cdots & x_{2n}\\
\vdots & \vdots & \ddots & \vdots\\
x_{n1} & x_{n2} & \cdots & x_{nn}
\end{pmatrix}}_{\bX \in \Rnn}
= \underbrace{\begin{pmatrix}
\bu_1\trans\\
\bu_2\trans\\
\vdots\\
\bu_n\trans
\end{pmatrix}}_{\bU\trans \in \Rnd}
\times
\underbrace{\begin{pmatrix}
\bv_1 & \bv_2 & \cdots & \bv_n
\end{pmatrix}}_{\bV \in \Rdn}.
\end{align*}
For instance, when we have $\{ \bu_i \}_{i=1}^t$ and $\{ \bv_j \}_{j=1}^t$ on hand, we are able to recover the components $\{ x_{ij} \}_{1\leq i, j \leq t}$ since each $x_{ij}$ equals $\bu_i\trans\bv_j$. It is worth mentioning that this technique is utilized in~\citet{feng2013online} for online RPCA.  Unfortunately, the size of $\bU$ and $\bV$ in our problem are both proportional to the sample size $n$ and the dictionary $\bY$ is partially observed in each iteration, making the algorithm in~\citet{feng2013online} not applicable to LRR. Related to the online implementation, another challenge is that, all the columns of $\bU$ are coupled together at this moment as $\bU\trans$ is left multiplied by $\bY$ in the first term of~\eqref{eq:nuclear imme}. Since we do not want to load the entire dictionary $\bY$, this makes it difficult to sequentially compute the columns of $\bU$.

For the sake of decoupling the columns of $\bU$, as part of the crux of our techniques, we introduce an {auxiliary variable} $\bD = \bY\bU\trans$, whose size is $p \times d$ ({i.e.}, independent of the sample size $n$). Interestingly, in this way, we are approximating  the term $\bZ - \bE$  with $\bD\bV$, which provides an intuition on the role of $\bD$: namely, $\bD$ can be seen as a {\em basis dictionary} of the clean data, with $\bV$ being the coefficients.\footnote{The matrix $\bY$ is referred to as atom dictionary, and $\bD$ is referred to as basis dictionary.}

These key observations allow us to derive an equivalent reformulation to LRR~\eqref{eq:lrr}:
\begin{align*}
\min_{\bD, \bU, \bV, \bE}\ \fractwo{\lambda_1}\fronorm{\bZ-\bD\bV -\bE}^2 + \fractwo{1} \( \fronorm{\bU}^2 + \fronorm{\bV}^2 \)  + \lambda_2 \onenorm{\bE},\quad \st\ \bD = \bY\bU\trans.
\end{align*}
By penalizing the constraint in the objective, we obtain a {\em regularized} version of LRR on which our new algorithm is based:%{\color{red}Xu: [Why call it "stable version", how about regularized version?]}
\begin{align}
\label{eq:reg batch}
\min_{\bD, \bU, \bV, \bE}&\ \fractwo{\lambda_1} \fronorm{ \bZ - \bD\bV -\bE }^2 + \fractwo{1}\( \fronorm{\bU}^2 + \fronorm{\bV}^2 \)  + \lambda_2 \onenorm{\bE} + \fractwo{\lambda_3} \fronorm{\bD - \bY\bU\trans}^2.
\end{align}
We note that there are two advantages of \eqref{eq:reg batch} compared to~\eqref{eq:lrr}. First, it is amenable for online optimization. Second, it is more informative since it explicitly models the basis of the union of subspaces, which yields a better subspace recovery and clustering to be shown in Section~\ref{sec:exp}. %This actually meets the core idea of~\cite{liu2014recovery} but we do not have to assume that $Y$ contains true subspaces.

We also point out that due to our explicit modeling of the basis, we unify  LRR and RPCA as follows: for LRR, $\bD \approx \bY\bU\trans$ (or $\bD = \bY\bU\trans$ if $\lambda_3$ tends to infinity) while for RPCA, $\bD = \bU\trans$. That is, ORPCA~\citep{feng2013online} considers a problem of $\bY = \bI_p$ whose size is independent of $n$, hence can be kept in memory which naturally resolves Issue~\ref{is:memY} and~\ref{is:partialY}. This is why RPCA can be easily implemented in an online fashion while LRR cannot.

%More generally, LRR~\eqref{eq:lrr} can be thought of as a coding algorithm, with the dictionary $\bY$ known in advance and $\bX$ is a desired structured code. Other popular algorithms such as dictionary learning~(DL)~\cite{mairal2010online} simultaneously optimize the dictionary and the sparse code. Interestingly, in view of~\eqref{eq:reg batch}, the link of LRR and DL becomes more clear in the sense that the difference lies in the way how the dictionary is constrained. That is, for LRR we have $\bD \approx \bY\bU\trans$ and $\bU$ is further regularized by the Frobenius norm whereas for DL, we have $\twonorm{\bd_i} \leq 1$ for each column of $D$.

Now we return to the online implementation of LRR. The main idea is optimizing a surrogate of the empirical risk function (to be defined) in each iteration, which is fast and memory efficient. Let $\bz_i$, $\by_i$, $\be_i$, $\bu_i$, and $\bv_i$ be the $i$th column of matrices $\bZ$, $\bY$, $\bE$, $\bU$ and $\bV$ respectively and let
\begin{align}
\tl(\bD, \bv, \be; \bz) &\defeq \fractwo{\lambda_1} \twonorm{ \bz - \bD \bv - \be }^2 + \fractwo{1}\twonorm{\bv}^2 + \lambda_2 \onenorm{\be}, \notag\\
\label{eq:ell}
\ell(\bD; \bz) &\defeq \min_{\bv, \be} \tl(\bD, \bv, \be; \bz).
\end{align}
Further, we define
\begin{align}
\th(\bD, \bU; \bY) &\defeq \sum_{i=1}^n \fractwo{1} \twonorm{\bu_i}^2 + \fractwo{\lambda_3} \fronorm{\bD - \sum_{i=1}^{n} \by_i \bu_i\trans}^2,\notag\\
\label{eq:h}
h(\bD; \bY) &\defeq \min_{\bU} \th(\bD, \bU; \bY).
\end{align}
Then \eqref{eq:reg batch} can be rewritten as follows:
\begin{align}\label{eq:tmp}
\min_{\bD} \min_{\bU, \bV, \bE}\ \sum_{i=1}^{n} \tl(\bD, \bv_i, \be_i; \bz_i) + \th(\bD, \bU; \bY),
\end{align}
which amounts to minimizing the {\em empirical loss} function:
\begin{align}
\label{eq:f_n(D)}
\min_{\bD} f_n(\bD) \defeq \frac{1}{n}\sum_{i=1}^{n} \ell(\bD; \bz_i) + \frac{1}{n} h(\bD; \bY).
\end{align}

\subsection{Expected Loss}
{In stochastic approximation, we are also interested in analyzing the optimality of the obtained solution with respect to the expected loss function~\citep{bottou2007tradeoffs}. To this end, we first derive the optimal solutions $\tilde{\bU}$, $\tilde{\bV}$ and $\tilde{\bE}$ that minimize~\eqref{eq:tmp} which renders a concrete form of the empirical loss function $f_n(\bD)$.

Given $\bD$, we need to compute the optimal solutions $\tilde{\bU}$, $\tilde{\bV}$ and $\tilde{\bE}$ to evaluate the objective value of $f_n(\bD)$. What is of interest here is that, the optimization procedure of $\bU$ is quite different from that of $\bV$ and $\bE$. According to \eqref{eq:ell}, when $\bD$ is given, each $\tilde{\bv}_i$ and $\tilde{\be}_i$ can be solved by only accessing the $i$th sample $\bz_i$. However, the optimal $\tilde{\bu}_i$ depends on the whole dictionary $\bY$ as the second term in $\th(\bY, \bD, \bU)$ couples all the $\bu_i$'s. Fortunately, we can obtain a closed form solution for each $\tilde{\bu}_i$, as stated below.
\begin{proposition}\label{prop:opt U}
Suppose that $\bY$ and $\bD$ are fixed. Then the optimal solution $\tilde{\bU} = (\tilde{\bu}_1, \tilde{\bu}_2, \dots, \tilde{\bu}_n)$ that minimizes $\th(\bD, \bU; \bY)$ and Eq.~\eqref{eq:reg batch} is given by
\begin{align}\label{eq:actual u_t}
\tilde{\bu}_i = \bD\trans \(\frac{1}{\lambda_3 }\bI_p + \bY\bY\trans \)^{-1} \by_i,\ \forall\ i \in [n].
\end{align}
Hence,
\begin{align}
h(\bD; \bY) = \frac{1}{2} \fronorm{\bD\trans \(\frac{1}{\lambda_3}\bI_p + \bY\bY\trans\)^{-1} \bY}^2 + \frac{1}{2\lambda_3} \fronorm{  \( \frac{1}{\lambda_3} \bI_p + \bY\bY\trans \)^{-1}\bD }^2.
\end{align}
\end{proposition}
The result follows by noting the first order optimality condition and some basic algebra. The proof can be found in  Appendix~\ref{supp:sec:opt u}.

Let $\sigma_i(\bY)$ be the singular values of the matrix $\bY \in \Rpn$ where $1 \leq i \leq p$. By calculation, it is not hard to see that
\begin{align*}
&\ \sigma_{\max}\( \(\frac{1}{\lambda_3}\bI_p + \bY\bY\trans\)^{-1} \) = \frac{1}{(\sigma_p(\bY))^2 + \lambda_3^{-1}} \leq \lambda_3,\\
&\ \sigma_{\max} \(\(\frac{1}{\lambda_3}\bI_p + \bY\bY\trans\)^{-1} \bY\) = \frac{\sigma_i(\bY)}{( \sigma_i(\bY) )^2 + \lambda_3^{-1}} \leq \frac{\sqrt{\lambda_3}}{2}\ \text{for\ some}\ i.
\end{align*}
Thereby, we obtain the upper bound
\begin{align*}
h(\bD; \bY) \leq   \frac{\lambda_3}{8} \fronorm{\bD}^2 + \frac{\lambda_3}{2} \fronorm{\bD}^2 \leq \frac{5\lambda_3}{8} \fronorm{\bD}^2.
\end{align*}
Suppose that $\bD$ is fixed. Also notice that the size of $\bD$ does not grow with $n$. It follows that
\begin{align*}
\lim_{n \rightarrow \infty}\ \frac{1}{n} h(\bD; \bY) = 0,
\end{align*}
if $\lambda_3$ is such that $\lim_{n\rightarrow \infty} \lambda_3/n = 0$. Hence, asymptotically, minimizing the empirical loss $f_n(\bD)$ amounts to optimizing $n^{-1} \sum_{i=1}^{n} \ell(\bD; \bz)$. Note that when $\lambda_1$ and $\lambda_2$ are independent of $n$, we quickly get
\begin{align}\label{eq:E}
\lim_{n \rightarrow \infty} \frac{1}{n} \sum_{i=1}^{n} \ell(\bD; \bz_i) = \EXP_{\bz} [\ell(\bD; \bz)],
\end{align}
assuming that all the samples are drawn i.i.d. from some unknown distribution. This gives the {\em expected loss} function
\begin{align}
\label{eq:f(D)}
f(\bD) \defeq  \EXP_{\bz}[\ell(\bz, \bD) ]= \lim_{n \rightarrow \infty} f_n(\bD) .
\end{align}

%Note that if $\lambda_3 = \Omega(n)$, it is not clear whether the limit of $f_n(\bD)$ exists (and the expected loss is not well-defined).

\subsection{Algorithm}\label{sec:alg}
We are now in the position to elaborate an online optimization algorithm for low-rank subspace clustering. Namely, we show how to solve~\eqref{eq:f_n(D)} in an efficient manner. From a high level, we alternatively minimize over the variables $\bD$ and $\{\bu_i, \bv_i, \be_i\}_{i=1}^n$. That is, we begin with an initial (and inaccurate) guess of the variable $\bD$, and solve the subproblems~\eqref{eq:ell} and~\eqref{eq:h}. Using the obtained solution, we construct a surrogate function of the empirical loss~\eqref{eq:f_n(D)}, which is further optimized to refine our initial guess of $\bD$. Such a paradigm was utilized in~\citet{mairal2010online,mairal2013stochastic,shen2014online} for different problems. It turns out that the case studied here is more challenging due to the component $h(\bD; \bY)$ of the empirical loss function. To optimize it in an online manner and to analyze the performance, we develop novel techniques that will be described in the following.

\subsubsection{Optimize $\bv$ and $\be$}
For exposition, we first investigate the sum of $\ell(\bD; \bz_i)$ where $1 \leq i \leq n$ in~\eqref{eq:f_n(D)}. This is where the variables $\bv_i$ and $\be_i$ are involved. Suppose that at the $t$-th iteration, a fresh sample $\bz_t$ is drawn and we have an initial guess of $\bD$, say $\bD_{t-1}$. Then it is easy to see that the optimal solution $\{\bv_t^*, \be_t^*\}$ that minimizes $\tl(\bD_{t-1}, \bv, \be; \bz_t)$ is given by
\begin{align*}
\{\bv_t^*, \be_t^*\} = \argmin_{\bv, \be}\ \fractwo{\lambda_1} \twonorm{ \bz_t - \bD_{t-1} \bv - \be }^2 + \fractwo{1}\twonorm{\bv}^2 + \lambda_2 \onenorm{\be}.
\end{align*}
Since the objective function is jointly strongly convex over the variables $\{\bv, \be\}$, one may apply coordinate descent to obtain the optimum~\citep{bertsekas1999nonlinear}. In particular, we observe that if $\be$ is fixed, we can optimize $\bv$ in closed form:
\begin{align}\label{eq:v}
\bv = (\bD_{t-1}\trans \bD_{t-1} + \bI_d / \lambda_1)^{-1}\bD_{t-1}\trans (\bz_t - \be).
\end{align}
Conversely, given $\bv$, the variable $\be$ is obtained via soft-thresholding~\citep{donoho1995denoising}:
\begin{align}\label{eq:e}
\be = \mathcal{S}_{\lambda_2 / \lambda_1}[\bz_t - \bD_{t-1}\bv].
\end{align}
See Algorithm~\ref{alg:ve} in Appendix~\ref{supp:sec:alg} for details. It follows that $\tl(\bD_{t-1}, \bv_t^*, \be_t^*; \bz_t)$ is a surrogate to $\ell(\bD; \bz_t)$, since $\bD_{t-1}$ is our guess of the optimal $\bD$. 

\subsubsection{Optimize $\bu$}
We move on to describe how to optimize the second component $h(\bD; \bY)$, which contains the variables $\{\bu_i\}_{i \geq 1}$. Again, we will assume that an initial guess of $\bD$ is available, so we only need to minimize a surrogate. However, looking at the solution~\eqref{eq:actual u_t}, we notice that even $\bD$ is given (e.g., some initial guess $\bD_{t-1}$), we have to load the {\em entire} dictionary $\bY \in \Rpn$ so as to obtain the local optimum $\tilde{\bu}_t$, and hence a surrogate of $h(\bD; \bY)$. This is quite different from the scheme of optimizing $\bv$ and $\be$, where optimum only depends on the new sample. Indeed, though the framework of our algorithm follows from~\citet{mairal2010online,feng2013online,shen2014online}, none of them considers the situation as we stated here (the optimum in their work can be computed directly when a new sample arrives).

In order to remedy the issue, our novelty here is constructing a proxy function by minimizing which, we are generating a solution that gradually approximates to the one of $\th(\bD, \bU; \bY)$. The proxy function is given as follows
\begin{align}
\label{eq:tl_2}
\tl_2(\bD, \bu; \bM, \by)  \defeq \fractwo{1} \twonorm{\bu}^2 + \fractwo{\lambda_3} \fronorm{\bD - \bM - \by \bu\trans}^2.
\end{align}
Suppose that we store the accumulation matrix
\begin{align}\label{eq:M_t}
\bM_{t-1} \defeq \sum_{i=1}^{t-1} \by_i (\bu_i^*)\trans \in \Rpd,
\end{align}
where we define $M_0$ as a zero matrix. At the $t$-th iteration, when a new atom $\by_t$ is revealed, we calculate $\bu_t^*$ as the minimizer of $\tl_2(\bD_{t-1}, \bu; \bM_{t-1}, \by_t)$, for which we have the closed form solution
\begin{align}\label{eq:u_t}
\bu_t^* = (\twonorm{\by_t}^2 + {1}/{\lambda_3})^{-1} (\bD_{t-1} - \bM_{t-1})\trans \by_t.
\end{align}
The solution in~\eqref{eq:u_t} differs from that of~\eqref{eq:actual u_t} a lot. Though \eqref{eq:u_t} is not an accurate solution that minimizes $\th(\bD, \bU; \bY)$, the computation is efficient. More importantly, $\bu_t^*$ only depends on $\by_t$ rather than the entire atom dictionary. We will also show that \eqref{eq:u_t} suffices to guarantee the convergence of the algorithm to a stationary point of the expected loss function.

To gain intuition on the connection between $\tl_2(\bD, \bu; \bM, \by)$ and $\th(\bD, \bU; \bY)$, assume we have $n$ atoms in total (i.e., $\bY$ has $n$ columns). It turns out that for the function $\th(\bD, \bU; \bY)$, $\bU$ can be optimized using block coordinate minimization~(BCM), with each column of $\bU$ being a block variable. Initially, we may set all these columns to a zero vector. As BCM proceeds, we update $\bu_t$ while keeping the other $\bu_i$'s for $i \neq t$. Note that in this way, optimizing $\th(\bD, \bU; \bY)$ over $\bu_t$ amounts to minimizing the function $\tl_2(\bD, \bu; \bM_{t-1}, \by_t)$ where $M_{t-1}$ is defined in~\eqref{eq:M_t}. So after revealing all the atoms, each $\bu_t$ is sequentially updated only once. Henceforth, our strategy can be seen as a one-pass BCM algorithm for the function $\th(\bD, \bU; \bY)$.
%As we discussed in Section~\ref{sec:setup}, the optimal values of variables $\bv_t$ and $\be_t$ can be ``accurately'' solved by only accessing the $t$-th sample $\bz_t$ if $\bD_{t-1}$ is given. However, this does not hold for the variable $\bu_t$. In fact, even though $\bD_{t-1}$ is given, the optimal $\bu_t$ depends on the entire dictionary $\bY$, see \eqref{eq:actual u_t}. In online optimization, we only store the current atom $\by_t$. Thus, in order to obtain desirable estimation for $\bu_t$, we have to ``approximately'' solve $\bu_t$\footnote{Note that ``accurately'' and ``approximately'' here mean that when only $\bD_{t-1}$, $\bz_t$ and $\by_t$ are given, whether we can obtain the same solution $\{\bv_t, \be_t, \bu_t\}$ as for the batch problem~\eqref{eq:f_n(D)}.}.

\begin{algorithm}[t]
\caption{Online Low-Rank Subspace Clustering}
\label{alg:all}
\begin{algorithmic}[1]
    \REQUIRE $\bZ \in \Rpn$ (observed samples), $\bY \in \Rpn$, parameters $\lambda_1$, $\lambda_2$ and $\lambda_3$, random matrix $\bD_0 \in \Rpd$ (initial basis), zero matrices $\bM_0$, $A_0$ and $B_0$.
    \ENSURE Optimal basis $\bD_n$.
    \FOR{$t=1$ to $n$}
      \STATE Access the $t$-th sample $\bz_t$ and the $t$-th atom $\by_t$.

      \STATE Compute the coefficient and noise:
        \begin{align*}
         & \{\bv_t^*, \be_t^*\} = \argmin_{\bv, \be} \tl(\bD_{t-1}, \bv, \be; \bz_t),\\
          &\bu_t^* = \argmin_{\bu} \tl_2(\bD_{t-1}, \bM_{t-1}, \bu; \by_t).
        \end{align*}

      \STATE Update the accumulation matrices:
          \begin{align*}
          \bM_t &\leftarrow\ \bM_{t-1} + \by_t (\bu_t^*)\trans,\\
           \bA_t &\leftarrow\ A_{t-1} + \bv_t^*(\bv_t^*)\trans,\\
           \bB_t &\leftarrow\ B_{t-1} + (\bz_t - \be_t^*)(\bv_t^*)\trans.
          \end{align*}

      \STATE Update the basis:
      	  \begin{align*}
      	 \bD_t = \argmin_{\bD } \frac{1}{t} \Bigg[ \frac{1}{2}\tr\(\bD\trans \bD(\lambda_1 \bA_t + \lambda_3 \bI_d)\)  - \tr\(\bD\trans (\lambda_1 \bB_t + \lambda_3 \bM_t)\) \Bigg].
      	  \end{align*}
    \ENDFOR
\end{algorithmic}
\end{algorithm}

\subsubsection{Optimize $\bD$}
As soon as $\{ \bv_i^*, \be_i^*, \bu_i^* \}_{i=1}^t$ are available, we can refine the initial guess $\bD_{t-1}$ by optimizing the surrogate function
\begin{align}
\label{eq:g_t(D)}
g_t(\bD) \defeq \frac{1}{t} \Big(\sum_{i=1}^{t} \tl(\bD, \bv_i^*, \be_i^*; \bz_i) + \sum_{i=1}^{t} \fractwo{1} \twonorm{\bu_i^*}^2  + \fractwo{\lambda_3} \fronorm{\bD - \bM_t}^2 \Big).
\end{align}
Let us look at the first term:
\begin{align}\label{eq:gt_tilde}
\tilde{g}_t(\bD) \defeq \frac{1}{t} \sum_{i=1}^{t} \tl(\bD, \bv_i^*, \be_i^*; \bz_i),
\end{align}
which is a surrogate of $\frac{1}{t} \sum_{i=1}^{t} \ell(\bD; \bz_i)$. Though the above objective function involves the past iterates $\{\bv_i^*, \be_i^*\}_{i=1}^t$ whose memory cost is proportional to the sample size, we show that the minimizer of~\eqref{eq:gt_tilde} can be computed by accessing only two accumulation matrices whose sizes are independent of the sample size. To see this, we may expand the function $\tl(\bD, \bv_i^*, \be_i^*; \bz_i)$ as follows:
\begin{align*}
\tl(\bD, \bv_i^*, \be_i^*; \bz_i) =&\ \fractwo{\lambda_1} \twonorm{ \bz_i - \bD \bv_i^* - \be_i^* }^2 + \fractwo{1}\twonorm{\bv_i^*}^2 + \lambda_2 \onenorm{\be_i^*}\\
=&\ \frac{\lambda_1}{2} \tr\(\bD\trans\bD\bv_i^*(\bv_i^*) \trans\) + \lambda_1 \tr\( \bD\trans (\be_i^* - \bz_i) (\bv_i^*)\trans \) \\
&\ + \frac{\lambda_1}{2} \twonorm{\bz_i - \be_i^*}^2 + \fractwo{1}\twonorm{\bv_i^*}^2 + \lambda_2 \onenorm{\be_i^*}.
\end{align*}
Since the variable here is $\bD$, it holds that the minimizer of~\eqref{eq:gt_tilde} is given by the following:
\begin{align}\label{eq:first part}
\min_{\bD}\ \frac{1}{t} \sum_{i=1}^{t} \Bigg[ \frac{\lambda_1}{2} \tr\( \bD\trans\bD \bv_i^*(\bv_i^*) \trans \) + \lambda_1 \tr\( \bD\trans (\be_i^* - \bz_i) (\bv_i^*)\trans \)\Bigg].
\end{align}
As the trace is a linear mapping, solving the above program only needs to record two accumulation matrices
\begin{align}\label{eq:AtBt}
\bA_t \defeq \sum_{i=1}^{t} \bv_i^* (\bv_i^*)\trans \in \Rdd,\quad \bB_t \defeq \sum_{i=1}^{t} (\bz_i - \be_i^*) (\bv_i^*)\trans \in \Rpd.
\end{align}
Using the result of~\eqref{eq:first part} and doing some calculation, we derive $\bD_t$ as follows:
\begin{align}\label{eq:D_t}
\bD_t =&\ \argmin_{\bD } \frac{1}{t} \Big[ \frac{1}{2}\tr\(\bD\trans \bD(\lambda_1 \bA_t + \lambda_3 \bI_d)\)  - \tr\(\bD\trans (\lambda_1 \bB_t + \lambda_3 \bM_t)\) \Big]\notag\\
=&\  (\lambda_1 \bB_t + \lambda_3 \bM_t)  (\lambda_1 \bA_t + \lambda_3 \bI_d)^{-1},
\end{align}
where $\bA_t$ and $\bB_t$ are given in~\eqref{eq:AtBt} and $\bM_{t}$ is defined in~\eqref{eq:M_t}. Numerically, we apply coordinate descent to solve the above program owing to its efficiency. See more details in Appendix~\ref{supp:sec:alg}.

\subsubsection{Memory Cost}
It is remarkable that the  memory cost of Algorithm~\ref{alg:all} is $O(pd)$. To see this, note that when solving $\bv_t^*$ and $\be_t^*$, we load $\bD_{t-1}$ and a sample $\bz_t$ into the memory, which costs $O(pd)$. To compute the optimal $\bu_t^*$, we need to access $\bD_{t-1}$ and $\bM_{t-1} \in \Rpd$. Although we aim to minimize \eqref{eq:g_t(D)}, which seems to require all the past information, we actually only need to record $\bA_t$, $\bB_t$ and $\bM_t$, whose sizes are at most $O(pd)$ (recall that $d < p$).

\subsubsection{Time Complexity}
In addition to the memory efficiency,  we further elaborate that the the computation in each iteration is cheap. To compute $\{\bv_t^*, \be_t^*\}$, one may utilize the block coordinate method in~\citet{richtarik2014iteration} which enjoys linear convergence due to strong convexity. One may also apply the stochastic variance reduced algorithms which also ensure a geometric rate of convergence~\citep{xiao2014proximal,saga}. The $\bu_t^*$ is obtained by simple matrix-vector multiplications, which costs $O(pd)$. It is easy to see the complexity of Step~4 is $O(pd)$ and that of Step~5 is $O(pd^2)$.

\subsubsection{A Fully Online Subspace Clustering Scheme}
Now we have provided a way to learn the low-rank representation matrix $\bX$ in an online manner. Usually, researchers in the literature will take an optional post-processing step to refine the segmentation accuracy, for example, applying spectral clustering~\citep{ng2002spectral} on the obtained representation matrix $\bX^*$. In this case, one has to collect all the $\bu_i^*$'s and $\bv_i^*$'s to compute $\bX^* = (\bU^*)\trans \bV^*$ which will again increase the memory cost to $O(n^2)$. Here, we suggest an alternative scheme which admits $O(kd)$ memory usage where $k$ is the number of subspaces. The idea is appealing to the well-known $k$-means clustering in place of spectral clustering. One notable advantage is that updating the $k$-means model can be implemented in an online manner. In fact, the online $k$-means algorithm can be easily integrated into Algorithm~\ref{alg:all} by observing that $\bv_i^*$ is the robust feature for the $i$th sample. Second, we observe that $\bv_i^*$ is actually a robust feature for the $i$th sample. On the other hand, updating the $k$-means model is quite cheap, since the computational cost is $O(kd)$.

\subsubsection{An Alternative Online Implementation}
Our strategy for solving $\bu_t$ is based on a carefully designed proxy function which resolves Issue~\ref{is:partialY} and has a low complexity. Yet, to tackle Issue~\ref{is:partialY}, another potential way is to avoid the variable $\bu_t$\footnote{We'd like to thank the anonymous NIPS 2015 reviewer for pointing out this potential solution to the online algorithm.}. Recall that we derive the optimal solution $\tilde{\bU}$ (provided that $\bD$ is given) to $\th(\bD, \bU; \bY)$ as follows (see Prop~\ref{prop:opt U}):
\begin{align}\notag
	\tilde{\bU} = \bY\trans \( {\lambda_3^{-1}}\bI_p + \bY \bY\trans \)^{-1}\bD.
\end{align}
Plugging it back to $\th(\bD, \bU; \bY)$, we obtain
\begin{align*}
\th(\bD, \bU; \bY) = \frac{1}{2} \tr\( \bD \bD\trans \( Q_n - {\lambda_3}^{-1} Q^2_n \) \)  + \frac{\lambda_3}{2}  \fronorm{\bD - {\lambda_3}^{-1} Q_n\bD}^2,
\end{align*}
where
\begin{align*}
Q_n = \( {\lambda_3}^{-1} \bI_p + \bY \bY\trans \)^{-1}.
\end{align*}
Here, the subscript of $Q_n$ denotes the number of atoms in $\bY$. Note that the size of $Q_n$ is $p\times p$. Hence, if we incrementally compute the accumulation matrix $\bY \bY\trans = \sum_{i=1}^{t} \by_i \by_i\trans$, we can update the variable $D$ in an online fashion. Namely, at $t$-th iteration, we re-define the surrogate function as follows:
\begin{align*}
g_t(D) \defeq&\ \frac{1}{t} \Bigg[ \sum_{i=1}^{t} \tl(\bz_i, \bD, \bv_i, \be_i) + \fractwo{\lambda_3}  \fronorm{\bD -\frac{1}{\lambda_3} Q_t\bD}^2  \\
&\ + \fractwo{1} \tr\( \bD \bD\trans \( Q_t - \frac{1}{\lambda_3}Q^2_t \) \) \Bigg].
\end{align*}
Again, by noting the fact that $\tl(\bz_i, \bD, \bv_i, \be_i)$ only involves recording $\bA_t$ and $\bB_t$, we show that the memory cost is independent of sample size. However, the main shortcoming is that the time complexity of computing the inverse of a $p\times p$ matrix in each iteration is $O(p^3)$ which is not efficient in the high-dimensional regime (the time complexity of ours is proportional to $pd^2$). Either, it is not clear how to analyze the convergence.

\section{Theoretical Analysis}\label{sec:main results}

In this section, we present theoretical evidence that our algorithm guarantees that the sequence $\{ \bD_t \}_{t\geq 1}$ converges to a stationary point of the expected loss~\eqref{eq:f_n(D)} asymptotically. We note that the problem studied here is non-convex, and hence convergence to a stationary point is the best one can hope in general~\citep{bertsekas1999nonlinear}. As we have mentioned in related work, there are elegant results showing that convergence to global optimum is possible for batch alternating minimization under more stringent conditions on the data. See, for example,~\citet{jain2013low,jain2014nonconvexrpca,ge2016matrix}.

We make two assumptions throughout our analysis.
\begin{enumerate}[label=$(A\arabic*)$]
\item The observed data are generated i.i.d.\ from some (unknown) distribution and there exist constants $\alpha_0$ and $\alpha_1$, such that the conditions $0 < \alpha_0 \leq \twonorm{\bz_t} \leq \alpha_1$ and $ \alpha_0 \leq \twonorm{\by_t} \leq \alpha_1$ hold almost surely for all $t \geq 1$.\label{as:z}

\item The smallest singular value of $\frac{1}{t} \bA_t$ is bounded away from zero almost surely.\label{as:g_t(D)}
\end{enumerate}
Note that the first assumption is very mild. In fact, in many applications the data points are normalized to unit length, and in this case $\alpha_0 = \alpha_1 = 1$. To understand the second assumption, recall that $\bD_t$ is computed in~\eqref{eq:D_t}. If $\lambda_3 = o(t)$, then asymptotically the solution $\bD_t$ is not unique. Hence, Assumption~\ref{as:g_t(D)} ensures that we always have a unique solution that minimizes the surrogate function $g_t(\bD)$. Geometrically, as we have noted that $\bv_i$ is the coefficient of $\bz_i$ in terms of the basis dictionary $\bD$, the assumption simply requires that the data points are in general positions.

We also need the following parameter settings.
\begin{enumerate}[label=$(P\arabic*)$]
\item $\lambda_1$ and $\lambda_2$ are independent of the sample size $n$.\label{as:lambda12}
\item $\lim_{n \rightarrow \infty} \lambda_3/n = 0$.\label{as:lambda3}
\end{enumerate}
As we have shown in Section~\ref{sec:setup}, these parameter scalings give the expected loss function as in~\eqref{eq:f(D)}. We note that \ref{as:lambda12} is also required in previous work~\citep{mairal2010online,feng2013online,shen2014online}, though not set out explicitly. \ref{as:lambda3} is specific to our problem. It facilitates the characterization of the empirical loss function when $n$ tends to infinity. Otherwise, even the convergence of the sequence $\{ f_n(\bD) \}_{n\geq 1}$ is not clear (consider, e.g., $\lambda_3 = n \sin n$).

We present a fundamental result that will be heavily invoked in the subsequent analysis.
\begin{proposition}
\label{prop:bound:veABMDu}
Let $\{\bu_t^*\}_{t\geq 1}$, $\{\bv_t^*\}_{t\geq 1}$, $\{\be_t^*\}_{t\geq 1}$ and $\{\bD_t\}_{t\geq 1}$ be the sequence of the optimal solutions produced by Algorithm~\ref{alg:all}. Assume~\ref{as:z} and~\ref{as:g_t(D)}. Further suppose that $\lambda_2$ does not grow with $t$. Then, for all $t \geq 1$,

\begin{enumerate}
\item $\bv_t^*$, $\be_t^*$, $\frac{1}{t}\bA_t$ and $\frac{1}{t}\bB_t$  are uniformly bounded from above;

\item $\bM_t$ is uniformly upper bounded;

\item $\bD_t$ is supported on some compact set $\mathcal{D}$;

\item $\bu_t^*$ is uniformly upper bounded.
\end{enumerate}
\end{proposition}
Note that we prove the result by assuming a weaker condition than~\ref{as:lambda12}: the parameter $\lambda_2$ does not grow with the sample size $t$. While it is not surprising to see that $\bu_t^*, \bv_t^*, \be_t^*$ and $\bD_t$ are bounded from above due to the regularization, it is interesting to note that $\bM_t = \sum_{i=1}^{t} \by_i (\bu_i^*)\trans$ is also upper bounded. Intuitively, this holds since the last term in~\eqref{eq:reg batch} imposes that $\bM_t$ cannot deviate far from $\bD_t$. However, the technical challenge is that $\bD_t$ itself depends on $\bM_t$, as shown in~\eqref{eq:D_t}, and neither $\bD_t$ nor $\bM_t$ is bounded without the boundedness of the other. To remedy this, we propose a novel technique which conducts mathematical induction simultaneously on $\bD_t$ and $\bM_t$. This tremendously simplifies the proof of our earlier version~\citep{shen2016online}. See Appendix~\ref{app:proof:prop:bound} for the proof.

The proposition has many implications. For example, the uniform boundedness of the solutions immediately implies that the surrogate function $g_t(\bD)$ and the empirical loss function $f_t(\bD)$ are bounded from above for all $\bD \in \mathcal{D}$, which is necessary for the convergence.

Next, we prove that the sequence of $\{ g_t(\bD_t) \}_{t \geq 1}$ converges almost surely. We will make use of the following lemma, which is due to~\citet{bottou1998online}.
\begin{lemma}\label{lem:bottou}
Let $\( \Omega, \mathcal{F}, P \)$ be a measurable probability space, $\psi_t$, for $t \geq 1$, be the realization of a stochastic process and $\mathcal{F}_t$ be the filtration by the past information at time $t$. Let
\begin{align*}
\delta_t =
\begin{cases}
1\quad \text{if}\ \EXP[\psi_{t+1}-\psi_t \mid \mathcal{F}_t] > 0,\\
0\quad \text{otherwise}.
\end{cases}
\end{align*}
If for all $t$, $\psi_t \geq 0 $ and $\sum_{t=1}^{\infty} \EXP[\delta_t(\psi_{t+1}-\psi_t)] < \infty$, then $\psi_t$ is a quasi-martingale and converges almost surely. Moreover, it holds almost surely that
\begin{align*}
\sum_{t=1}^{\infty} \abs{ \EXP[\psi_{t+1} - \psi_t \mid \mathcal{F}_t] } < +\infty.
\end{align*}
\end{lemma}
Based on the lemma, we show the following result.
\begin{theorem}\label{thm:convergence g_t(D_t)}
Assume \ref{as:z} and~\ref{as:g_t(D)}. Set the parameters $\lambda_1$, $\lambda_2$ and $\lambda_3$ such that they satisfy~\ref{as:lambda12} and~\ref{as:lambda3}. Then the sequence of $\{g_t(\bD_t)\}_{t\geq 1}$ converges almost surely, where $\{\bD_t\}_{t\geq 1}$ is the solution produced by Algorithm~\ref{alg:all}. Moreover, the following holds almost surely:
\begin{equation}\label{eq:pf:sum diff u_t}
\sum_{t=1}^{\infty} \abs{ \EXP\big[\psi_{t+1} - \psi_t \mid \mathcal{F}_t\big] } < +\infty,
\end{equation}
where $\mathcal{F}_t = \{\bz_i, \by_i\}_{i=1}^t$.
\end{theorem}
\begin{proof}
(Sketch) We will view $\{ g_t(\bD_t) \}_{t\geq 1}$ as a non-negative stochastic process, with $\mathcal{F}_t = \{ \bz_i, \by_i \}_{i=1}^t$ be the filtration of the past information. In order to apply Lemma~\ref{lem:bottou}, we write $\psi_t := g_t(\bD_t)$ and compute the variation between two consecutive iterations:
\begin{align*}
 \psi_{t+1} - \psi_t =&\ \underbrace{g_{t+1}(\bD_{t+1}) - g_{t+1}(\bD_t)}_{\zeta_1} + \underbrace{ \frac{\tilde{f}_t(\bD_t) - \tilde{g}_t(\bD_t)}{t+1} }_{\zeta_2} + \underbrace{ \frac{\ell(\bz_{t+1}, \bD_t) - \tilde{f}_t(\bD_t)}{t+1} }_{\zeta_3} \\
&\ + \Bigg[ \frac{1}{t+1} \sum_{i=1}^{t+1} \fractwo{1} \twonorm{ \bu_i^* }^2 + \frac{\lambda_3}{2(t+1)} \fronorm{\bD_t - \bM_{t+1}}^2 \notag\\
&\ - \frac{1}{t} \sum_{i=1}^{t} \frac{1}{2} \twonorm{ \bu_i^* }^2 - \frac{\lambda_3}{2t} \fronorm{\bD_t - \bM_{t}}^2 \Bigg].
\end{align*}
Here,
\begin{align*}
\tilde{f}_t(\bD_t) = \frac{1}{t} \sum_{i=1}^{t} \ell(\bD_t; \bz_i).
\end{align*}
Since $\bD_{t+1}$ minimizes $g_{t+1}(\bD)$, we have $\zeta_1 \leq 0$. Also, recall that $\tilde{g}_t(\bD)$ is a surrogate of $\tilde{f}_t(\bD)$, implying $\zeta_2 \leq 0$. For $\zeta_3$, the P-Donsker lemma (see Prop.~\ref{prop:l donsker}) gives
\begin{align*}
\EXP\big[ \EXP[ \zeta_3 \mid \mathcal{F}_t ] \big] \leq \frac{\const}{\sqrt{t}(t+1)}
\end{align*}
for some constant $\const$. Finally, we need to upper bound the terms in the brackets which are specific to the online subspace clustering problem. In light of the bound of $\zeta_3$, one may look for a way to bound them with $O(1/t^{3/2})$. Surprisingly, it turns out that our algorithm automatically guarantees that the sum of the terms in the brackets is not greater than zero. This follows by noting the closed-form solution of $\bu_i^*$~\eqref{eq:u_t} and the fact that $\bM_t = \sum_{i=1}^{t} \by_i (\bu_i^*)\trans$. Putting all the pieces together completes the proof.
\end{proof}

We move on to show that the sequence of $\{ f_t(\bD_t) \}_{t\geq 1}$ converges. In particular, we justify that $g_t(\bD)$ acts as a good surrogate of $f_t(\bD)$ in the sense that the sequence of the surrogate converges to the same limit of that of the empirical loss.
\begin{theorem}\label{thm:convergence f_t(D_t)}
Assume \ref{as:z} and~\ref{as:g_t(D)}. Set the parameters $\lambda_1$, $\lambda_2$ and $\lambda_3$ such that they satisfy~\ref{as:lambda12} and~\ref{as:lambda3}. Let $\{\bD_t\}_{t\geq 1}$ be the solution produced by Algorithm~\ref{alg:all}. Then, the sequence of $\{f_t(\bD_t)\}_{t\geq 1}$ converges almost surely to the same limit of $\{g_t(\bD_t)\}_{t\geq 1}$.
\end{theorem}
\begin{corollary}\label{cor:diff f}
Assume sames conditions as in Theorem~\ref{thm:convergence f_t(D_t)}. Then the sequence of $\{f(\bD_t)\}_{t\geq 1}$ converges almost surely to the same limit of $\{f_t(\bD_t)\}_{t\geq 1}$ or equivalently, $\{g_t(\bD_t)\}_{t\geq 1}$.
\end{corollary}

\begin{proof}
(Sketch) We need several tools in the literature to prove this result. From a high level, since we have already shown that $\{ g_t(\bD_t) \}_{t\geq 1}$ converges, we only need to deduce that the limit of $g_t(\bD_t) - f_t(\bD_t)$ is zero. A useful result for this purpose is stated below, which is borrowed from~\citet{mairal2010online}.
\begin{lemma}[Lemma 8 in~\citet{mairal2010online}]\label{lem:mairal}
Let $\{a_t\}_{t\geq 1}$, $\{b_t\}_{t\geq 1}$ be two real sequences such that for all $t$, $a_t \geq 0$, $b_t \geq 0$, $\sum_{t=1}^{\infty}a_t = \infty$, $\sum_{t=1}^{\infty} a_t b_t < \infty$, there exists a scalar $\const > 0$, such that $\abs{ b_{t+1} - b_t } < \const \cdot a_t$. Then, $\lim_{t \rightarrow \infty} b_t = 0$.
\end{lemma}
In particular, we set $a_t = (t+1)^{-1}$ and $b_t = g_t(\bD_t) - f_t(\bD_t) \geq 0$. First, we verify that the sum of the infinite series $\{ a_t b_t \}_{t \geq 1}$ is finite. By algebra, we obtain
\begin{align*}
\frac{b_t}{t+1} \leq \frac{\tilde{g}_t(\bD_t) - \tilde{f}_t(\bD_t)}{t+1} + \frac{q_t(\bD_t)}{t+1},
\end{align*}
where
\begin{align*}
q_t(\bD_t) = \frac{1}{t} \sum_{i=1}^{t} \fractwo{1} \twonorm{\bu_i^*}^2 + \frac{\lambda_3}{2t} \fronorm{\bD_t - \bM_t}^2.
\end{align*}
Then by utilizing the uniform boundedness of $\bD_t$ and $\bM_t$, it is possible to derive
\begin{align*}
 \frac{b_t}{t+1} \leq \frac{ \ell(\bD_t; \bz_{t+1}) - \tilde{f}_t(\bD_t) }{t+1} + \psi_t - \psi_{t+1} + \frac{ \const}{2t(t+1)}
\end{align*}
for some absolute constant $\const$. As we have shown in the proof sketch of Theorem~\ref{thm:convergence g_t(D_t)}, the first term scales as $O(1/t^{3/2})$, which combined with \eqref{eq:pf:sum diff u_t} imply that $\sum_{t=1}^{\infty} \frac{b_t}{t+1}$ is finite.

Next, we claim that
\begin{align*}
\abs{b_{t+1} - b_t} \leq \frac{\const_0}{t+1},
\end{align*}
for some absolute constant $\const_0$. The follows heavily from Prop.~\ref{prop:bound:veABMDu} where we showed that all the variables are uniformly bounded, and hence all involved functions evaluated at these solutions are bounded from above. To be more concrete, using triangle inequality we get
\begin{align*}
\abs{ b_{t+1} - b_t } \leq&\  \abs{ g_{t+1}(\bD_{t+1}) - g_t(\bD_{t+1}) } +\abs{ f_{t+1}(\bD_{t+1}) - f_t(\bD_{t+1}) } \\
&\ +  \abs{ g_t(\bD_{t+1}) - g_t(\bD_t) } + \abs{ f_t(\bD_{t+1}) - f_t(\bD_t) }.
\end{align*}
The first two terms on the right-hand side are upper bounded by $O(1/t)$ due to uniform boundedness. For the last two terms, we utilize the fact that $g_t(\bD)$ and $f_t(\bD)$ are both Lipschitz to show that they are bounded by $O(\fronorm{\bD_{t+1} - \bD_t})$ from above. The following proposition illustrates that the variation of $\bD_{t+1}$ and $\bD_t$ vanishes with the rate $O(1/t)$, whose proof can be found in Appendix~\ref{app:proof:prop:diff_D}.
\begin{proposition}\label{prop:diff_D}
Assume~\ref{as:z} and~\ref{as:g_t(D)}. Further suppose that $\lambda_1$ and $\lambda_2$ do not grow with $t$. Let $\{\bD_t\}_{t\geq 1}$ be the basis sequence produced by Algorithm~\ref{alg:all}. Then,
\begin{equation*}
\fronorm{\bD_{t+1} - \bD_t} = O\(\frac{1}{t}\).
\end{equation*}
\end{proposition}
Thus, in allusion to Lemma~\ref{lem:mairal}, we complete the proof of Theorem~\ref{thm:convergence f_t(D_t)}. The corollary follows immediately because the central limit theorem asserts that $\sqrt{t}(f(\bD_t) - f_t(\bD_t))$ is upper bounded.
\end{proof}

Finally, we show that asymptotically, $\bD_t$ acts as a stationary point of the expected loss function.
\begin{theorem}\label{thm:stationary}
Assume~\ref{as:z},~\ref{as:g_t(D)},~\ref{as:lambda12} and~\ref{as:lambda3}. Let $\{\bD_t\}_{t=1}^{\infty}$ be the sequence of optimal bases produced by Algorithm~\ref{alg:all}. Then, the sequence converges to a stationary point of the expected loss function $f(\bD)$ when $t$ goes to infinity.
\end{theorem}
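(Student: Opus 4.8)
The plan is to follow the surrogate--minimization blueprint for online matrix factorization \cite{mairal2010online}, adapted to our partially observed dictionary. Recall that Algorithm~\ref{alg:all} produces $\bD_t = \argmin_\bD g_t(\bD)$ with $g_t$ as in \eqref{eq:g_t(D)}, and write $f_t(\bD)$ for the empirical loss \eqref{eq:f_n(D)} with $n = t$. Three structural facts drive the argument. First, the coefficients $\{\bv_i,\be_i\}$ are the \emph{exact} minimizers of $\tl(\bz_i,\bD_{i-1},\cdot,\cdot)$, so $\tl(\bz_i,\bD,\bv_i,\be_i)\ge \ell(\bz_i,\bD)$ for every $\bD$; moreover the $h$-part of $g_t$ is $\th(\bY,\bD,\bU)$ evaluated at the (suboptimal) one-pass iterates, hence is at least $h(\bY,\bD)$. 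Consequently $g_t(\bD)\ge f_t(\bD)$ for all $\bD$. Second, by the computation preceding \eqref{eq:f(D)}, $\tfrac1t h(\bY,\bD)\to 0$ uniformly on bounded sets under Assumption~\ref{as:N_t}, so asymptotically $f_t(\bD)\to f(\bD)=\EXP_\bz[\ell(\bz,\bD)]$. Third, Assumption~\ref{as:g_t(D)} makes each $g_t$ strongly convex, so $\bD_t$ is its unique minimizer and satisfies the first-order condition $\nabla g_t(\bD_t)=\bzero$.

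The first block of work is regularity and boundedness. Using Assumption~\ref{as:z} together with the coercivity of $\tl$, I would show the coefficients $\{\bv_i,\be_i\}$ are uniformly bounded, and then that the iterates $\bD_t$ remain in a fixed compact set $\mathcal{D}$: the strong convexity modulus of $g_t$ from Assumption~\ref{as:g_t(D)}, combined with $\nabla g_t(\bD_t)=\bzero$ and a uniform bound on $g_t(\bzero)$, yields a uniform norm bound on $\bD_t$. On $\mathcal{D}$ the inner problem defining $\ell(\bz,\bD)$ is strongly convex in $(\bv,\be)$, so its minimizer is unique and, by Danskin's theorem, $\bD\mapsto\ell(\bz,\bD)$ is continuously differentiable with a gradient that is Lipschitz uniformly in $\bz$; the same then holds for $f_t$, $g_t$ and $f$ with a common constant $L$. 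I would also establish the stability estimate $\fronorm{\bD_t-\bD_{t-1}}=\order(1/t)$, which follows because $g_t-g_{t-1}$ is an $\order(1/t)$ perturbation on $\mathcal{D}$ while $g_t$ is uniformly strongly convex.

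Next I would show $g_t(\bD_t)$ converges almost surely by exhibiting it as a quasi-martingale. Expanding $(t+1)g_{t+1}-t\,g_t$ into the single fresh term contributed by sample $t+1$, and using $g_{t+1}(\bD_{t+1})\le g_{t+1}(\bD_t)$, the conditional expected increment satisfies $\EXP[\,g_{t+1}(\bD_{t+1})-g_t(\bD_t)\mid\mathcal{F}_t\,]\le \tfrac{1}{t+1}\big(f(\bD_t)-g_t(\bD_t)\big)+\order(1/t^2)$, where the $\order(1/t^2)$ collects the non-telescoping contribution of the $h$-term. Since $f(\bD_t)-g_t(\bD_t)\le f(\bD_t)-f_t(\bD_t)$, whose positive part has expectation $\order(1/\sqrt t)$ by a uniform law of large numbers over $\mathcal{D}$, the expected positive variations are summable; the quasi-martingale convergence theorem \cite{mairal2010online} then gives that $g_t(\bD_t)$ converges a.s.\ and that $\sum_t \tfrac{1}{t+1}\big(g_t(\bD_t)-f_t(\bD_t)\big)<\infty$. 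Combining this summability with the $\order(1/t)$ stability of $\bD_t$ and the uniform Lipschitz continuity of $g_t-f_t$ forces the nonnegative gap $g_t(\bD_t)-f_t(\bD_t)\to 0$ almost surely.

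The final step converts the vanishing surrogate gap into stationarity. The gap $h_t\defeq g_t-f_t$ is nonnegative with $L$-Lipschitz gradient on $\mathcal{D}$, and any nonnegative function with $L$-Lipschitz gradient obeys $\fronorm{\nabla h_t(\bD)}^2\le 2L\,h_t(\bD)$; evaluating at $\bD_t$ gives $\fronorm{\nabla g_t(\bD_t)-\nabla f_t(\bD_t)}^2\le 2L\big(g_t(\bD_t)-f_t(\bD_t)\big)\to 0$. Because $\nabla g_t(\bD_t)=\bzero$, this forces $\nabla f_t(\bD_t)\to\bzero$, and since $f_t\to f$ uniformly on $\mathcal{D}$ with uniformly convergent gradients (the i.i.d.\ Glivenko--Cantelli argument plus the vanishing $\tfrac1t h$ term), we obtain $\nabla f(\bD_t)\to\bzero$. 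Every limit point $\bD_\infty$ of the bounded sequence $\{\bD_t\}$ is therefore a stationary point of the expected loss $f$. I expect the main obstacle to be the quasi-martingale step in conjunction with the approximate update of $\bu_t$: one must verify that the one-pass iterates keep $\bM_t$ and $\{\bu_i\}$ bounded (indeed, the exact $\bu_i^*$ in \eqref{eq:actual u_t} are of order $1/t$, and the approximate update \eqref{eq:u_t} must be shown to inherit this) so that the non-telescoping $h$-contribution to the increment is summable uniformly on $\mathcal{D}$, and then check the integrability and measurability hypotheses of the convergence theorem over the filtration generated by the stream $\{\bz_t,\by_t\}$.
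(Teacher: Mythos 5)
Your proposal follows the same overall architecture as the paper's proof (uniform boundedness, a Donsker-type deviation bound, quasi-martingale convergence of $g_t(\bD_t)$, the $\order(1/t)$ stability of $\bD_t$, vanishing surrogate gap, stationarity in the limit); your final step, exploiting $\fronorm{\nabla (g_t - f_t)(\bD)}^2 \leq 2L\,(g_t - f_t)(\bD)$ for the nonnegative smooth gap, is a legitimate and arguably cleaner alternative to the paper's subsequential Taylor-expansion argument, provided you additionally justify uniform convergence of $\nabla f_t$ to $\nabla f$. However, there are genuine gaps at exactly the two places where this problem departs from the template of \cite{mairal2010online}. The first is the uniform boundedness of $\bM_t$, and through it of $\bD_t$ and $\bu_t$: this is the crux of the entire analysis (Proposition~\ref{prop:bound:veABMDu}), not a deferrable verification. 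Your bootstrap for $\bD_t$ (strong convexity plus a uniform bound on $g_t(\bzero)$) is circular, since $g_t(\bzero)$ contains $\frac{1}{t}\sum_{i} \fractwo{1}\twonorm{\bu_i}^2 + \frac{\lambda_3}{2t}\fronorm{\bM_t}^2$, i.e., precisely the quantities in question. And your sketched route~--~that the exact $\bu_i^*$ of \eqref{eq:actual u_t} are $\order(1/t)$ and the approximate update \eqref{eq:u_t} should ``inherit'' this~--~would fail twice over: the approximate $\bu_t$ does not shrink with $t$ (it is only $\order(1)$, since it is proportional to $(\bD_{t-1}-\bM_{t-1})\trans\by_t$), and even $\twonorm{\bu_i} = \order(1/i)$ would only give $\fronorm{\bM_t} = \order(\log t)$, not boundedness. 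The paper's actual mechanism is a contraction induction: writing $\bM_{t+1} = (\bI_p - \bP_t)\bM_t + \bW_t$ with $\bP_t = \(\twonorm{\by_{t+1}}^2 + 1/\lambda_3\)^{-1}\by_{t+1}\by_{t+1}\trans$, it bounds $\bW_t$ up to a self-referential term $\frac{c_3}{t}\spenorm{\bM_t}$, and uses Assumptions~\ref{as:z} and~\ref{as:N_t} to obtain that products of the factors $(\bI_p - \bP_i)$ over windows of length $\tau$ contract by $1-\alpha_0$; the bounds on $\spenorm{\bM_t}$ and $\spenorm{\bW_t}$ are then closed jointly by induction on $t$. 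Nothing in your proposal supplies this argument.

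The second gap is in the quasi-martingale step, where you assert that the non-telescoping $h$-contribution to $\EXP[u_{t+1}-u_t \mid \mathcal{F}_t]$ is $\order(1/t^2)$. It is not, term by term: the pieces $\frac{1}{2(t+1)}\twonorm{\bu_{t+1}}^2$ and $\frac{\lambda_3}{2(t+1)}\fronorm{\bD_t - \bM_{t+1}}^2 - \frac{\lambda_3}{2t}\fronorm{\bD_t - \bM_t}^2$ are each of order $1/t$, and positive variations of order $1/t$ are not summable, which would destroy the quasi-martingale argument outright. What rescues the proof is an exact algebraic cancellation coming from the optimality of $\bu_{t+1}$: since $\bu_{t+1}$ minimizes $\tl_2(\by_{t+1}, \bD_t, \bM_t, \cdot)$, one has the identity $\tr\((\bD_t - \bM_t)\trans \by_{t+1}\bu_{t+1}\trans\) = \(\twonorm{\by_{t+1}}^2 + 1/\lambda_3\)\twonorm{\bu_{t+1}}^2$, and substituting it shows the entire $h$-contribution is bounded above by $-\frac{1}{t+1}\(\fractwo{\lambda_3}\twonorm{\by_{t+1}}^2 + \fractwo{1}\)\twonorm{\bu_{t+1}}^2 \leq 0$. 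So the correct statement is nonpositivity, not smallness; this coupling between $\tl_2$ and the surrogate \eqref{eq:g_t(D)} is exactly the ``carefully chosen'' structure the paper emphasizes, and it is also reused later (in the proof that $b_t = g_t(\bD_t) - f_t(\bD_t) \to 0$ via Lemma~\ref{lem:mairal}). Without invoking this identity, your summability claims in both the quasi-martingale step and the gap-vanishing step are unsupported.
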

%Note that since the reformulation of the nuclear norm~\eqref{eq:nuclear reform} is non-convex, we can only guarantee that the solution is a stationary point in general. We also remark that OLRSC asymptotically fulfills the first order optimality condition of~\eqref{eq:lrr}. To see this, we follow the proof technique of Prop.3 in~\cite{mardani2015subspace} and let $\bX = \bU \bV\trans$, $W_1 = \bU\bU\trans$, $W_2 = \bV\bV\trans$, $\bM_1 = \bM_3 = 0.5\bI$, $\bM_2 = \bM_4 = 0.5\lambda_1 \bY\trans(\bY\bX + \bE - \bZ)$. Due to our uniform bound~(Prop.~\ref{prop:bound:veABMDu}), we justify the optimality condition.

\section{Experiments}\label{sec:exp}

This section gives empirical evidence that our algorithm OLRSC is fast and robust. We demonstrate by simulations that the solution is good enough. In fact, we find that after revealing all the data points, OLRSC is able to recover the true subspace even with grossly corrupted entries. Thus, it is interesting to study the statistical performance of OLRSC in the future work. We also illustrate that OLRSC is orders of magnitude faster than batch methods such as LRR and SSC.

\subsection{Settings}
Before presenting the empirical results, we first introduce the universal settings used throughout the section.

\subsubsection{Baselines} 
For the subspace recovery task, we compare our algorithm with ORPCA~\citep{feng2013online}, LRR~\citep{liu2013robust} and PCP~\citep{candes2011robust}. For the subspace clustering task, we choose ORPCA, LRR and SSC~\citep{elhamifar2009sparse} as the competitive baselines. Recently, \citet{liu2014recovery} improved the vanilla LRR by utilizing some low-rank matrix for $\bY$. We denote this variant of LRR by LRR2 and accordingly, our algorithm equipped with such a basis dictionary $\bY$ is denoted as OLRSC2.

\subsubsection{Evaluation Metric} 
We evaluate the fitness of the recovered subspaces $\bD$ (with each column being normalized) and the ground truth $L$ by the Expressed Variance (EV)~\citep{xu2010principal}:
\begin{align}
\text{EV}(\bD, \bL) \defeq \frac{\tr(DD\trans \bL\bL\trans)}{\tr(\bL\bL\trans)}.
\end{align}
The value of EV is between 0 and 1, and a higher value means better recovery (EV $=1$ means exact recovery).

The performance of subspace clustering is measured by clustering accuracy which is provided in the SSC toolkit. Its value also ranges in the interval $[0, 1]$, and a higher value indicates a more accurate clustering.

\subsubsection{Parameters} 
We set $\lambda_1 = 1$, $\lambda_2 = {1}/{\sqrt{p}}$ and $\lambda_3 = \sqrt{t/p}$, where $t$ is the iteration counter. Note that the parameter settings satisfy~\ref{as:lambda12} and~\ref{as:lambda3}. In particular, $\lim_{t \rightarrow \infty} \lambda_3/t = 0$. We follow the default parameter setting for the baselines.

\subsection{Subspace Recovery}\label{subsec:recovery}

\subsubsection{Simulation Data}
We use 4 disjoint subspaces $\{\mathcal{S}_k \}_{k=1}^4 \subset \Rp$, whose bases are denoted by $\{ L_k \}_{k=1}^4 \in \mathbb{R}^{p \times d_k}$. The clean data matrix $\bar{\bZ}_k \in \mathcal{S}_k$ is then produced by $\bar{\bZ}_k = L_k R_k\trans$, where $R_k \in \mathbb{R}^{n_k \times d_k}$. The entries of $L_k$'s and $R_k$'s are sampled i.i.d.\  from the normal distribution. Finally, the observed data matrix $\bZ$ is generated by $\bZ = \bar{\bZ} + \bE$, where $\bar{\bZ}$ is the column-wise concatenation of $\bar{\bZ}_k$'s followed by a random permutation, $\bE$ is the sparse corruption whose $\rho$ fraction entries are non-zero and follow an i.i.d.\ uniform distribution over $[-2, 2]$. We independently  conduct each experiment 10 times and report the averaged results.

\subsubsection{Robustness}
We illustrate by simulation results that OLRSC can effectively recover the underlying subspaces, confirming that $\bD_t$ converges to the union of subspaces. For the two online algorithms OLRSC and ORPCA, We compute the EV after revealing all the samples. We examine the performance under different intrinsic dimension $d_k$'s and corruption $\rho$. To be more detailed, the $d_k$'s are varied from $0.01p$ to $0.1p$ with a step size $0.01p$, and the $\rho$ is from 0 to 0.5, with a step size 0.05.

\begin{figure}[t]
\centering
\includegraphics[width=0.45\linewidth]{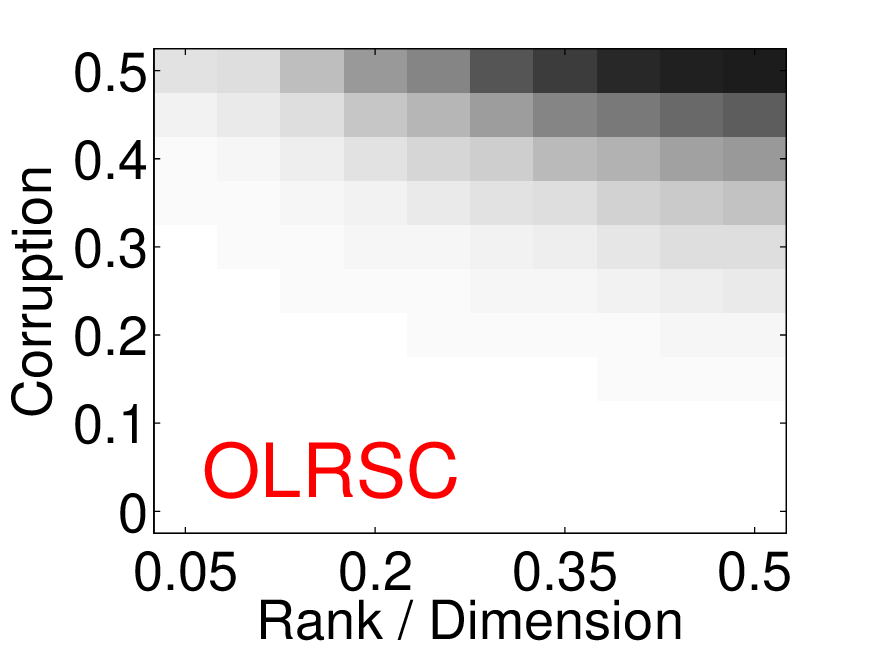}
\includegraphics[width=0.45\linewidth]{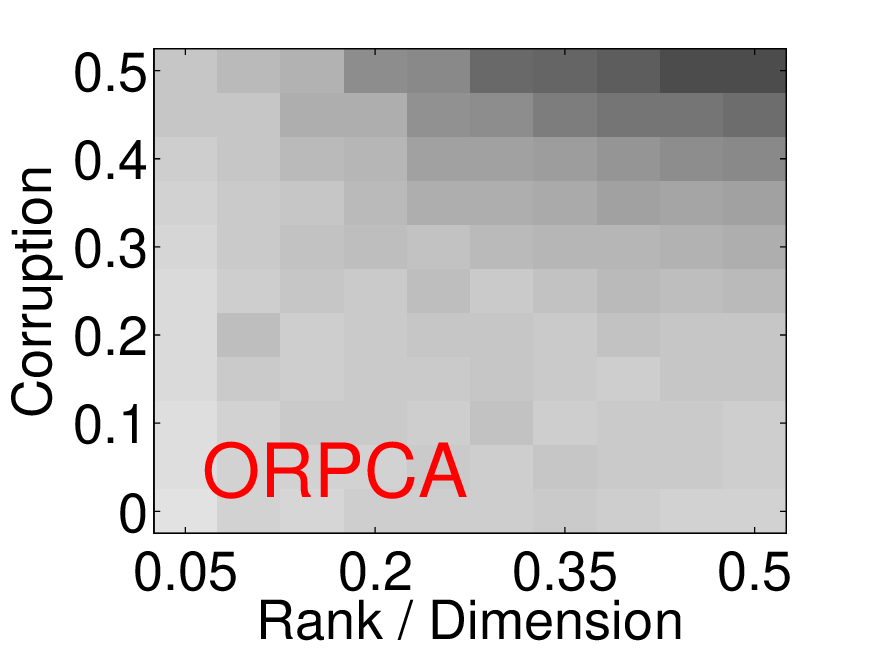}

\includegraphics[width=0.45\linewidth]{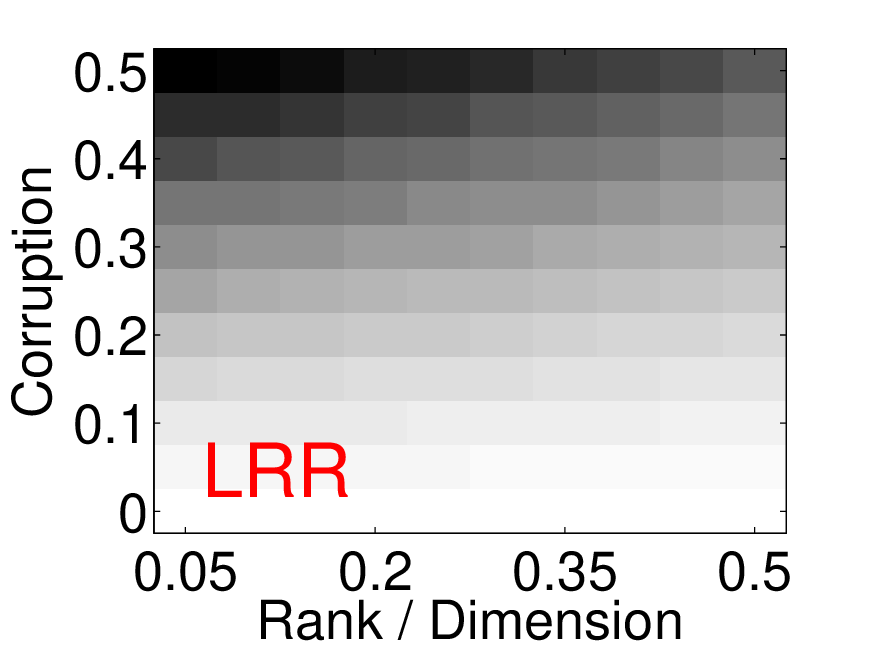}
\includegraphics[width=0.45\linewidth]{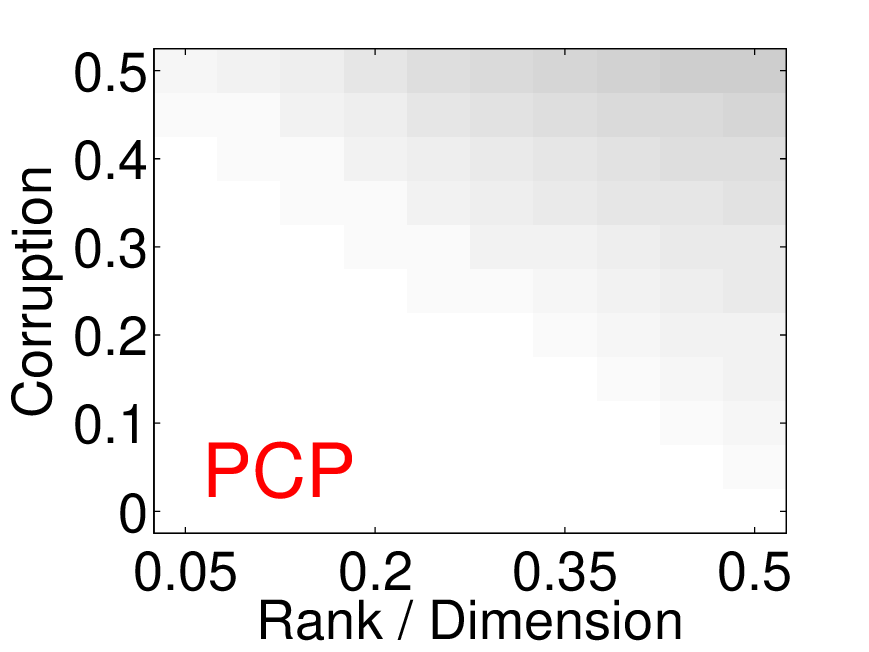}

\caption{{\bf Subspace recovery under different intrinsic dimensions and corruptions.} Brighter is better. We set $p = 100$, $n_k = 1000$ and $d = 4d_k$. LRR and PCP are batch methods. OLRSC consistently outperforms ORPCA and even improves the performance of LRR. Compared to PCP, OLRSC is competitive in most cases and degrades a little for highly corrupted data, possibly due to the number of samples is not sufficient for its convergence.
}
\label{fig:diff_rank_rho}
\end{figure}

The results are presented in Figure~\ref{fig:diff_rank_rho}. The most intriguing observation is that OLRSC as an online algorithm outperforms its batch counterpart LRR! Such improvement may come from the explicit modeling for the basis, which makes OLRSC more informative than LRR. To fully understand the rationale behind this phenomenon is an important direction for future research. Notably, OLRSC consistently beats ORPCA (an online version of PCP), in that OLRSC takes into account that the data are produced by a union of small subspaces. While PCP works well for almost all scenarios, OLRSC degrades a little when addressing difficult cases (high rank and corruption). This is not surprising since Theorem~\ref{thm:stationary} is based on asymptotic analysis and hence, we expect that OLRSC will converge to the true subspace after acquiring more samples.

\subsubsection{Convergence Rate and Time Complexity}
Now we test on a large dataset to show that our algorithm usually converges to the true subspace faster than ORPCA. We plot the EV curve against the number of samples in Figure~\ref{fig:ev_samples}. Firstly, when equipped with a proper matrix $\bY$, OLRSC2 and LRR2 can always produce an exact recovery of the subspace as PCP does. When using the dataset itself for $\bY$, OLRSC still converges to a favorable point after revealing all the samples. Compared to ORPCA, OLRSC is more robust and converges much faster for hard cases (see, e.g., $\rho = 0.5$). Again, we note that in such hard cases, OLRSC outperforms LRR, which agrees with the observation in Figure~\ref{fig:diff_rank_rho}.

\begin{figure}[t]
\centering
{\includegraphics[width=0.45\linewidth]{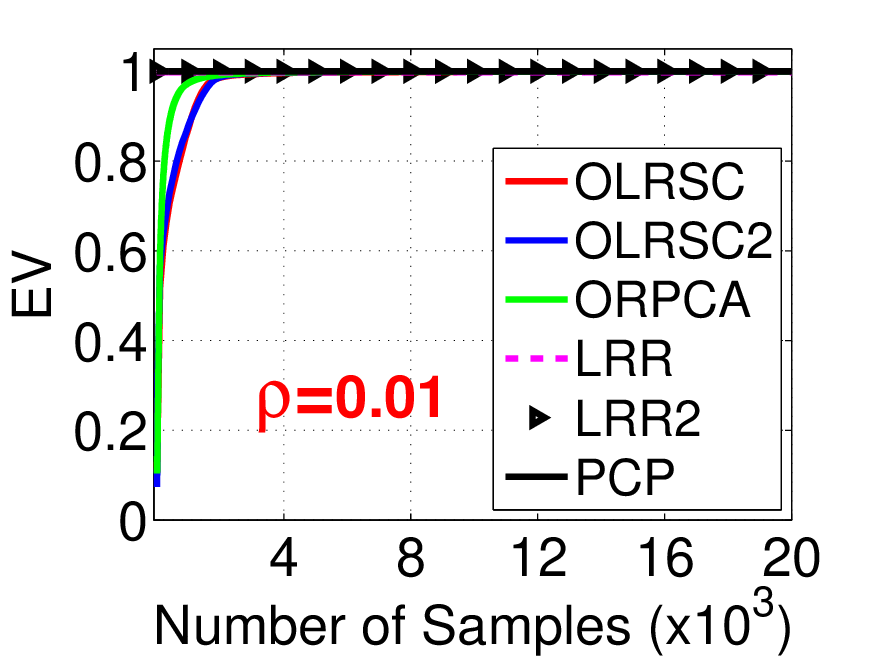}}
{\includegraphics[width=0.45\linewidth]{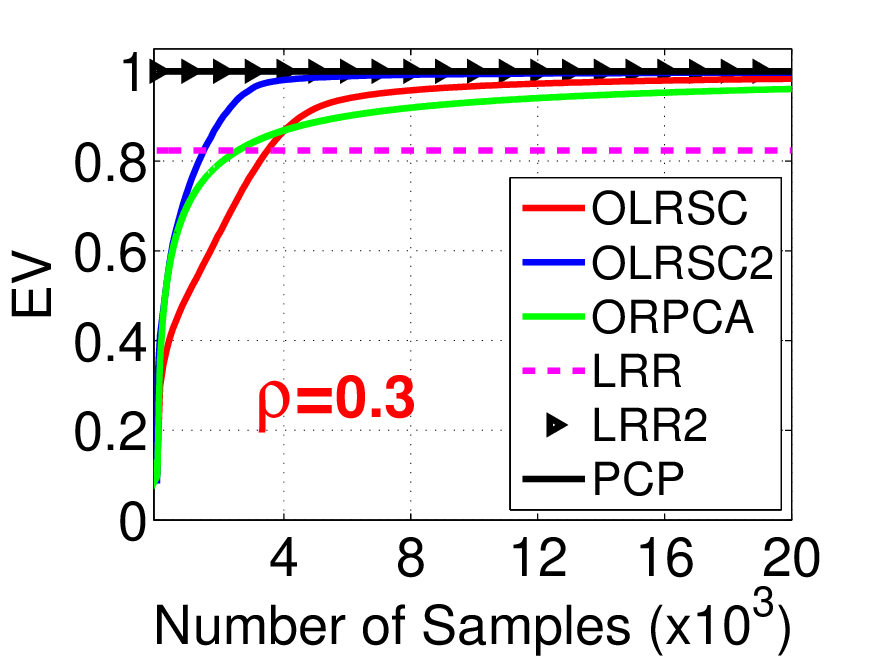}}

{\includegraphics[width=0.45\linewidth]{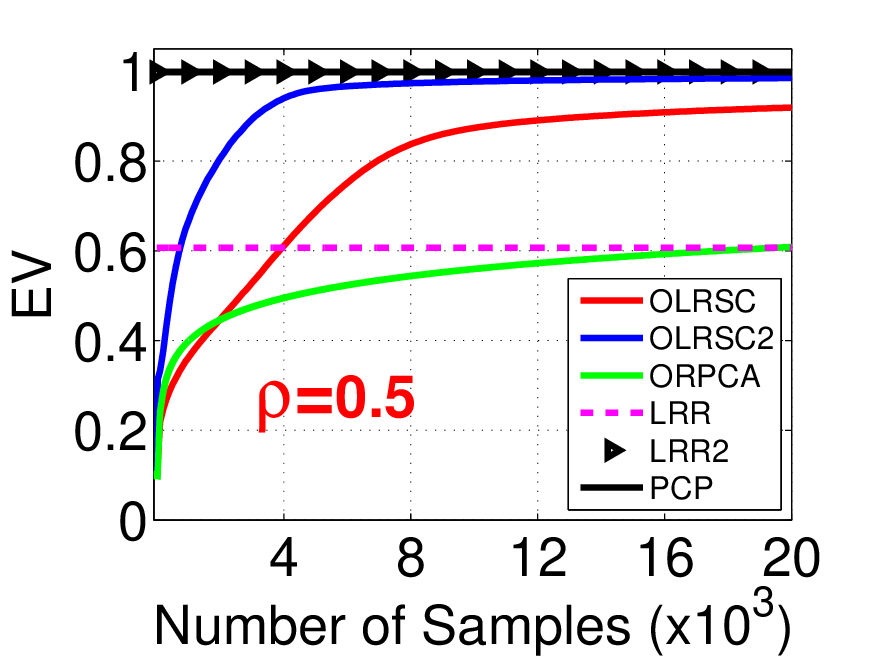}}
{\includegraphics[width=0.45\linewidth]{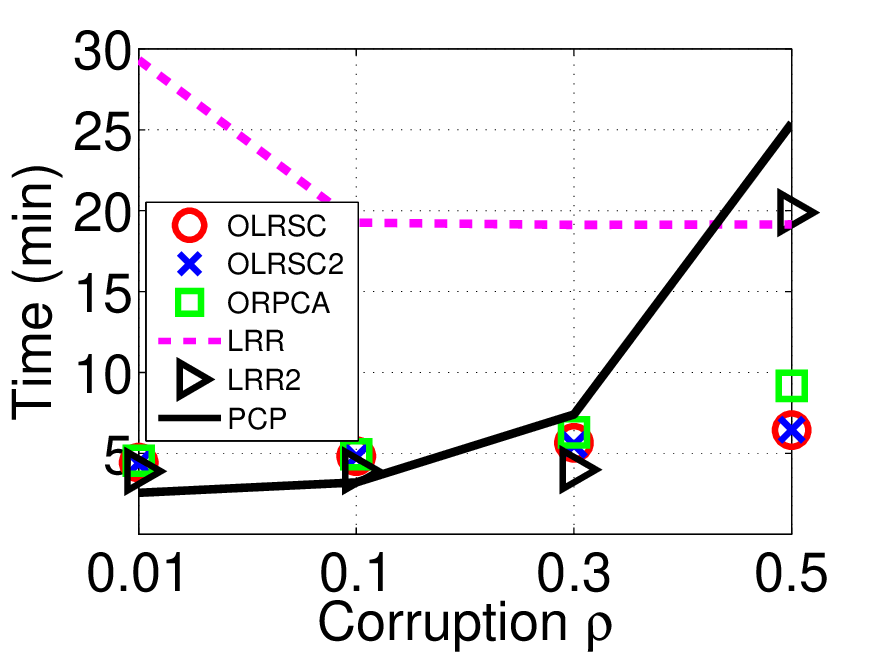}}
\caption{{\bf Convergence rate and time complexity.} A higher EV means better subspace recovery. We set $p = 1000$, $n_k = 5000$, $d_k = 25$ and $d = 100$. OLRSC always converges to or outperforms the batch counterpart LRR. For hard cases, OLRSC converges much faster than ORPCA. Both PCP and LRR2 achieve the best EV value. When equipped with the same dictionary as LRR2, OLRSC2 also well handles the highly corrupted data ($\rho=0.5$). Our methods are more efficient than the competitors but PCP when $\rho$ is small, possibly because PCP utilizes a highly optimized C++ toolkit while ours are written in Matlab.
}
\label{fig:ev_samples}
\end{figure}

We also illustrate the time complexity of the algorithms in the last panel of Figure~\ref{fig:ev_samples}. In short, our algorithms (OLRSC and OLRSC2) admit the lowest computational complexity for all cases. One may argue that PCP spends slightly less time than ours for a small $\rho$ (0.01 and 0.1). However, we remark here that PCP utilizes a highly optimized C++ toolkit to boost computation while our algorithms are {fully} written in Matlab. We believe that ours will work more efficiently if properly optimized by, {e.g.,} the blas routine. Another important message conveyed by the figure is that, OLRSC is always being orders of magnitude computationally more efficient than the batch method LRR, as well as producing comparable or even better solution.

\subsection{Subspace Clustering}\label{subsec:clustering}

\subsubsection{Datasets}
We examine the performance of subspace clustering on 5 realistic databases shown in Table~\ref{tb:dataset}, which can be downloaded from the LibSVM website. For MNIST, We randomly select 20 thousands samples to form MNIST-20K since we find it time consuming to run the batch methods on the entire database.

\begin{table}[h]
\vspace{-0.13in}
	\centering
	\caption{\bf Datasets for subspace clustering.}
	\vspace{0.05in}
	\begin{tabular}{lccc}
		\toprule
		& \#classes & \#samples & \#features \\
		\midrule
		Mushrooms & 2         & 8124      & 112        \\
		DNA       & 3         & 3186      & 180        \\
		Protein   & 3         & 24,387    & 357        \\
		USPS      & 10        & 9298      & 256        \\
		MNIST-20K & 10        & 20,000    & 784       \\
		\bottomrule
	\end{tabular}
	\label{tb:dataset}
\vspace{-0.05in}
\end{table}

\subsubsection{Standard Clustering Pipeline}
In order to focus on the solution quality of different algorithms, we follow the standard pipeline which feeds $\bX$ to a spectral clustering algorithm~\citep{ng2002spectral}. To this end, we collect all the $\bu$'s and $\bv$'s produced by OLRSC to form the representation matrix $\bX=\bU\bV\trans$. For ORPCA, we use $R_0R_0\trans$ as the similarity matrix~\citep{liu2013robust}, where $R_0$ is the row space of $\bZ_0=L_0\Sigma_0R_0\trans$ and $\bZ_0$ is the clean matrix recovered by ORPCA. We run our algorithm and ORPCA with 2 epochs so as to refine the coefficients (i.e., $\bU$ and $\bV$ in ours and $R_0$ in ORPCA). Note that for subspace clustering, this step is essential because the initial guess of $\bD$ results in bad solutions of the coefficients at the beginning.
%{\color{red}Xu: [So this is the result of performing subspace clustering on $\bV$? Need to mention that.]}

\subsubsection{Fully Online Pipeline}
As we discussed in Section~\ref{sec:alg}, the (optional) spectral clustering procedure needs the similarity matrix $\bX$, making the memory proportional to $n^2$. To tackle this issue, we proposed a fully online scheme where the key idea is performing $k$-means on $\bV$. Here, we examine the efficacy of this variant, which is called OLRSC-F.

\subsubsection{Results}
The results are recorded in Table~\ref{tb:sc}, where the time cost of spectral clustering or $k$-means is not included so we can focus on comparing the efficiency of the algorithms themselves. Also note that we use the dataset itself as the dictionary $Y$ because we find that an alternative choice of $Y$ does not help too much on this task. For OLRSC and ORPCA, they require an estimation on the true rank. Here, we use $5k$ as such estimation where $k$ is the number of classes of a dataset. Our algorithm significantly outperforms the two state-of-the-art methods LRR and SSC both in terms of accuracy and efficiency. One may argue that SSC is slightly better than OLRSC on Protein. Yet, it spends 1 hour while OLRSC only costs 25 seconds. Hence, SSC is not practical. Compared to ORPCA, OLRSC always identifies more correct samples as well as consumes comparable running time. For example, on the USPS dataset, OLRSC achieves the accuracy of 65.95\% while that of ORPCA is 55.7\%. Regarding the running time, OLRSC uses only 7 seconds more than ORPCA~--~same order of computational complexity, which agrees with the qualitative analysis in Section~\ref{sec:alg} and the one in~\cite{feng2013online}.

\begin{table}[t]
\caption{{\bf Clustering accuracy (\%) and computational time (seconds in default).} For each dataset, the first row indicates the accuracy and the second row the running time. For all the large-scale datasets, OLRSC (or OLRSC-F) has the highest clustering accuracy. Regarding the running time, our method spends comparable time as ORPCA~(the fastest solver) does while dramatically improves the accuracy. Although SSC is slightly better than SSC on Protein, it consumes one hour while OLRSC takes 25 seconds.
}
\centering
\begin{tabular}{lccccc}
\toprule
       & OLRSC &OLRSC-F  & ORPCA & LRR   & SSC \\
\midrule
Mush- & {85.09} & {\bf 89.36}  & 65.26 & 58.44  & 54.16 \\
rooms  & 8.78 & 8.78  & 8.30 & 46.82  & 32 min\\
\midrule
\multirow{2}{*}{DNA}  & {67.11} & {\bf 83.08}  & 53.11 & 44.01  & 52.23\\
 & 2.58 & 2.58 & 2.09 & 23.67 & 3 min\\
\midrule
\multirow{2}{*}{Protein}  & { 43.30} & 43.94  & 40.22 & 40.31  & {\bf 44.27}\\
 & 24.66 & 24.66 & 22.90 & 921.58 & 65 min\\
\midrule
\multirow{2}{*}{USPS}  & {65.95} & {\bf 70.29}  & 55.70 & 52.98  & 47.58\\
 & 33.93 & 33.93  & 27.01 & 257.25 & 50 min\\
\midrule
MNIST-  & {\bf 57.74} & 55.50  & 54.10 & 55.23 &  43.91\\
20K & 129 & 129  & 121 & 32 min &  7 hours\\
\bottomrule
\end{tabular}
\label{tb:sc}
\end{table}

More interestingly, it shows that the $k$-means alternative (OLRSC-F) usually outperforms the spectral clustering pipeline. This suggests that perhaps for {\em robust} subspace clustering formulations, the simple $k$-means paradigm suffices to guarantee an appealing result. On the other hand, we report the running time of spectral clustering and $k$-means in Table~\ref{tab:fully_online}. As expected, since spectral clustering computes SVD for an $n$-by-$n$ similarity matrix, it is quite slow. In fact, it sometimes dominates the running time of the whole pipeline. In contrast, $k$-means is extremely fast and scalable, as it can be implemented in online fashion.

\begin{table}[h]
\caption{{\bf Time cost (seconds) of spectral clustering and $k$-means.}}

\centering
\begin{tabular}{lccccc}
\toprule
       & Mushrooms & DNA & Protein & USPS & MNIST-20K\\
\midrule
Spectral & 295 & 18 & 7567 & 482 & 4402\\
$k$-means & 2 & 6 & 5 & 19 & 91\\
\bottomrule
\end{tabular}
\label{tab:fully_online}
\end{table}

\subsection{Influence of $d$}

A key ingredient of our formulation is a factorization on the nuclear norm regularized matrix, which requires an estimation on the rank of the $\bX$ (see \eqref{eq:nuclear reform}). Here we examine the influence of the choice of $d$ (which plays as an upper bound of the true rank). We report both EV and clustering accuracy for different $d$ under a range of corruptions. The simulation data are generated as in Section~\ref{subsec:recovery} and we set $p=200$, $n_k=1000$ and $d_k = 10$. Since the four subspaces are disjoint, the true rank is 40.

\begin{figure}[t]
\centering
	{\includegraphics[width=0.45\linewidth]{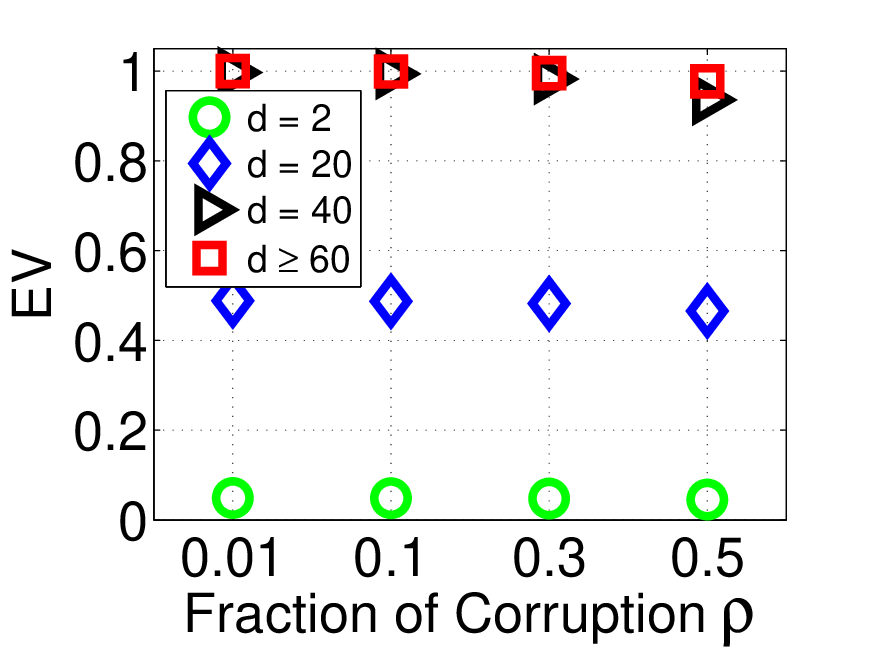}}
	{\includegraphics[width=0.45\linewidth]{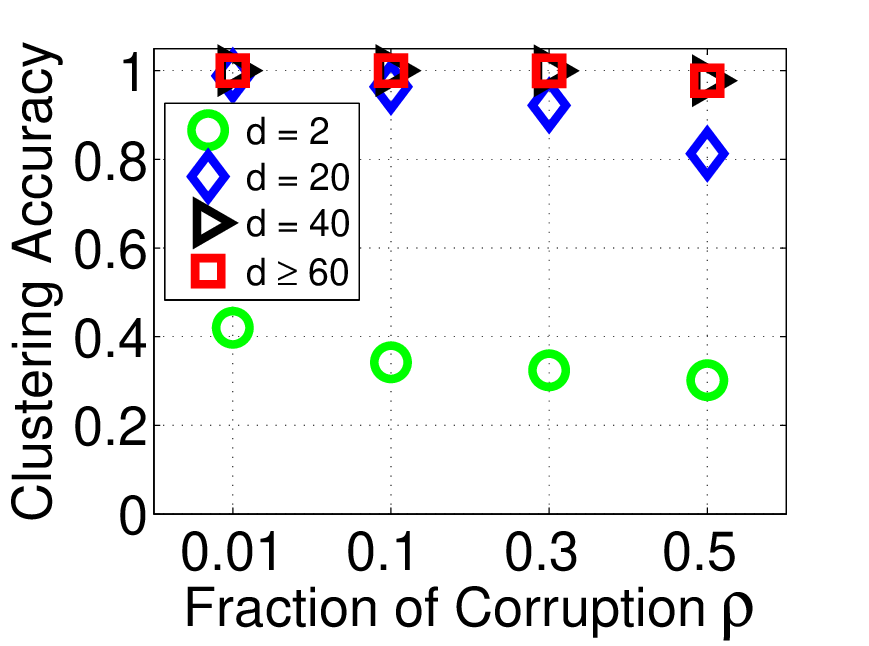}}

	\caption{{\bf Examine the influence of ${d}$.} The true rank is 40. We experiment on $d = \{ 2, 20, 40, 60, 80, 100, 120, 140, 160, 180 \}$.   }
	\label{fig:diff_d}
\end{figure}

From Figure~\ref{fig:diff_d}, we observe that our algorithm cannot recover the true subspace if $d$ is smaller than the true rank. On the other hand, when $d$ is sufficiently large (at least larger than the true rank), our algorithm can perfectly estimate the subspace. This agrees with the results in~\cite{burer2005local} which says as long as $d$ is large enough, any local minima is global optima. We also illustrate the influence of $d$ on subspace clustering. Generally speaking, OLRSC can consistently identify the cluster of the data points if $d$ is sufficiently large. Interestingly, different from the subspace recovery task, here the requirement for $d$ seems to be slightly relaxed. In particular, we notice that if we pick $d$ as 20 (smaller than the true rank), OLRSC still performs well. Such relaxed requirement of $d$ may benefit from the fact that the spectral clustering step can correct some wrong points as suggested by~\cite{soltanolkotabi2014robust}.

%\section{More Experiments}\label{sec:app:more_exp}
%We also investigate the performance of subspace clustering on MNIST-7K and MNIST-10K. In this way, one can see how the computational time changes with the number of samples.
%\begin{table}[h]
%\caption{{\bf Clustering accuracy (\%) and computational time (seconds).}
%}
%\vspace{0.05in}
%\centering
%\resizebox{\linewidth}{!}{
%\begin{tabular}{lccccc}
%\toprule
%       & OLRSC  & ORPCA & LRR  & LRR2 & SSC \\
%\midrule
%Mush- & {\bf 85.09}   & 65.26 & 58.44 & 56.38 & 54.16 \\
%rooms  & 8.78  & 8.30 & 46.82 & 8.55 & 32 min\\
%\midrule
%\multirow{2}{*}{DNA}  & {\bf 67.11}  & 53.11 & 44.01 & 45.32 & 52.23\\
% & 2.58  & 2.09 & 23.67 & 1.65 & 3 min\\
%\midrule
%\multirow{2}{*}{Protein}  & { 43.30}  & 40.22 & 40.31 & 40.00 & {\bf 44.27}\\
% & 24.66  & 22.90 & 921.58 & 98.33 & 65 min\\
%\midrule
%\multirow{2}{*}{USPS}  & {\bf 65.95}  & 55.70 & 52.98 & 58.69 & 47.58\\
% & 33.93  & 27.01 & 257.25 & 71.15 & 50 min\\
%\midrule
%MNIST-    & {\bf 58.04}  & 55.40 & 54.77 & 54.27 & 45.56\\
%7K  & 42.99  & 39.84 & 512.37 & 95.21 & 26 min\\
%\midrule
%MNIST- & {\bf 56.79} & 54.66 & 55.15 & 53.67 & 44.90\\
%10K & 67  & 56 & 24 min & 153 & 84 min\\
%\midrule
%MNIST-  & {\bf 57.74}  & 54.10 & 55.23 & 54.53 & 43.91\\
%20K & 129  & 121 & 32 min & 360 & 7 hours\\
%\bottomrule
%\end{tabular}
%}
%\vspace{-0.15in}
%\end{table}

\section{Conclusion}\label{sec:conclusion}
In this paper, we have proposed an online algorithm termed OLRSC for subspace clustering, which dramatically reduces the memory cost of LRR from $O(n^2)$ to $O(pd)$. One of the key techniques is an explicit basis modeling, which essentially renders the model more informative than LRR. Another important component is a non-convex reformulation of the nuclear norm. Combining these techniques allows OLRSC to simultaneously recover the union of the subspaces, identify the possible corruptions and perform subspace clustering. We have also established the theoretical guarantee that solutions produced by our algorithm  converge to a stationary point of the expected loss function. Moreover, we have analyzed the time complexity and empirically demonstrated that our algorithm is computationally very efficient compared to competing baselines. Our extensive experimental study on synthetic and realistic datasets  also illustrates the robustness of OLRSC. In a nutshell, OLRSC is an appealing algorithm  in all three worlds: memory cost, computation and robustness.

\clearpage

\clearpage

\appendix

\section{Algorithm Details}
\label{supp:sec:alg}
\begin{algorithm}
\caption{Solving $\bv$ and $\be$}
\label{alg:ve}
\begin{algorithmic}[1]
    \REQUIRE $\bD \in \Rpd$, $\bz \in \Rp$, parameters $\lambda_1$ and $\lambda_2$
    \ENSURE Optimal $\bv$ and $\be$.
    \STATE Set $\be = \boldsymbol{0}$.
    \REPEAT
        \STATE Update $\bv$:
               \begin{align*}
               \bv = (\bD\trans \bD + \frac{1}{\lambda_1}\bI)^{-1}\bD\trans (\bz - \be).
               \end{align*}

        \STATE Update $\be$:
               \begin{align*}
               \be = \mathcal{S}_{\lambda_2 / \lambda_1}[\bz - \bD\bv].
               \end{align*}
    \UNTIL{convergence}
\end{algorithmic}
\end{algorithm}

\begin{algorithm}
\caption{Solving $\bD$}
\label{alg:D}
\begin{algorithmic}[1]
    \REQUIRE $\bD \in \Rpd$ in the previous iteration, accumulation matrix $\bM$, $\bA$ and $\bB$, parameters $\lambda_1$ and $\lambda_3$.
    \ENSURE Optimal $\bD$ (updated).
    \STATE Denote $\widehat{A} = \lambda_1 A + \lambda_3 \bI$ and $\widehat{B} = \lambda_1B + \lambda_3 \bM $.
    \REPEAT
      \FOR{$j = 1$ to $d$}
      \STATE Update the $j$th column of $\bD$:
      \begin{align*}
        \bd_j \leftarrow \bd_j - \frac{1}{\widehat{a}_{jj}} \( \bD \widehat{\ba}_j - \widehat{\bb}_j \)
      \end{align*}
      \ENDFOR
    \UNTIL{convergence}
\end{algorithmic}
\end{algorithm}
For Algorithm~\ref{alg:ve}, we set a threshold $\epsilon = 10^{-3}$. Let $\{\bv', \be'\}$ and $\{\bv'', \be''\}$ be the two consecutive iterates. If the maximum of ${\twonorm{\bv' - \bv''}}/{\twonorm{\bv'}}$ and ${\twonorm{\be' - \be''}}/{\twonorm{\be'}}$ is less than $\epsilon$, then we stop Algorithm~\ref{alg:ve}.

For Algorithm~\ref{alg:D}, we observe that a one-pass update on the dictionary $\bD$ is enough for the final convergence of $\bD$, as we showed in the experiments. This is also observed in~\citet{mairal2010online}.

\section{Proofs}\label{supp:sec:proof}

\subsection{Technical Lemmas}
We need several technical lemmas for our proof.
\begin{lemma}[Corollary of Thm. 4.1 in~\citet{bonnans1998optimization}]
\label{lem:bonnans}
Let $f: \Rp \times \Rq \rightarrow \mathbb{R}$. Suppose that for all $\bx \in \Rp$ the function $f(\bx, \cdot)$ is differentiable, and that $f$ and $\nabla_{\bu}f(\bx, \bu)$ are continuous on $\Rp \times \Rq$. Let $\bv(\bu)$ be the optimal value function $\bv(\bu) = \min_{\bx \in \mathcal{C}}f(\bx, \bu)$, where $\mathcal{C}$ is a compact subset of $\Rp$. Then $\bv(\bu)$ is directionally differentiable. Furthermore, if for $\bu_0 \in \Rq$, $f(\cdot, \bu_0)$ has unique minimizer $\bx_0$ then $\bv(\bu)$ is differentiable in $\bu_0$ and $\nabla_{\bu} \bv(\bu_0) = \nabla_{\bu} f(\bx_0, \bu_0)$.
\end{lemma}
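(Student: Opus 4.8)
The plan is to prove this by the classical Danskin-type argument, establishing along the way the explicit directional-derivative formula $\bv'(\bu_0; \bd) = \min_{\bx \in \mathcal{M}(\bu_0)} \langle \nabla_{\bu} f(\bx, \bu_0), \bd\rangle$, where $\mathcal{M}(\bu) \defeq \{\bx \in \mathcal{C} : f(\bx, \bu) = \bv(\bu)\}$ is the solution set. First I would record that $\mathcal{M}(\bu)$ is nonempty and compact for every $\bu$, since $f(\cdot,\bu)$ is continuous on the compact set $\mathcal{C}$, and that $\bv$ is continuous (by Berge's maximum theorem, or directly from the uniform Lipschitz bound derived at the end). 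Directional differentiability then follows by sandwiching the difference quotient $[\bv(\bu_0 + t\bd) - \bv(\bu_0)]/t$ between matching $\limsup$ and $\liminf$ bounds as $t \downarrow 0$.

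For the upper bound, fix any $\bar{\bx} \in \mathcal{M}(\bu_0)$. Since $\bar{\bx}$ is feasible at $\bu_0 + t\bd$, we have $\bv(\bu_0 + t\bd) \le f(\bar{\bx}, \bu_0 + t\bd)$, while $\bv(\bu_0) = f(\bar{\bx}, \bu_0)$; dividing by $t$ and letting $t \downarrow 0$, differentiability of $f(\bar{\bx}, \cdot)$ gives
\[
\limsup_{t \downarrow 0} \frac{\bv(\bu_0 + t\bd) - \bv(\bu_0)}{t} \le \langle \nabla_{\bu} f(\bar{\bx}, \bu_0), \bd\rangle,
\]
and minimizing over the compact set $\mathcal{M}(\bu_0)$ yields the upper bound. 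The lower bound is the step I expect to be the \emph{main obstacle}, since the minimizers move with the parameter and need not converge, only subsequentially. For a sequence $t_k \downarrow 0$ realizing the $\liminf$, I would select $\bx_k \in \mathcal{M}(\bu_0 + t_k \bd)$ and extract, by compactness of $\mathcal{C}$, a subsequence $\bx_k \to \bx^\star$. Continuity of $f$ and of $\bv$ forces $f(\bx^\star, \bu_0) = \bv(\bu_0)$, i.e. $\bx^\star \in \mathcal{M}(\bu_0)$. Using that $\bx_k$ is feasible at $\bu_0$, so $\bv(\bu_0) \le f(\bx_k, \bu_0)$, together with a mean-value expansion $f(\bx_k, \bu_0 + t_k\bd) - f(\bx_k, \bu_0) = t_k \langle \nabla_{\bu} f(\bx_k, \bu_0 + \theta_k t_k \bd), \bd\rangle$ for some $\theta_k \in (0,1)$, I obtain
\[
\frac{\bv(\bu_0 + t_k\bd) - \bv(\bu_0)}{t_k} \ge \langle \nabla_{\bu} f(\bx_k, \bu_0 + \theta_k t_k \bd), \bd\rangle.
\]
Passing to the limit and invoking continuity of $\nabla_{\bu} f$ (so that $\bx_k \to \bx^\star$ and $\bu_0 + \theta_k t_k \bd \to \bu_0$ give convergence of the right-hand side) bounds the $\liminf$ below by $\langle \nabla_{\bu} f(\bx^\star, \bu_0), \bd\rangle \ge \min_{\bx \in \mathcal{M}(\bu_0)} \langle \nabla_{\bu} f(\bx, \bu_0), \bd\rangle$. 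The two bounds coincide, so the limit exists and equals the stated formula; this proves directional differentiability.

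For the final claim, under a unique minimizer $\mathcal{M}(\bu_0) = \{\bx_0\}$ the formula collapses to $\bv'(\bu_0; \bd) = \langle \nabla_{\bu} f(\bx_0, \bu_0), \bd\rangle$, which is linear in $\bd$; hence $\bv$ is Gâteaux differentiable at $\bu_0$ with gradient $\nabla_{\bu} f(\bx_0, \bu_0)$. To upgrade this to genuine (Fréchet) differentiability I would first note that $\bv$ is locally Lipschitz: on the compact neighborhood $\mathcal{C} \times \bar{B}(\bu_0, r)$ the continuous map $\nabla_{\bu} f$ is bounded, say by $L$, so each $f(\bx, \cdot)$ is $L$-Lipschitz in $\bu$ uniformly in $\bx$, and this bound is inherited by the pointwise minimum $\bv$. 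Then a standard compactness-of-the-unit-sphere argument promotes the Gâteaux derivative to a Fréchet derivative: if differentiability failed there would be $t_k \downarrow 0$ and unit directions $\bd_k \to \bd$ with the normalized error bounded away from zero, but the $L$-Lipschitz estimate lets one replace $\bd_k$ by its limit $\bd$ up to $\order(\twonorm{\bd_k - \bd})$, contradicting the directional limit already established along $\bd$. This yields $\nabla_{\bu} \bv(\bu_0) = \nabla_{\bu} f(\bx_0, \bu_0)$, completing the proof.
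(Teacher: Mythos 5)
Your proposal is correct, but there is nothing in the paper to compare it against: the paper never proves this lemma, importing it as a black box (``Corollary of Thm.\ 4.1'' of Bonnans and Shapiro's perturbation-analysis survey) in its proof-preliminaries section, exactly as Mairal et al.\ do in the online dictionary learning paper the authors follow. So your contribution is a self-contained derivation of the special case actually used, and it is the classical Danskin argument carried out correctly. In particular you handle the two genuinely delicate points: (i) in the lower bound the minimizers at the perturbed parameter need not converge, and your subsequence extraction from the compact set $\mathcal{C}$, combined with the mean-value expansion and \emph{joint} continuity of $\nabla_{\bu} f$ (which is exactly why the lemma assumes it), is the right fix; (ii) uniqueness of the minimizer only collapses the directional-derivative formula to a linear functional, i.e.\ G\^ateaux differentiability, and your upgrade to Fr\'echet differentiability via the local Lipschitz bound $L = \max_{\mathcal{C}\times \bar{B}(\bu_0,r)} \twonorm{\nabla_{\bu} f}$ plus compactness of the unit sphere is the standard finite-dimensional argument --- a step that statements of Danskin's theorem often gloss over, and which is needed here because the paper later uses the genuine gradient $\nabla_{\bD}\ell(\bz,\bD)$ in Proposition~\ref{prop:l:Lipschtiz}. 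What the paper's citation buys is brevity and generality (the Bonnans--Shapiro theorem covers constrained and infinite-dimensional perturbation settings); what your proof buys is verifiability without the external reference. Two small polish items: make explicit that continuity of the value function, used to conclude $\bx^\star \in \mathcal{M}(\bu_0)$, follows from $\abs{\bv(\bu)-\bv(\bu')} \leq \max_{\bx\in\mathcal{C}} \abs{f(\bx,\bu)-f(\bx,\bu')}$ together with uniform continuity of $f$ on compacts (you gesture at this but defer it), and note that linearity of $\bd \mapsto \min_{\bx\in\mathcal{M}(\bu_0)}\langle \nabla_{\bu} f(\bx,\bu_0),\bd\rangle$ under uniqueness also gives consistency of the one-sided derivatives in opposite directions, which your Fr\'echet argument implicitly uses when it replaces $\bd_k$ by the limit direction.
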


\begin{lemma}[Corollary of Donsker theorem~\citep{van2000asymptotic}]
\label{lem:donsker}
Let $F = \{f_{\theta}: \mathcal{X} \rightarrow \mathbb{R}, \theta \in \Theta\}$ be a set of measurable functions indexed by a bounded subset $\Theta$ of $\mathbb{R}^d$. Suppose that there exists a constant $K$ such that
\begin{align*}
\abs{ f_{\theta_1}(x) - f_{\theta_2}(x) } \leq K \twonorm{\theta_1 - \theta_2},
\end{align*}
for every $\theta_1$ and $\theta_2$ in $\Theta$ and $x$ in $\mathcal{X}$. Then, $F$ is P-Donsker. For any $f$ in $F$, let us define $\mathbb{P}_nf$, $\mathbb{P}f$ and $\mathbb{G}_nf$ as
\begin{align*}
\mathbb{P}_nf = \frac{1}{n}\sum_{i=1}^n f(X_i),\ \mathbb{P}f = \EXP[f(X)],\ \mathbb{G}_nf = \sqrt{n}(\mathbb{P}_nf-\mathbb{P}f).
\end{align*}
Let us also suppose that for all $f$, $\mathbb{P}f^2 < \delta^2$ and $\infnorm{f} < \bM$ and that the random elements $X_1, X_2, \cdots$ are Borel-measurable. Then, we have
\begin{align*}
\EXP \fronorm{\mathbb{G}} = O(1),
\end{align*}
where $\fronorm{\mathbb{G}} = \sup_{f\in F} \abs{ \mathbb{G}_nf }$.
\end{lemma}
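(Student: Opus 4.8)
The plan is to recognize this as the textbook bracketing-entropy route to a Donsker theorem for a class indexed \emph{Lipschitz-continuously} by a bounded Euclidean parameter set, exactly as in~\cite{van2000asymptotic}. I would assemble three ingredients: (i) a polynomial bound on the bracketing numbers of $F$, read off from the covering numbers of $\Theta$; (ii) the bracketing central limit theorem, to conclude that $F$ is $P$-Donsker; and (iii) a bracketing maximal inequality, fed by the extra second-moment and sup-norm controls, to upgrade weak convergence to the quantitative bound $\EXP\fronorm{\mathbb{G}} = \order(1)$.

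First I would bound the bracketing numbers. Since $\Theta$ is a bounded subset of $\mathbb{R}^d$, a standard volumetric estimate gives, for every $\epsilon > 0$, a cover of $\Theta$ by at most $\const\,(K/\epsilon)^d$ Euclidean balls of radius $\epsilon/(2K)$, with centers $\theta_1, \dots, \theta_N$; here $\const$ depends only on $d$ and the diameter of $\Theta$. The hypothesis $\abs{f_{\theta_1}(x) - f_{\theta_2}(x)} \le K\twonorm{\theta_1 - \theta_2}$ turns each ball into a bracket: whenever $\twonorm{\theta - \theta_j} \le \epsilon/(2K)$ we have $f_{\theta_j} - \epsilon/2 \le f_\theta \le f_{\theta_j} + \epsilon/2$ pointwise. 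Each such bracket has constant width $\epsilon$, hence $L_2(P)$-size exactly $\epsilon$ because $\lV 1 \rV_{L_2(P)} = 1$. Therefore $N_{[\,]}(\epsilon, F, L_2(P)) \le \const\,(K/\epsilon)^d$.

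Next I would check that the bracketing integral converges and invoke Donsker's theorem. From the polynomial bound, $\log N_{[\,]}(\epsilon, F, L_2(P)) \le \log\const + d\log(K/\epsilon)$, so the entropy integral $J_{[\,]}(\delta, F, L_2(P)) = \int_0^\delta \sqrt{\log N_{[\,]}(\epsilon, F, L_2(P))}\, d\epsilon$ is finite, since $\sqrt{\log(1/\epsilon)}$ is integrable at the origin. Finiteness of this integral is precisely the hypothesis of the bracketing central limit theorem~\cite{van2000asymptotic}, which yields that $F$ is $P$-Donsker, establishing the first assertion; Borel-measurability of the $X_i$ guarantees that the supremum $\fronorm{\mathbb{G}}$ is measurable so that the process is well defined.

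Finally I would derive the moment bound, where the assumptions $\mathbb{P}f^2 < \delta^2$ and $\infnorm{f} < \bM$ enter. These let me apply a bracketing maximal inequality for the empirical process~\cite{van2000asymptotic}, which bounds $\EXP\fronorm{\mathbb{G}}$ by a constant multiple of $J_{[\,]}(\delta, F, L_2(P))\left(1 + \bM\, J_{[\,]}(\delta, F, L_2(P)) / (\delta^2\sqrt{n})\right)$. Both factors are finite by the previous step, the correction term vanishes as $n \to \infty$, and the leading term does not depend on $n$; hence $\EXP\fronorm{\mathbb{G}} = \order(1)$. I expect the last step to be the main obstacle: the Donsker conclusion by itself only delivers asymptotic tightness, and obtaining a bound on $\EXP\fronorm{\mathbb{G}}$ that is genuinely \emph{uniform in} $n$ requires this explicit maximal inequality, whose constant must be tracked to depend only on $K$, $d$, $\delta$, and $\bM$ through the finite integral and the envelope, and not on the sample size. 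The one computational point to get right is the conversion of the $\Theta$-cover into $L_2(P)$-brackets of the correct size, but this is routine once the Lipschitz bound is applied.
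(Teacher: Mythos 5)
The paper does not actually prove this lemma: it is imported verbatim in the ``Proof Preliminaries'' appendix as a black-box corollary of the Donsker theorem, cited to~\cite{van2000asymptotic} (mirroring the identical preliminary in~\cite{mairal2010online}), so there is no in-paper argument for yours to diverge from. Your blind reconstruction is precisely the standard derivation from the cited source and is correct: the Lipschitz parametrization over bounded $\Theta \subset \mathbb{R}^d$ yields constant-width brackets and hence $N_{[\,]}\bigl(\epsilon, F, L_2(P)\bigr) \leq \const (K/\epsilon)^d$, the resulting finite bracketing entropy integral triggers the bracketing central limit theorem to give the $P$-Donsker property, and --- the step you rightly flag as essential, since Donsker alone gives only asymptotic tightness rather than a bound uniform in $n$ --- the bracketing maximal inequality with envelope $\bM$ and variance bound $\delta^2$ delivers $\EXP \fronorm{\mathbb{G}} \leq \const\, J_{[\,]}\bigl(\delta, F, L_2(P)\bigr)\bigl(1 + \bM J_{[\,]}\bigl(\delta, F, L_2(P)\bigr)/(\delta^2 \sqrt{n})\bigr) = \order(1)$, with all constants depending only on $K$, $d$, $\mathrm{diam}(\Theta)$, $\delta$, and $\bM$.
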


\subsection{Proof of Prop.~\ref{prop:opt U}}
\label{supp:sec:opt u}
\begin{proof}
The optimal solution $\bU$ for \eqref{eq:h} is given by the first order optimality condition:
\begin{align*}
\frac{\partial \th(\bY, \bD, \bU)}{\partial \bU} =\ \bU + \lambda_3 (\bU \bY\trans - \bD\trans) \bY = 0,
\end{align*}
by which we have
\begin{align*}
\tilde{\bU} &= \bD\trans \bY \(\frac{1}{\lambda_3}\bI_n + \bY\trans \bY\)^{-1} \\
&=  \bD\trans \(\frac{1}{\lambda_3}\bI_p + \bY\bY\trans\)^{-1} \bY.
\end{align*}
Note that $\tilde{\bu}_i$ is the $i$th column of $\tilde{\bU}$. So for each $i \in [n]$,
\begin{align*}
\tilde{\bu}_i = \bD\trans \(\frac{1}{\lambda_3}\bI_p + \bY\bY\trans\)^{-1} \by_i.
\end{align*}
 Also, we have
\begin{align*}
\bY\tilde{\bU}\trans &= \bY \bY\trans \( \frac{1}{\lambda_3} \bI_p + \bY\bY\trans \)^{-1} \bD\\
&= \( \frac{1}{\lambda_3} \bI_p + \bY\bY\trans - \frac{1}{\lambda_3} \bI_p  \) \( \frac{1}{\lambda_3} \bI_p + \bY\bY\trans \)^{-1} \bD\\
&= \bD - \frac{1}{\lambda_3} \( \frac{1}{\lambda_3} \bI_p + \bY\bY\trans \)^{-1}\bD.
\end{align*}
Thus,
\begin{align*}
h(\bD; \bY) = \frac{1}{2} \fronorm{\bD\trans \(\frac{1}{\lambda_3}\bI_p + \bY\bY\trans\)^{-1} \bY}^2 + \frac{1}{2\lambda_3} \fronorm{  \( \frac{1}{\lambda_3} \bI_p + \bY\bY\trans \)^{-1}\bD }^2.
\end{align*}
\end{proof}

\subsection{Proof of Prop.~\ref{prop:bound:veABMDu}}\label{app:proof:prop:bound}

\begin{proof}
Let us consider the optimization problem of solving $\bv$ and $\be$. As the trivial solution $\{\bv'_t, \be'_t\} = \{\bzero, \bzero\}$ is feasible, we have
\begin{align*}
\tl_1(\bD_{t-1}, \bv'_t, \be'_t; \bz_t) = {\lambda_2} \onenorm{\bz_t}.
\end{align*}
Therefore, the optimal solution should satisfy:
\begin{align*}
\fractwo{\lambda_1} \twonorm{\bz_t - \bD_{t-1}\bv_t^* - \be_t^*}^2 + \fractwo{1}\twonorm{\bv_t^*}^2 + \lambda_2 \onenorm{\be_t^*} \leq {\lambda_2} \onenorm{\bz_t},
\end{align*}
which implies
\begin{align*}
\fractwo{1} \twonorm{\bv_t^*}^2 \leq {\lambda_2} \onenorm{\bz_t},\ \lambda_2 \onenorm{\be_t^*} \leq {\lambda_2} \onenorm{\bz_t}.
\end{align*}
Since $\bz_t$ is uniformly bounded (Assumption~\ref{as:z}) and $\lambda_2$ does not grow with $t$, $\bv_t^*$ and $\be_t^*$ are both uniformly bounded from above.

To examine the uniform boundedness for $\frac{1}{t}\bA_t$ and $\frac{1}{t}\bB_t$, note that
\begin{align*}
&\frac{1}{t}\bA_t = \frac{1}{t} \sum_{i=1}^t \bv_i^* (\bv_i^*)\trans,\\
&\frac{1}{t}\bB_t = \frac{1}{t} \sum_{i=1}^t \(\bz_i - \be_i^*\) (\bv_i^*)\trans.
\end{align*}
Since for each $i$, $\bv_i^*$, $\be_i^*$ and $\bz_i$ are uniformly bounded from above, $\frac{1}{t} \bA_t $ and $\frac{1}{t}\bB_t$ are uniformly upper bounded.

Now we derive the bound for $\bD_t$ and $\bM_t$. We inductively show that both the sequences $\{\bD_t\}_{t \geq 1}$ and $\{ \bM_t \}_{t\geq 1}$ are uniformly bounded. First, let us denote the upper bound of $\frac{1}{t} \bB_t$ by $\const_2$, i.e.,
\begin{align}\label{eq:upper B_t}
\spenorm{\frac{1}{t}\bB_t} \leq \const_2,\ \forall\ t\geq 1.
\end{align}
Also, Assumption~\ref{as:g_t(D)} indicates that there exists an absolute constant $\const_3$, such that
\begin{align}\label{eq:lower A_t}
\spenorm{\frac{1}{t} \bA_t} \geq \const_3,\ \forall\ t\geq 1.
\end{align}
Now suppose that for all $1 \leq i \leq t-1$, it holds for some absolute constant $\const_1 > \const_2/\const_3$ that
\begin{align*}
\spenorm{\bD_i} \leq \const_1,\quad \spenorm{\bM_i} \leq \const_1.
\end{align*}
Using the closed form solution of $\bu_t^*$, we have
\begin{align*}
\bM_t =&\ \bM_{t-1} + \by_t (\bu_t^*)\trans\\
=&\ \bM_{t-1} + \twonorm{\by_t}^2 \(\twonorm{\by_t}^2 + \frac{1}{\lambda_3}\)^{-1} (\bD_{t-1} - \bM_{t-1})\\
=&\ \frac{\twonorm{\by_t}^2}{\twonorm{\by_t}^2 + \frac{1}{\lambda_3}} \bD_{t-1} + \frac{\lambda_3^{-1}}{\twonorm{\by_t}^2 + \frac{1}{\lambda_3}} \bM_{t-1}.
\end{align*}
Hence,
\begin{align*}
\spenorm{\bM_t} \leq \frac{\twonorm{\by_t}^2}{\twonorm{\by_t}^2 + {\lambda_3^{-1}}} \spenorm{\bD_{t-1}} + \frac{\lambda_3^{-1}}{\twonorm{\by_t}^2 + {\lambda_3^{-1}}} \spenorm{\bM_{t-1}} \leq \const_1.
\end{align*}
Now using the closed form solution of $\bD_t$, we have
\begin{align*}
\bD_t = (\lambda_1 \bB_t + \lambda_3 \bM_t) (\lambda_1 \bA_t + \lambda_3 \bI_d)^{-1}= \( \frac{\lambda_1}{t}\bB_t + \frac{\lambda_3}{t} \bM_t\) \( \frac{\lambda_1}{t} \bA_t + \frac{\lambda_3}{t} \bI_d\)^{-1}.
\end{align*}
Combining the above with~\eqref{eq:upper B_t} and~\eqref{eq:lower A_t} gives us
\begin{align*}
\spenorm{\bD_t} \leq \( \lambda_1 \const_2 + \frac{\lambda_3}{t} \const_1 \) \( \lambda_1 \const_3 + \frac{\lambda_3}{t} \)^{-1}= \frac{\lambda_3}{\lambda_1 \const_3} \frac{\const_1 - \const_2 / \const_3}{t + \frac{\lambda_3}{\lambda_1 \const_3}} + \frac{\const_2}{\const_3}.
\end{align*}
It turns out that the maximum of the right hand side is attained at $t = 1$ due to our earlier choice $\const_1 > \const_2 / \const_3$. Hence, we have
\begin{align*}
\spenorm{\bD_t} \leq \const_1.
\end{align*}
The induction is complete.

By examining the closed form of $\bu_t^*$, and note that we have proved the uniform boundedness of $\bD_t$ and $\bM_t$, we conclude that $\bu_t^*$ is uniformly upper bounded.
\end{proof}

\begin{corollary}
\label{coro:bound l lip gt}
Assume same conditions as in Prop.~\ref{prop:bound:veABMDu}. Further suppose that $\lambda_1$ does not grow with $t$. Then, for all $t \geq 1$,

\begin{enumerate}
\item  $\tl(\bD_t, \bv_t^*, \be_t^*; \bz_t)$ and $\ell(\bD_t; \bz_t)$ are uniformly bounded from above.

\item $\frac{1}{t} \th(\bD_t, \bU^*_{1:t}; \bY_{1:t})$ is uniformly upper bounded where $\bU^*_{1:t} = (\bu^*_1, \dots, \bu^*_t)$ and $\bY_{1:t} = (\by_1, \dots, \by_t)$.

\item The surrogate function $g_t(\bD_t)$ defined in \eqref{eq:g_t(D)} is uniformly upper bounded and Lipschitz.
\end{enumerate}
\end{corollary}
\begin{proof}
To show Claim 1, we just need to examine the definition of $\tl(\bD_t, \bv_t^*, \be_t^*; \bz_t)$ (see Eq.~\eqref{eq:ell}) and notice that $\bz_t$, $\bD_t$, $\bv_t^*$ and $\be_t^*$ are all uniformly bounded. This implies that $\tl(\bD_t, \bv_t^*, \be_t^*; \bz_t)$ is uniformly bounded and so is $\ell(\bD_t; \bz_t)$. Likewise, we prove that $\frac{1}{t} \th(\bD_t, \bU^*_{1:t}; \bY_{1:t})$ is uniformly bounded. The uniform boundedness of $g_t(\bD_t)$ follows immediately.

To show that $g_t(\bD)$ is Lipschitz, we show that the gradient of $g_t(\bD)$ is uniformly bounded for all $\bD \in \mathcal{D}$.
\begin{align*}
\fronorm{\nabla g_t(\bD)} =&\ \fronorm{ \lambda_1 \bD \(\frac{1}{t}\bA_t + \frac{\lambda_3}{t} \bI_d\) -  \frac{\lambda_1}{t}\bB_t - \frac{\lambda_3}{t} \bM_t }\\
\leq &\ \lambda_1 \fronorm{\bD} \(\fronorm{\frac{1}{t} \bA_t} + \fronorm{\frac{\lambda_3}{t} \bI_d}\) + \lambda_1 \fronorm{ \frac{1}{t} \bB_t}  + \fronorm{\frac{\lambda_3}{t} \bM_t}.
\end{align*}
Notice that each term on the right hand side of the inequality is uniformly bounded from above and $\lambda_1$ does not grow with $t$. Thus the gradient of $g_t(\bD)$ is uniformly bounded, implying that $g_t(\bD)$ is Lipschitz.
\end{proof}

\begin{proposition}
\label{prop:l:Lipschtiz}
Let $\bD \in \mathcal{D}$ and denote the minimizer of $\tl(\bD, \bv, \be; \bz)$ as:
\begin{align*}
\{\bv', \be'\} = \argmin_{\bv, \be} \tl(\bD, \bv, \be; \bz).
\end{align*}
 Then, the function $\ell(\bD; \bz)$ is continuously differentiable and
\begin{align*}
\nabla_{\bD} \ell(\bD; \bz) = (\bD \bv' + \be' - \bz) (\bv')\trans.
\end{align*}
Furthermore, $\ell(\bD; \bz)$ is uniformly Lipschitz.
\end{proposition}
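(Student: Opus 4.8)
The plan is to obtain both the differentiability and the gradient formula as a direct application of the envelope-type result in Lemma~\ref{lem:bonnans}, treating $(\bv, \be)$ as the inner minimization variable and $\bD$ as the parameter. Concretely, for a fixed $\bz$ I define $F((\bv, \be), \bD) \defeq \tl(\bz, \bD, \bv, \be)$, so that $\ell(\bz, \bD) = \min_{(\bv, \be)} F((\bv, \be), \bD)$ is exactly the optimal-value function appearing in the lemma. I then check its hypotheses: that $F$ is jointly continuous, that $F(\,\cdot\,, \bD)$ is differentiable in $\bD$ with $\nabla_{\bD} F$ jointly continuous, that the inner minimization may be taken over a compact set, and, crucially, that the inner minimizer is unique.

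First I would verify the smoothness conditions. The map $\bD \mapsto \fractwo{\lambda_1}\twonorm{\bz - \bD\bv - \be}^2$ is quadratic, hence $F$ is differentiable in $\bD$ with $\nabla_{\bD} F = \lambda_1(\bD\bv + \be - \bz)\bv\trans$, and both $F$ and $\nabla_{\bD} F$ are jointly continuous in $((\bv,\be), \bD)$. To pass from $\Rd \times \Rp$ to a compact feasible set, I would use the trivial feasible point $(\bv, \be) = (\bzero, \bz)$, which gives $F((\bzero, \bz), \bD) = \lambda_2 \onenorm{\bz}$; hence any minimizer satisfies $\fractwo{1}\twonorm{\bv'}^2 \le \lambda_2\onenorm{\bz}$ and $\lambda_2\onenorm{\be'} \le \lambda_2\onenorm{\bz}$, so under Assumption~\ref{as:z} the minimizer lies in a fixed compact ball and the minimization can be restricted to a compact $\mathcal{C}$ without changing its value.

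The key step is uniqueness of the inner minimizer. Writing the smooth part of $F$ as $\fractwo{\lambda_1}\twonorm{\bz - [\bD\ \bI_p][\bv;\be]}^2 + \fractwo{1}\twonorm{\bv}^2$, its Hessian in $(\bv, \be)$ is positive definite: any direction in the nullspace of $[\bD\ \bI_p]$ forces $\be = -\bD\bv$, and the remaining term $\twonorm{\bv}^2$ then vanishes only at $\bv = \bzero$. Thus the smooth part is strongly convex, and adding the convex $\onenorm{\be}$ term keeps $F(\,\cdot\,, \bD)$ strictly convex, so the minimizer $(\bv', \be')$ is unique for every $\bD$. Lemma~\ref{lem:bonnans} then yields that $\ell(\bz, \cdot)$ is differentiable with $\nabla_{\bD}\ell(\bz, \bD) = \nabla_{\bD} F((\bv',\be'), \bD) = \lambda_1(\bD\bv' + \be' - \bz)\bv'\trans$, which is the claimed gradient. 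Strong convexity also makes the minimizer map $\bD \mapsto (\bv'(\bD), \be'(\bD))$ continuous (maximum theorem), so $\nabla_{\bD}\ell$ is continuous and $\ell(\bz, \cdot)$ is $C^1$.

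Finally, for the uniform Lipschitz claim I would bound the gradient uniformly. Since $\bD$ ranges over the compact set $\mathcal{D}$ from Proposition~\ref{prop:bound:veABMDu}, the minimizers $\bv', \be'$ are uniformly bounded there, and $\bz$ is bounded by Assumption~\ref{as:z}, so $\lambda_1(\bD\bv' + \be' - \bz)\bv'\trans$ is bounded by a constant independent of $\bz$ and $\bD$; a uniformly bounded gradient on a convex domain gives a uniform Lipschitz constant. The main obstacle is the uniqueness argument, since it is exactly what upgrades the generic directional differentiability of the value function to genuine differentiability and pins down the gradient; the compact-set restriction and the continuity of the minimizer map are secondary but still rely on the same strong-convexity structure to go through.
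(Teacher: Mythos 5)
Your proposal is correct and follows essentially the same route as the paper's proof: both apply Lemma~\ref{lem:bonnans} after verifying continuity of $\tl$ and $\nabla_{\bD}\tl$ and uniqueness of the inner minimizer via strong convexity, and then obtain uniform Lipschitzness by bounding the gradient through Assumption~\ref{as:z} and Proposition~\ref{prop:bound:veABMDu}. If anything you are slightly more careful than the paper, since you explicitly justify restricting the inner minimization to a compact set and prove \emph{joint} strict convexity in $(\bv, \be)$ (the paper only invokes strong convexity in $\bv$), and your gradient correctly carries the factor $\lambda_1$ that the paper's formula drops, a benign constant that does not affect the Lipschitz conclusion.
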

\begin{proof}
By fixing $\bz$, the function $\tilde{\ell}$ can be seen as a mapping:
\begin{align*}
\mathbb{R}^{d+p} \times \mathcal{D} \rightarrow&\ \mathbb{R}\\
\( [\bv;\ \be], \bD \) \mapsto&\ \tilde{\ell}(\bD, \bv, \be; \bz).
\end{align*}
It is easy to show that for all $[\bv;\ \be] \in \mathbb{R}^{d+p}$, $\tl(\bD, \bv, \be; \bz)$ is differentiable with respect to $\bD$. Also $\tl(\bD, \bv, \be; \bz)$ is continuous on $\mathbb{R}^{d+p} \times \mathcal{D}$ and so is its gradient $\nabla^{}_{\bD} \tl(\bD, \bv, \be; \bz) = (\bD \bv + \be - \bz)\bv\trans$. For all $\bD \in \mathcal{D}$, since $\tl(\bD, \bv, \be; \bz)$ is strongly convex w.r.t. $\bv$ and $\be$, it has a unique minimizer $\{\bv', \be'\}$. Thus Lemma~\ref{lem:bonnans} applies and we prove that $\ell(\bD; \bz)$ is differentiable in $\bD$ and
\begin{align*}
\nabla_{\bD} \ell(\bD; \bz) = (\bD \bv' + \be' - \bz) (\bv')\trans.
\end{align*}
Since every term in $\nabla_{\bD} \ell(\bD; \bz)$ is uniformly bounded (Assumption~\ref{as:z} and Proposition~\ref{prop:bound:veABMDu}), we conclude that the gradient of $\ell(\bD; \bz)$ is uniformly bounded, implying that $\ell(\bz, \bD)$ is uniformly Lipschitz w.r.t. $\bD$.
\end{proof}

\begin{corollary}
\label{coro:bound lip ft}
Let $f_t(\bD)$ be the empirical loss function defined in \eqref{eq:f_n(D)}. Then $f_t(\bD)$ is uniformly bounded from above and Lipschitz for all $t \geq 1$ and $\bD \in \mathcal{D}$.
\end{corollary}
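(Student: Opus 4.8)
The plan is to exploit the additive structure $f_t(\bD) = \frac{1}{t}\sum_{i=1}^{t}\ell(\bz_i, \bD) + \frac{1}{t}h(\bY, \bD)$ coming from \eqref{eq:f_n(D)} and to establish uniform boundedness and a uniform-in-$t$ Lipschitz constant for each summand separately, over the compact set $\mathcal{D}$ that supports the iterates (Proposition~\ref{prop:bound:veABMDu}). Since a finite average of functions that are uniformly bounded and uniformly Lipschitz is again uniformly bounded and Lipschitz with at most the supremum of the individual constants, it then suffices to control the empirical-loss average and the penalty term $\frac{1}{t}h$ individually and add the estimates.

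For the first term I would reuse the trivial feasible point $(\bv, \be) = (\bzero, \bz_i)$ exactly as in the proof of Proposition~\ref{prop:bound:veABMDu}: this yields $0 \le \ell(\bz_i, \bD) \le \lambda_2 \onenorm{\bz_i} \le \lambda_2 \sqrt{p}\,\alpha_1$ for every $\bD \in \mathcal{D}$, where the last inequality uses Assumption~\ref{as:z}. Averaging preserves this bound, so $\frac{1}{t}\sum_{i=1}^{t}\ell(\bz_i, \bD)$ is uniformly bounded. For the Lipschitz property I would invoke Proposition~\ref{prop:l:Lipschtiz}, which supplies $\nabla_{\bD}\ell(\bz, \bD) = (\bD\bv' + \be' - \bz)\bv'^{\top}$ together with a uniform Lipschitz constant; crucially, that constant depends only on the uniform bounds for $\bv'$, $\be'$, $\bz$ and $\bD$, hence is independent of both the sample index and $t$. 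Therefore each $\ell(\bz_i, \cdot)$ is Lipschitz with the same constant, and so is their average.

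The second term needs a little more care. Here I would use the closed form of $h(\bY, \bD)$ displayed just after \eqref{eq:actual u_t}, which writes $h$ as a quadratic in $\bD$ whose coefficient matrices are $\big(\frac{1}{\lambda_3 t}\bI_p + \frac{1}{t}\bN_t\big)^{-1}$ and $\big(\frac{1}{t}\bI_p + \frac{\lambda_3}{t}\bN_t\big)^{-1}$. By Assumption~\ref{as:N_t} the smallest singular value of $\frac{1}{t}\bN_t$ is bounded away from zero uniformly in $t$, so both inverses have spectral norm bounded by a constant independent of $t$. Combining this with $\twonorm{\by_i} \le \alpha_1$ and $\bD \in \mathcal{D}$, a direct estimate gives $\frac{1}{t}h(\bY, \bD) = \order(1/t^2)$ with gradient $\nabla_{\bD}\big[\frac{1}{t}h(\bY, \bD)\big] = \order(1/t^2)$ on $\mathcal{D}$; both are uniformly bounded (indeed vanishing), so $\frac{1}{t}h$ is Lipschitz with a vanishing, hence uniform, constant. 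The boundedness half may alternatively be read off directly from Claim~2 of Corollary~\ref{coro:bound l lip gt}.

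I expect the only non-routine point to be the Lipschitz estimate for $\frac{1}{t}h$, as it is the single place where the matrix inverses enter and where Assumption~\ref{as:N_t} must be used to keep their spectral norms bounded uniformly in $t$. This step is nonetheless benign: the explicit $1/t$ (in fact $1/t^2$) prefactor makes the penalty contribute an asymptotically negligible amount to both the bound and the Lipschitz constant of $f_t$, so once the inverses are controlled the desired uniform bounds on $f_t$ follow by simply adding the empirical-loss estimate of the second paragraph.
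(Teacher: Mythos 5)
Your proposal is correct and follows essentially the same route as the paper: it uses the same decomposition of $f_t$ into $\frac{1}{t}\sum_{i=1}^{t}\ell(\bz_i,\bD)$ plus $\frac{1}{t}h$, bounds the first part via the trivial feasible point and Proposition~\ref{prop:l:Lipschtiz}, and controls the second via its closed form together with Assumption~\ref{as:N_t} to keep the spectral norms of the matrix inverses uniformly bounded. The only (harmless) difference is organizational: the paper bounds $\fronorm{\nabla f_t(\bD)}$ in a single computation, whereas you treat the two summands separately and additionally quantify the penalty term's decay as $\order(1/t^2)$, a slight refinement the paper does not state but which is consistent with its estimates.
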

\begin{proof}
Since $\ell(\bD; \bz)$ can be uniformly bounded (Corollary~\ref{coro:bound l lip gt}), we only need to show that $\frac{1}{t} h(\bD; \bY)$ is uniformly bounded, where $\bY = (\by_1, \by_2, \dots, \by_t)$. Note that we have derived the form for $h(\bD; \bY)$ in Prop.~\ref{prop:opt U}:
\begin{align*}
\frac{1}{t}h(\bD; \bY) = \frac{1}{2t} \sum_{i=1}^{t} \twonorm{ \bD\trans \(\frac{1}{\lambda_3}\bI_p +  \bY\bY\trans\)^{-1} \by_i }^2 + \frac{1}{2\lambda_3 t} \fronorm{  \( \frac{1}{\lambda_3} \bI_p + \bY\bY\trans \)^{-1}\bD }^2.
\end{align*}
Since every term in the above equation can be uniformly bounded, $\frac{1}{t}h(\bD; \bZ_t)$ is uniformly bounded and so is $f_t(\bD)$.

To show that $f_t(\bD)$ is uniformly Lipschitz, it amounts to prove that its gradient can be uniformly bounded from above. Using Prop.~\ref{prop:l:Lipschtiz}, we have
\begin{align*}
\nabla f_t(\bD) =&\ \frac{1}{t} \sum_{i=1}^{t} \nabla \ell(\bD; \bz_i) + \frac{1}{t} \nabla h(\bD; \bZ_t) \\
=&\ \frac{1}{t} \sum_{i=1}^{t} (\bD \bv_i^* + \be_i^* - \bz_i) (\bv_i^*)\trans \\
&\ + \frac{1}{t} \sum_{i=1}^{t} \(\frac{1}{\lambda_3}\bI_p + \bY\bY\trans\)^{-1} \by_i \by_i\trans \(\frac{1}{\lambda_3}\bI_p +  \bY\bY\trans\)^{-1} \bD \\
&\ + \frac{\lambda_3}{t} \(\frac{1}{\lambda_3}\bI_p +  \bY\bY\trans\)^{-2} \bD.
\end{align*}
Then the Frobenius norm of $\nabla f_t(\bD)$ can be bounded by:
\begin{align*}
\fronorm{\nabla f_t(\bD)}\leq&\ \frac{1}{t} \sum_{i=1}^{t} \twonorm{ \bD \bv_i^* + \be_i^* - \bz_i } \cdot \twonorm{ \bv_i^* } \\
&\ + \frac{1}{t} \sum_{i=1}^{t} \fronorm{ \(\frac{1}{\lambda_3}\bI_p +  \bY\bY\trans\)^{-1} }^2 \cdot \twonorm{\by_i}^2 \cdot \fronorm{\bD}\\
&\ + \frac{\lambda_3}{t} \fronorm{ \(\frac{1}{\lambda_3}\bI_p +  \bY\bY\trans\)^{-1} }^2 \cdot \fronorm{\bD}.
\end{align*}
One can easily check that the right hand side of the inequality is uniformly bounded from above. Thus $\fronorm{\nabla f_t(\bD)}$ is uniformly bounded, implying that $f_t(\bD)$ is uniformly Lipschitz.
\end{proof}

\subsection{Proof of P-Donsker}
\begin{proposition}
\label{prop:l donsker}
Let $\tilde{f}_t(\bD) = \frac{1}{t} \sum_{i=1}^{t} \ell(\bD; \bz_i)$. Then we have
\begin{align*}
\EXP[\sqrt{t} \infnorm{\tilde{f}_t -  \EXP_{\bz}[\ell(\bD; \bz)] }] = O(1).
\end{align*}
\end{proposition}
\begin{proof}
Let us consider $\{ \ell(\bD; \bz) \}$ as a set of measurable functions indexed by $\bD \in \mathcal{D}$. As we showed in Proposition~\ref{prop:bound:veABMDu}, $\mathcal{D}$ is a compact set. Also, we have proved that $\ell(\bD; \bz)$ is uniformly Lipschitz over $\bD$ (Proposition~\ref{prop:l:Lipschtiz}). Thus, $\{ \ell(\bD; \bz) \}$ is P-Donsker (see the definition in Lemma~\ref{lem:donsker}). Furthermore, as $\ell(\bD; \bz)$ is non-negative and its magnitude is uniformly upper bounded~(Corollary~\ref{coro:bound l lip gt}), so is $\ell^2(\bD; \bz)$. Hence we have $\EXP_{\bz}[\ell^2(\bD; \bz)] \leq c$ for some absolute constant $c$. Note that we have verified all the hypotheses in Lemma~\ref{lem:donsker}. Hence the proof is complete.
\end{proof}

\subsection{Proof of Theorem~\ref{thm:convergence g_t(D_t)}}

\begin{proof}
Note that $g_t(\bD_t)$ can be viewed as a stochastic positive process since every term in $g_t(\bD_t)$ is non-negative and the samples are drawn randomly. We define for all $t \geq 1$
\begin{align*}
\psi_t \defeq g_t(\bD_t).
\end{align*}
To show the convergence of $\psi_t$, we need to bound the difference of $\psi_{t+1}$ and $\psi_t$:
\begin{align}
\label{eq:pf:diff u_t}
 \psi_{t+1} - \psi_t =&\ g_{t+1}(\bD_{t+1}) - g_t(\bD_t) \notag\\
=&\ g_{t+1}(\bD_{t+1}) - g_{t+1}(\bD_t) + g_{t+1}(\bD_t) - g_t(\bD_t)\notag\\
=&\ g_{t+1}(\bD_{t+1}) - g_{t+1}(\bD_t)  + \frac{1}{t+1} \ell(\bz_{t+1}, \bD_t) - \frac{1}{t+1} \tilde{g}_t(\bD_t) \notag\\
&\ + \Bigg[ \frac{1}{t+1} \sum_{i=1}^{t+1} \fractwo{1} \twonorm{ \bu_i^* }^2 + \frac{\lambda_3}{2(t+1)} \fronorm{\bD_t - \bM_{t+1}}^2 \notag\\
&\ - \frac{1}{t} \sum_{i=1}^{t} \frac{1}{2} \twonorm{ \bu_i^* }^2 - \frac{\lambda_3}{2t} \fronorm{\bD_t - \bM_{t}}^2 \Bigg]\notag\\
=&\ g_{t+1}(\bD_{t+1}) - g_{t+1}(\bD_t) + \frac{\tilde{f}_t(\bD_t) - \tilde{g}_t(\bD_t)}{t+1} + \frac{\ell(\bz_{t+1}, \bD_t) - \tilde{f}_t(\bD_t)}{t+1} \notag\\
&\ + \Bigg[ \frac{1}{t+1} \sum_{i=1}^{t+1} \fractwo{1} \twonorm{ \bu_i^* }^2 + \frac{\lambda_3}{2(t+1)} \fronorm{\bD_t - \bM_{t+1}}^2 \notag\\
&\ - \frac{1}{t} \sum_{i=1}^{t} \frac{1}{2} \twonorm{ \bu_i^* }^2 - \frac{\lambda_3}{2t} \fronorm{\bD_t - \bM_{t}}^2 \Bigg].
\end{align}
Here,
\begin{align*}
\tilde{g}_t(\bD_t) = \frac{1}{t} \sum_{i=1}^{t} \tl(\bD, \bv_i, \be_i; \bz_i),
\end{align*}
and
\begin{align*}
\tilde{f}_t(\bD_t) = \frac{1}{t} \sum_{i=1}^{t} \ell(\bD_t; \bz_i).
\end{align*}
First, we bound the four terms in the brackets of~\eqref{eq:pf:diff u_t}. We have
\begin{align}
\label{eq:pf:1}
 \frac{1}{t+1} \sum_{i=1}^{t+1} \fractwo{1} \twonorm{ \bu_i^* }^2 - \frac{1}{t} \sum_{i=1}^{t} \twonorm{ \bu_i^* }^2 =&\ \frac{-1}{t(t+1)} \sum_{i=1}^{t} \fractwo{1} \twonorm{ \bu_i^* }^2 + \frac{1}{2(t+1)} \twonorm{\bu_{t+1}^*}^2\notag\\
\leq&\ \frac{1}{2(t+1)} \twonorm{\bu_{t+1}^*}^2,
\end{align}
and
\begin{align}
\label{eq:pf:2}
&\ \frac{\lambda_3}{2(t+1)} \fronorm{\bD_t - \bM_{t+1}}^2 - \frac{\lambda_3}{2t} \fronorm{\bD_t - \bM_{t}}^2 \notag\\
=&\ \frac{-\lambda_3}{2t(t+1)} \fronorm{\bD_t - \bM_t}^2 + \frac{\lambda_3}{2(t+1)} \fronorm{ \bz_{t+1} (\bu_{t+1}^*)\trans }^2 \notag\\
&\ - \frac{\lambda_3}{t+1} \tr\( (\bD_t - \bM_t)\trans \bz_{t+1} (\bu_{t+1}^*)\trans \)\notag\\
=&\ \frac{-\lambda_3}{2t(t+1)} \fronorm{\bD_t - \bM_t}^2 + \frac{\lambda_3}{2(t+1)} \fronorm{ \bz_{t+1} (\bu_{t+1}^*)\trans }^2 \notag\\
&\ - \frac{\lambda_3}{t+1} \( \twonorm{ \bz_{t+1} }^2 + \frac{1}{\lambda_3} \) \twonorm{ \bu_{t+1}^* }^2\notag\\
\leq&\ \frac{1}{t+1} \( \frac{\lambda_3}{2} \fronorm{ \bz_{t+1} (\bu_{t+1}^*)\trans }^2 -  (\lambda_3 \twonorm{ \bz_{t+1} }^2 + 1 ) \twonorm{ \bu_{t+1}^* }^2 \)\notag\\
\leq&\ \frac{1}{t+1} \( -\fractwo{\lambda_3} \twonorm{ \bz_{t+1} }^2 \twonorm{ \bu_{t+1}^* }^2 - \twonorm{ \bu_{t+1}^* }^2 \),
\end{align}
where the first equality is derived by the fact that $\bM_{t+1} = \bM_t + \bz_{t+1} (\bu_{t+1}^*)\trans$, and the second equality is derived by the closed form solution of $\bu_{t+1}^*$ (see \eqref{eq:u_t}).

Combining \eqref{eq:pf:1} and \eqref{eq:pf:2}, we know that
\begin{align*}
&\ \frac{1}{t+1} \sum_{i=1}^{t+1} \fractwo{1} \twonorm{ \bu_i^* }^2 - \frac{1}{t} \sum_{i=1}^{t} \twonorm{ \bu_i^* }^2 \\
&\ + \frac{\lambda_3}{2(t+1)} \fronorm{\bD_t - \bM_{t+1}}^2 - \frac{\lambda_3}{2t} \fronorm{\bD_t - \bM_{t}}^2\\
\leq&\ \frac{1}{2(t+1)} \twonorm{\bu_{t+1}^*}^2 + \frac{1}{t+1} \Big( -\fractwo{\lambda_3} \twonorm{ \bz_{t+1} }^2 \twonorm{ \bu_{t+1}^* }^2  - \twonorm{ \bu_{t+1}^* }^2 \Big)\\
=&\ \frac{1}{t+1} \( -\fractwo{\lambda_3} \twonorm{ \bz_{t+1} }^2 \twonorm{ \bu_{t+1}^* }^2 - \fractwo{1} \twonorm{ \bu_{t+1}^* }^2 \) \leq 0.
\end{align*}

Therefore,
\begin{align*}
 \psi_{t+1} - \psi_t \leq&\ g_{t+1}(\bD_{t+1}) - g_{t+1}(\bD_t) + \frac{1}{t+1} \ell(\bz_{t+1}, \bD_t)  - \frac{1}{t+1} \tilde{g}_t(\bD_t)\\
=&\ g_{t+1}(\bD_{t+1}) - g_{t+1}(\bD_t) + \frac{\tilde{f}_t(\bD_t) - \tilde{g}_t(\bD_t)}{t+1}  + \frac{\ell(\bz_{t+1}, \bD_t) - \tilde{f}_t(\bD_t)}{t+1}\\
\leq&\ \frac{\ell(\bz_{t+1}, \bD_t) - \tilde{f}_t(\bD_t)}{t+1},
\end{align*}
where the last inequality holds because $\bD_{t+1}$ is the minimizer of $g_{t+1}(\bD)$ and $\tilde{g}_t(\bD)$ is a surrogate function of $\tilde{f}_t(\bD)$.

Let $\mathcal{F}_t$ be the filtration of the past information. We take the expectation on the above equation conditional on $\mathcal{F}_t$:
\begin{align*}
\EXP[\psi_{t+1} - \psi_t \mid \mathcal{F}_t] \leq&\ \frac{ \EXP[\ell(\bz_{t+1}, \bD_t) \mid \mathcal{F}_t] - \tilde{f}_t(\bD_t) }{t+1} \\
\leq&\ \frac{ f(\bD_t) - \tilde{f}_t(\bD_t) }{t+1}\\
\leq&\ \frac{\infnorm{f - \tilde{f}_t}}{t+1}.
\end{align*}
From Proposition~\ref{prop:l donsker}, we know
\begin{align*}
\EXP\Big[\infnorm{f- \tilde{f}_t}\Big] = O\(\frac{1}{\sqrt{t}}\).
\end{align*}
Thus,
\begin{align}\label{eq:tmp:diff psi}
\EXP\big[\EXP[\psi_{t+1} - \psi_t \mid \mathcal{F}_t]^+\big] = \EXP\big[\max\{ \EXP[\psi_{t+1} - \psi_t \mid \mathcal{F}_t], 0 \}\big] \leq \frac{c}{\sqrt{t}(t+1)},
\end{align}
where $c$ is some constant.

Now let us define the index set
\begin{align*}
\mathcal{T} = \Big\{ t:\ t \geq 1, \EXP\big[\psi_{t+1} - \psi_t \mid \mathcal{F}_t \big] > 0 \Big\},
\end{align*}
and the indicator function
\begin{align*}
\delta_t =
\begin{cases}
1,\quad \textrm{if}\ t \in \mathcal{T},\\
0,\quad \textrm{otherwise}.
\end{cases}
\end{align*}
It follows that
\begin{align*}
\sum_{t=1}^{\infty} \EXP[\delta_t(\psi_{t+1} - \psi_t)] =&\ \sum_{t \in \mathcal{T}} \EXP[\psi_{t+1} - \psi_t ]\\
=&\ \sum_{t \in \mathcal{T}} \EXP[\EXP[\psi_{t+1} - \psi_t \mid \mathcal{F}_t]]\\
=&\ \sum_{t=1}^{\infty} \EXP[\EXP[\psi_{t+1} - \psi_t \mid \mathcal{F}_t]^+]\\
\leq& +\infty,
\end{align*}
where the last inequality holds in view of~\eqref{eq:tmp:diff psi}.

Thus, Lemma~\ref{lem:bottou} applies. That is, $\{g_t(\bD_t)\}_{t\geq 1}$ is a quasi-martingale and converges almost surely. In addition,
\begin{equation*}
\sum_{t=1}^{\infty} \abs{ \EXP\big[\psi_{t+1} - \psi_t \mid \mathcal{F}_t\big] } < +\infty, \ a.s.
\end{equation*}
\end{proof}

\subsection{Proof of Prop.~\ref{prop:diff_D}}\label{app:proof:prop:diff_D}

\begin{proof}
%Let us define
%\begin{equation}
%\hg_t(L) = \frac{1}{t} \(\frac{1}{2} \mathrm{Tr} \( L\trans L \bA_t \) - \mathrm{Tr} \( L\trans \bB_t \) \) + \frac{\lambda_1}{2t} \twoinfnorm{L}^2.
%\end{equation}
According the strong convexity of $g_t(\bD)$ (Assumption~\ref{as:g_t(D)}), we have,
\begin{equation}
\label{eq:pf:dif_g1}
g_t(\bD_{t+1}) - g_t(\bD_t) \geq \frac{\beta_0}{2} \fronorm{\bD_{t+1} - \bD_t}^2,
\end{equation}
On the other hand,
\begin{align}
\label{eq:pf:dif_g2}
&\ g_t(\bD_{t+1}) - g_t(\bD_t) \notag\\
=&\ g_t(\bD_{t+1}) - g_{t+1}(\bD_{t+1}) + g_{t+1}(\bD_{t+1}) - g_{t+1}(\bD_t) \notag\\
&\ + g_{t+1}(\bD_t) - g_t(\bD_t)\notag\\
\leq&\ g_t(\bD_{t+1}) - g_{t+1}(\bD_{t+1}) + g_{t+1}(\bD_t) - g_t(\bD_t).
\end{align}
Note that the inequality is derived by the fact that $g_{t+1}(\bD_{t+1}) - g_{t+1}(\bD_t) \leq 0$, as $\bD_{t+1}$ is the minimizer of $g_{t+1}(\bD)$. Let
\begin{align}
G_t(\bD) = g_t(\bD) - g_{t+1}(\bD).
\end{align}
By a simple calculation, we obtain the gradient of $G_t(\bD)$:
\begin{align*}
 \nabla G_t(\bD) =&\ \nabla g_t(\bD) - \nabla g_{t+1}(\bD)\\
=&\ \frac{1}{t} \Big[ \bD\(\lambda_1 \bA_t + \lambda_3 \bI_d\) - \(\lambda_1 \bB_t + \lambda_3 \bM_t\) \Big] \\
&\ - \frac{1}{t+1} \Big[ \bD(\lambda_1 A_{t+1} + \lambda_3 \bI_d) - (\lambda_1 B_{t+1} + \lambda_3 \bM_{t+1}) \Big]\\
=&\ \frac{1}{t} \Bigg[ \bD\(\lambda_1 \bA_t + \lambda_3 \bI_d - \frac{\lambda_1 t}{t+1} A_{t+1} - \frac{\lambda_3 t}{t+1} \bI_d\) \\
&\ + \frac{\lambda_1 t}{t+1} B_{t+1}  - \lambda_1 \bB_t + \frac{\lambda_3 t}{t+1} \bM_{t+1} - \lambda_3 \bM_t \Bigg]\\
=&\ \frac{1}{t} \Bigg[ \bD\(\frac{\lambda_1}{t+1} A_{t+1} - \lambda_1 \bv_{t+1} \bv_{t+1}\trans + \frac{\lambda_3}{t+1} \bI_d\) \\
&\ + \lambda_1(\bz_{t+1}  - \be_{t+1}) \bv_{t+1}\trans - \frac{\lambda_1}{t+1}B_{t+1}  + \lambda_3 \bz_{t+1} \bu_{t+1}\trans - \frac{\lambda_3}{t+1} \bM_{t+1} \Bigg]
\end{align*}
So the upper bound of the Frobenius norm of $\nabla G_t(\bD)$ follows immediately:
\begin{align*}
&\ \fronorm{\nabla G_t(\bD)}\\
 \leq &\ \frac{1}{t} \Bigg[ \fronorm{\bD}\Bigg(\lambda_1 \fronorm{\frac{A_{t+1}}{t+1}} + \lambda_1 \fronorm{\bv_{t+1} \bv_{t+1}\trans}  + \frac{\lambda_3}{t+1} \fronorm{\bI_d}\Bigg)  \\
&\ + \lambda_1 \fronorm{(\bz_{t+1} - \be_{t+1})\bv_{t+1}\trans}  + \lambda_1 \fronorm{\frac{B_{t+1}}{t+1}} + \lambda_3 \fronorm{\bz_{t+1}\bu_{t+1}\trans} + \frac{\lambda_3}{t+1} \fronorm{\bM_{t+1}} \Bigg]\\
=&\ \frac{1}{t} \Bigg[ \fronorm{\bD}\(\lambda_1 \fronorm{\frac{A_{t+1}}{t+1}} + \lambda_1 \fronorm{\bv_{t+1} \bv_{t+1}\trans}\)  + \lambda_1 \fronorm{(\bz_{t+1} - \be_{t+1})\bv_{t+1}\trans} \\
&\  + \lambda_1 \fronorm{\frac{B_{t+1}}{t+1}} + \lambda_3 \fronorm{\bz_{t+1}\bu_{t+1}\trans}  \Bigg]  + \frac{\lambda_3}{t(t+1)} \Big[ \fronorm{\bI_d} + \fronorm{\bM_{t+1}} \Big].
\end{align*}
We know from Proposition~\ref{prop:bound:veABMDu} that all the terms in the above equation are uniformly bounded from above. Thus, there exist constants $c_1$, $c_2$ and $c_3$, such that
\begin{align}\label{eq:grad G_t}
\fronorm{\nabla G_t(\bD)} \leq \frac{1}{t} \( c_1 \fronorm{\bD} + c_2 \) + \frac{c_3}{t}.
\end{align}

According to the first order Taylor expansion,
\begin{align*}
&\ G_t(\bD_{t+1}) - G_t(\bD_t) \\
= &\ \tr\( \(\bD_{t+1} - \bD_t\)\trans \nabla G_t \(\rho \bD_t + \(1- \rho\)\bD_{t+1}\) \)\\
\leq &\ \fronorm{\bD_{t+1} - \bD_t} \cdot \fronorm{\nabla G_t \(\rho\bD_t + \(1- \rho\)\bD_{t+1}\)},
\end{align*}
where $\rho$ is some scalar between 0 and 1. According to Proposition~\ref{prop:bound:veABMDu}, $\bD_t$ and $\bD_{t+1}$ are uniformly bounded, indicating that $\rho \bD_t + \(1- \rho\)\bD_{t+1}$ is uniformly bounded. In view of~\eqref{eq:grad G_t}, there exists a constant $c_4$, such that
\begin{align*}
\fronorm{\nabla G_t \(\alpha \bD_t + \(1- \alpha\) \bD_{t+1}\)} \leq \frac{c_4}{t},
\end{align*}
resulting in
\begin{align*}
G_t(\bD_{t+1}) - G_t(\bD_t) \leq \frac{c_4}{t} \cdot \fronorm{\bD_{t+1} - \bD_t}.
\end{align*}
Combining \eqref{eq:pf:dif_g1}, \eqref{eq:pf:dif_g2} and the above equation, we have
\begin{align*}
\fronorm{\bD_{t+1} - \bD_t} = \frac{2c_4}{\beta_0 t}.
\end{align*}
\end{proof}

\subsection{Proof of Theorem~\ref{thm:convergence f_t(D_t)}}

\begin{proof}
Recall that $\tilde{f}_t(\bD) = \frac{1}{t} \sum_{i=1}^{t} \ell(\bD; \bz_i)$ and $\tilde{g}_t(\bD) = \frac{1}{t} \sum_{i=1}^{t} \tl(\bD, \bv_i^*, \be_i^*; \bz_i)$. Define
\begin{align*}
b_t \defeq&\ g_t(\bD_t) - f_t(\bD_t)\\
=&\ \tilde{g}_t(\bD_t) - \tilde{f}_t(\bD_t) + \Bigg[ \frac{1}{t} \sum_{i=1}^{t} \fractwo{1} \twonorm{\bu_i^*}^2 + \frac{\lambda_3}{2t} \fronorm{\bD_t - \bM_t}^2 \\
&\ - \frac{1}{t} \sum_{i=1}^t \fractwo{1} \twonorm{  \bD_t\trans \(\frac{1}{\lambda_3}\bI_p +  \bY\bY\trans\)^{-1} \bz_i }^2  - \frac{1}{2 \lambda_3} \fronorm{ \( \frac{1}{\lambda_3} \bI_p + \bY\bY\trans \)^{-1}\bD_t }^2 \Bigg]\\
=&\ \tilde{g}_t(\bD_t) - \tilde{f}_t(\bD_t) + q_t(\bD_t),
\end{align*}
where $q_t(\bD_t)$ denotes the four terms in the brackets. Using~\eqref{eq:pf:diff u_t}, we have
\begin{align*}
\frac{b_t}{t+1} =&\ \frac{\tilde{g}_t(\bD_t) - \tilde{f}_t(\bD_t)}{t+1} + \frac{q_t(\bD_t)}{t+1}\\
=&\ g_{t+1}(\bD_{t+1}) - g_{t+1}(\bD_t) + \frac{ \ell( \bD_t; \bz_{t+1}) - \tilde{f}_t(\bD_t) }{t+1}  + \psi_t - \psi_{t+1} \\
&\ + \Bigg[ \frac{q_t(\bD_t)}{t+1} +  \frac{1}{t+1} \sum_{i=1}^{t+1} \fractwo{1} \twonorm{ \bu_i^* }^2  + \frac{\lambda_3}{2(t+1)} \fronorm{\bD_t - \bM_{t+1}}^2 \\
&\ - \frac{1}{t} \sum_{i=1}^{t} \frac{1}{2} \twonorm{ \bu_i^* }^2  - \frac{\lambda_3}{2t} \fronorm{\bD_t - \bM_{t}}^2 \Bigg].
\end{align*}
Note that it always holds that for some constant $c$,
\begin{align*}
\frac{q_t(\bD_t)}{t+1} \leq&\ \frac{1}{t(t+1)} \sum_{i=1}^{t} \fractwo{1} \twonorm{\bu_i^*}^2 + \frac{\lambda_3}{2t(t+1)} \fronorm{\bD_t - \bM_t}^2\\
\leq&\ \frac{1}{t(t+1)} \sum_{i=1}^{t} \fractwo{1} \twonorm{\bu_i^*}^2 + \frac{c}{2t(t+1)},
\end{align*}
where the second inequality is due to the fact that $\bD_t$ and $\bM_t$ are both uniformly bounded (see Proposition~\ref{prop:bound:veABMDu}).

On the other hand, from~\eqref{eq:pf:1} we know that
\begin{align*}
\frac{1}{t+1} \sum_{i=1}^{t+1} \fractwo{1} \twonorm{ \bu_i^* }^2 - \frac{1}{t} \sum_{i=1}^{t} \twonorm{ \bu_i^* }^2 = \frac{-1}{t(t+1)} \sum_{i=1}^{t} \fractwo{1} \twonorm{ \bu_i^* }^2  + \frac{1}{2(t+1)} \twonorm{\bu_{t+1}^*}^2,
\end{align*}
while~\eqref{eq:pf:2} implies
\begin{align*}
 \frac{\lambda_3}{2(t+1)} \fronorm{\bD_t - \bM_{t+1}}^2 - \frac{\lambda_3}{2t} \fronorm{\bD_t - \bM_{t}}^2\leq \frac{1}{t+1} \( -\fractwo{\lambda_3} \twonorm{ \bz_{t+1} }^2 \twonorm{ \bu_{t+1}^* }^2 - \twonorm{ \bu_{t+1}^* }^2 \).
\end{align*}
Combining these pieces, we have
\begin{align*}
&\ \frac{q_t(\bD_t)}{t+1} +  \frac{1}{t+1} \sum_{i=1}^{t+1} \fractwo{1} \twonorm{ \bu_i^* }^2  + \frac{\lambda_3}{2(t+1)} \fronorm{\bD_t - \bM_{t+1}}^2 \\
&\ - \frac{1}{t} \sum_{i=1}^{t} \frac{1}{2} \twonorm{ \bu_i^* }^2 - \frac{\lambda_3}{2t} \fronorm{\bD_t - \bM_{t}}^2 \\
\leq&\ \frac{ c}{2t(t+1)} + \frac{1}{2(t+1)} \twonorm{\bu_{t+1}^*}^2  + \frac{1}{t+1} \( -\fractwo{\lambda_3} \twonorm{ \bz_{t+1} }^2 \twonorm{ \bu_{t+1}^* }^2 - \twonorm{ \bu_{t+1}^* }^2 \)\\
=&\ \frac{ c}{2t(t+1)} - \frac{1}{2(t+1)} \twonorm{\bu_{t+1}^*}^2  - \frac{\lambda_3}{2(t+1)} \twonorm{ \bz_{t+1} }^2 \twonorm{ \bu_{t+1}^* }^2\\
\leq&\ \frac{ c}{2t(t+1)}.
\end{align*}
Therefore,
\begin{align*}
 \frac{b_t}{t+1} \leq&\ g_{t+1}(\bD_{t+1}) - g_{t+1}(\bD_t) + \frac{ \ell(\bD_t; \bz_{t+1}) - \tilde{f}_t(\bD_t) }{t+1} \\
&\ + \psi_t - \psi_{t+1} + \frac{ c}{2t(t+1)}\\
\leq&\ \frac{ \ell(\bD_t; \bz_{t+1}) - \tilde{f}_t(\bD_t) }{t+1} + \psi_t - \psi_{t+1} + \frac{ c}{2t(t+1)},
\end{align*}
where we use the fact that $\bD_{t+1}$ minimizes $g_{t+1}(\bD)$ in the second inequality and we denote $\psi_t = g_t(\bD_t)$. By taking the expectation over $\bz$ conditional on the past filtration $\mathcal{F}_t$, we have
\begin{align*}
\frac{b_t}{t+1} \leq& \frac{c_1}{\sqrt{t}(t+1)} + \abs{ \EXP[\psi_t - \psi_{t+1} \mid \mathcal{F}_t] } + \frac{ c}{2t(t+1)},
\end{align*}
which is an immediate result from Prop.~\ref{prop:l donsker}. Thereby,
\begin{align*}
 \sum_{t=1}^{\infty} \frac{b_t}{t+1} \leq \sum_{t=1}^{\infty} \frac{c_1}{\sqrt{t}(t+1)} + \sum_{t=1}^{\infty} \abs{ \EXP[\psi_t - \psi_{t+1} \mid \mathcal{F}_t] } + \sum_{t=1}^{\infty} \frac{ c}{2t(t+1)} < +\infty.
\end{align*}
Here, the last inequality is derived by applying \eqref{eq:pf:sum diff u_t}.

Next, we examine the difference between $b_{t+1}$ and $b_t$:
\begin{align}\label{eq:pf:diff b_t}
 \abs{ b_{t+1} - b_t } =&\ \abs{ g_{t+1}(\bD_{t+1}) -  f_{t+1}(\bD_{t+1}) - g_t(\bD_t) + f_t(\bD_t) } \notag\\
\leq&\  \abs{ g_{t+1}(\bD_{t+1}) - g_t(\bD_{t+1}) } + \abs{ g_t(\bD_{t+1}) - g_t(\bD_t) } \notag\\
&\ + \abs{ f_{t+1}(\bD_{t+1}) - f_t(\bD_{t+1}) } + \abs{ f_t(\bD_{t+1}) - f_t(\bD_t) }.
\end{align}
For the first term on the right hand side, we have
\begin{align*}
&\ \abs{ g_{t+1}(\bD_{t+1}) - g_t(\bD_{t+1}) }\\
=&\ \Big| \tilde{g}_{t+1}(\bD_{t+1}) - \tilde{g}_t(\bD_{t+1}) + \frac{1}{t+1} \sum_{i=1}^{t+1} \fractwo{1} \twonorm{\bu_i^*}^2  - \frac{1}{t} \sum_{i=1}^{t} \fractwo{1} \twonorm{\bu_i^*}^2 \\
&\ + \frac{\lambda_3}{2(t+1)} \fronorm{\bD_{t+1} - \bM_{t+1}}^2  - \frac{\lambda_3}{2t} \fronorm{\bD_{t+1} - \bM_t}^2 \Big|\\
=&\ \Big| \tilde{g}_{t+1}(\bD_{t+1}) - \tilde{g}_t(\bD_{t+1}) - \frac{1}{t(t+1)} \sum_{i=1}^{t} \fractwo{1} \twonorm{\bu_i^*}^2  - \frac{1}{2(t+1)} \twonorm{\bu_{t+1}^*}^2 \\
&\ - \frac{\lambda_3}{2t(t+1)} \fronorm{\bD_{t+1} - \bM_t}^2 - \frac{\lambda_3}{2(t+1)} \fronorm{\bz_{t+1}(\bu_{t+1}^*)\trans}^2 \Big| \\
\leq&\ \abs{ \tilde{g}_{t+1}(\bD_{t+1}) - \tilde{g}_t(\bD_{t+1}) } +  \frac{1}{t(t+1)} \sum_{i=1}^{t} \fractwo{1} \twonorm{\bu_i^*}^2   + \frac{1}{2(t+1)} \twonorm{\bu_{t+1}^*}^2 \\
&\ + \frac{\lambda_3}{2t(t+1)} \fronorm{\bD_{t+1} - \bM_t}^2  + \frac{\lambda_3}{2(t+1)} \fronorm{\bz_{t+1}(\bu_{t+1}^*)\trans}^2\\
\stackrel{\zeta_1}{\leq}&\ \abs{ \tilde{g}_{t+1}(\bD_{t+1}) - \tilde{g}_t(\bD_{t+1}) } + \frac{c_1}{t+1}\\
=&\ \abs{ \frac{1}{t+1} \ell(\bD_{t+1}; \bz_{t+1}) - \frac{1}{t+1} \tilde{g}_t(\bD_{t+1}) } + \frac{c_1}{t+1}\\
\stackrel{\zeta_2}{\leq}&\ \frac{c_2}{t+1},
\end{align*}
where $c_1$ and $c_2$ are some uniform constants. Note that $\zeta_1$ holds because all the $\bu_t^*$, $\bD_{t+1}$, $\bM_t$ and $\bz_{t+1}$ are uniformly bounded (see Proposition~\ref{prop:bound:veABMDu}), and $\zeta_2$ holds because $\ell(\bz_{t+1}, \bD_{t+1})$ and $\tilde{g}_t(\bD_{t+1})$ are uniformly bounded (see Corollary~\ref{coro:bound l lip gt}).

For the third term on the right hand side of \eqref{eq:pf:diff b_t}, we can similarly show that
\begin{align*}
 \abs{ f_{t+1}(\bD_{t+1}) - f_t(\bD_{t+1}) } \leq&\ \abs{ \tilde{f}_{t+1}(\bD_{t+1}) - \tilde{f}_t(\bD_{t+1}) } + \frac{c_3}{t+1}\\
=&\ \abs{ \frac{1}{t+1} \ell(\bD_{t+1}; \bz_{t+1}) - \frac{1}{t+1} \tilde{f}_t(\bD_{t+1}) } + \frac{c_3}{t+1}\\
\stackrel{\zeta_3}{\leq}&\ \frac{c_4}{t+1},
\end{align*}
where $c_3$ and $c_4$ are some uniform constants, and $\zeta_3$ holds as $\ell(\bD_{t+1}; \bz_{t+1})$ and $\tilde{f}_t(\bD_{t+1})$ are both uniformly bounded (see Corollary~\ref{coro:bound lip ft}).

Using Corollary~\ref{coro:bound l lip gt} and Corollary~\ref{coro:bound lip ft}, we know that both $g_t(\bD)$ and $f_t(\bD)$ are uniformly Lipschitz. That is, there exist uniform constants $\kappa_1$, $\kappa_2$, such that
\begin{align*}
\abs{ g_t(\bD_{t+1}) - g_t(\bD_t) } \leq&\ \kappa_1 \fronorm{\bD_{t+1} - \bD_t} \stackrel{\zeta_4}{\leq} \frac{\kappa_3}{t+1},\\
\abs{ f_t(\bD_{t+1}) - f_t(\bD_t) } \leq&\ \kappa_2 \fronorm{\bD_{t+1} - \bD_t} \stackrel{\zeta_5}{\leq} \frac{\kappa_4}{t+1}.
\end{align*}
Here, $\zeta_4$ and $\zeta_5$ are derived by applying Prop.~\ref{prop:diff_D} and $\kappa_3$ and $\kappa_4$ are some uniform constants. Finally, we have a bound for \eqref{eq:pf:diff b_t}:
\begin{align*}
\abs{ b_{t+1} - b_t } \leq \frac{\kappa_0}{t+1},
\end{align*}
where $\kappa_0$ is some uniform constant.

By applying Lemma~\ref{lem:mairal}, we conclude that $\{b_t\}_{t\geq 1}$ converges to zero. That is,
\begin{align*}
\lim_{t \rightarrow +\infty} g_t(\bD_t) - f_t(\bD_t) = 0.
\end{align*}
Since we have proved in Theorem~\ref{thm:convergence g_t(D_t)} that $g_t(\bD_t)$ converges almost surely, we conclude that $f_t(\bD_t)$ converges almost surely to the same limit of $g_t(\bD_t)$.
\end{proof}

\subsection{Proof of Theorem~\ref{thm:stationary}}
We need a technical result to prove Theorem~\ref{thm:stationary}.
\begin{proposition}
Let $f(\bD)$ be the expected loss function which is defined in \eqref{eq:f(D)}. Then, $f(\bD)$ is continuously differentiable and $\nabla f(\bD) = \EXP_{\bz}[\nabla_{\bD} \ell(\bD; \bz)]$. Moreover, $\nabla f(D)$ is uniformly Lipschitz on $\mathcal{D}$.
\end{proposition}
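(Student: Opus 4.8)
The plan is to differentiate under the expectation sign and then transfer the regularity of the integrand $\ell(\bz,\cdot)$ to $f$. Recall from \eqref{eq:f(D)} that $f(\bD)=\EXP_{\bz}[\ell(\bz,\bD)]$, and that Proposition~\ref{prop:l:Lipschtiz} already furnishes, for each fixed $\bz$, continuous differentiability of $\ell(\bz,\cdot)$ together with the closed form $\nabla_{\bD}\ell(\bz,\bD)=(\bD\bv'+\be'-\bz)\bv'^{\top}$, where $\{\bv',\be'\}$ is the unique minimizer of $\tl(\bz,\bD,\bv,\be)$. By Assumption~\ref{as:z} and Proposition~\ref{prop:bound:veABMDu}, the quantities $\bD$, $\bv'$, $\be'$ and $\bz$ are all uniformly bounded, so $\nabla_{\bD}\ell(\bz,\bD)$ is bounded by an absolute constant uniformly over $\bz$ and over $\bD\in\mathcal{D}$.

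First I would establish the identity $\nabla f(\bD)=\EXP_{\bz}[\nabla_{\bD}\ell(\bz,\bD)]$. Fix a direction $\bH$ and form the difference quotient $t^{-1}\bigl(\ell(\bz,\bD+t\bH)-\ell(\bz,\bD)\bigr)$; since $\ell(\bz,\cdot)$ is uniformly Lipschitz (Proposition~\ref{prop:l:Lipschtiz}), these quotients are dominated by a constant multiple of $\fronorm{\bH}$, which is integrable with respect to the law of $\bz$. As $t\to 0$ the quotient converges pointwise to the directional derivative of $\ell(\bz,\cdot)$, so the dominated convergence theorem permits interchanging the limit with the expectation, yielding differentiability of $f$ with $\nabla f(\bD)=\EXP_{\bz}[\nabla_{\bD}\ell(\bz,\bD)]$.

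The heart of the argument is the uniform Lipschitz continuity of $\nabla f$, which I would reduce to the Lipschitz continuity of $\bD\mapsto\nabla_{\bD}\ell(\bz,\bD)$ uniform in $\bz$, via $\fronorm{\nabla f(\bD_1)-\nabla f(\bD_2)}\le\EXP_{\bz}\fronorm{\nabla_{\bD}\ell(\bz,\bD_1)-\nabla_{\bD}\ell(\bz,\bD_2)}$. The key sub-step is to show that the minimizer map $\bD\mapsto\{\bv'(\bD),\be'(\bD)\}$ is Lipschitz on $\mathcal{D}$: because $\tl(\bz,\bD,\cdot,\cdot)$ is strongly convex in $\bv$ through the term $\fractwo{1}\twonorm{\bv}^2$, a perturbation estimate bounds $\twonorm{\bv'(\bD_1)-\bv'(\bD_2)}$ by a constant multiple of $\fronorm{\bD_1-\bD_2}$, and since $\be'=\mathcal{S}_{\lambda_2/\lambda_1}(\bz-\bD\bv')$ with the soft-thresholding operator being $1$-Lipschitz, $\be'(\bD)$ inherits the same Lipschitz dependence. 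Inserting these bounds, together with the uniform boundedness of all factors, into the bilinear expression $(\bD\bv'+\be'-\bz)\bv'^{\top}$ shows that $\nabla_{\bD}\ell(\bz,\cdot)$ is Lipschitz with a constant independent of $\bz$, and hence so is $\nabla f$. Continuous differentiability of $f$ then follows at once, since a Lipschitz gradient is in particular continuous.

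I expect the main obstacle to be the perturbation estimate for the minimizer map: one must convert the strong convexity of $\tl$ in $\bv$ into a quantitative two-point bound on $\bv'(\bD_1)-\bv'(\bD_2)$, comparing the first-order optimality conditions at $\bD_1$ and $\bD_2$ and controlling the cross terms of the form $(\bD_1-\bD_2)\bv'$ through the uniform bounds of Proposition~\ref{prop:bound:veABMDu}. The non-smoothness of the $\ell_1$ penalty makes it cleanest to argue through the soft-thresholding reformulation, treating $\be'$ as a $1$-Lipschitz function of $(\bD,\bv')$, rather than attempting a joint Hessian lower bound in $(\bv,\be)$.
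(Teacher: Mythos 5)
Your proposal is correct, but its central step takes a genuinely different route from the paper's. For the identity $\nabla f(\bD) = \EXP_{\bz}[\nabla_{\bD}\ell(\bz,\bD)]$ you invoke dominated convergence with the uniform Lipschitz bound of Proposition~\ref{prop:l:Lipschtiz} as the dominating function; the paper simply asserts this interchange, so your version is if anything more complete. The real divergence is in proving the Lipschitz continuity of the minimizer map $(\bz,\bD)\mapsto(\bv',\be')$. The paper follows the Mairal-style template: it fixes the support $\varLambda$ of $\be'$, uses the optimality condition $\abs{(\bz-\bD\bv'-\be')_j}=\lambda_2/\lambda_1$ on $\varLambda$ (strict inequality off it) to argue the support is locally constant, restricts the problem to $\bH_{\varLambda}$ with $\bH=[\bD\ \bI_p]$, invokes strong convexity of the restricted objective, and closes with the two-function comparison yielding $\kappa_1\twonorm{\br''_{\varLambda}-\br'_{\varLambda}}^2 \leq c\twonorm{\br''_{\varLambda}-\br'_{\varLambda}}$. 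You instead eliminate $\be$ by partial minimization~---~effectively replacing the $\ell_1$ term by its Moreau envelope, so that $\bv'$ minimizes a $1$-strongly convex smooth function whose $\bD$-dependence has a uniformly Lipschitz gradient~---~obtain the perturbation bound for $\bv'$ from strong convexity in $\bv$ alone, and then recover $\be'=\mathcal{S}_{\lambda_2/\lambda_1}(\bz-\bD\bv')$ via the $1$-Lipschitz soft-thresholding operator. This buys two things. First, your bound is global rather than local: the paper's locally-constant-support step tacitly requires strict complementarity~---~if some coordinate satisfies $\abs{(\bz-\bD\bv'-\be')_j}=\lambda_2/\lambda_1$ with $\be'_j=0$, the claimed open neighborhood $\mathcal{V}$ need not exist~---~whereas your marginalization argument is insensitive to the support of $\be'$. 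Second, you never need strong convexity of the restricted joint problem in $(\bv,\be_{\varLambda})$, which the paper asserts without proof (it does hold, with modulus uniform over the compact set $\mathcal{D}$, but verifying it requires a separate argument). Your final assembly~---~inserting the Lipschitz bounds for $\bv'$ and $\be'$ into the bilinear expression $(\bD\bv'+\be'-\bz)\bv'^{\top}$ and passing a constant uniform in $\bz$ through the expectation~---~coincides with the paper's conclusion of the proof.
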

\begin{proof}
We have shown in Proposition~\ref{prop:l:Lipschtiz} that $\ell(\bD; \bz)$ is continuously differentiable, $f(D)$ is also continuously differentiable and we have $\nabla f(D) = \EXP_{\bz}[\nabla_D \ell(\bD; \bz)]$.

Next, we prove the Lipschitz of $\nabla f(D)$. Let $\bv'(\bD'; \bz')$ and $\be'(\bD'; \bz')$ be the minimizer of $\tl(\bD', \bv, \be; \bz')$. Since $\tl(\bD, \bv, \be; \bz)$ has a unique minimum and is continuous in $\bD$, $\bv$, $\be$ and $\bz$, $\bv'(\bD'; \bz')$ and $\be'(\bD'; \bz')$ is continuous in $\bD$ and $\bz$.

Let $\varLambda = \{ j \mid \be'_j \neq 0 \}$. According the first order optimality condition, we know that
\begin{align*}
\frac{\partial \tl(\bD, \bv, \be; \bz)}{\partial \be} = 0,
\end{align*}
which implies
\begin{align*}
\lambda_1(\bz' - \bD'\bv' - \be') \in \lambda_2 \partial\onenorm{\be'}.
\end{align*}
Hence,
\begin{align*}
\abs{ (\bz' - \bD'\bv' - \be')_j } = \frac{\lambda_2}{\lambda_1},\ \forall j \in \varLambda.
\end{align*}
Since $\bz - D\bv - \be$ is continuous in $\bz$ and $\bD$, there exists an open neighborhood $\mathcal{V}$, such that for all $(\bz'', D'') \in \mathcal{V}$, if $j \notin \varLambda$, then $\abs{(\bz'' - D''\bv'' - \be'')_j} < \frac{\lambda_2}{\lambda_1}$ and $\be''_j = 0$. That is, the support set of $\be'$ will not change.

Let us denote $\bH =[\bD\ \bI_p]$, $\br = [\bv\trans\ \be\trans]\trans$ and define the function
\begin{align*}
\tl(\bH_{\varLambda}, \br_{\varLambda}; \bz) = \fractwo{\lambda_1} \twonorm{ \bz - \bH_{\varLambda}\br_{\varLambda}}^2 +\fractwo{1} \twonorm{[\bI_d\ 0]\br_{\varLambda}}^2  + \lambda_2 \onenorm{[0\ \bI_{\abs{\varLambda}}]\br_{\varLambda}}.
\end{align*}
Above, $\br_{\varLambda} = [\bv\trans\ \be_{\varLambda}\trans]\trans$, and accordingly for $\bH_{\varLambda}$.
Since $\tl(D_{\varLambda}, \br_{\varLambda}; \bz)$ is strongly convex with respect to $\br_{\varLambda}$, there exists a uniform constant $\kappa_1$, such that for all $\br''_{\varLambda}$,
\begin{align}
\label{eq:pf:diff tl' 1}
\tl( H'_{\varLambda}, \br''_{\varLambda}; \bz') - \tl(H'_{\varLambda}, \br'_{\varLambda}; \bz') \geq \kappa_1 \twonorm{\br''_{\varLambda} - \br'_{\varLambda}}^2 = \kappa_1 \( \twonorm{\bv'' - \bv'}^2 + \twonorm{\be''_{\varLambda} - \be'_{\varLambda}}^2 \).
\end{align}
On the other hand,
\begin{align}
\label{eq:pf:diff tl' 2}
&\ \tl( \bH'_{\varLambda}, \br''_{\varLambda}; \bz') - \tl(\bH'_{\varLambda}, \br'_{\varLambda}; \bz') \notag\\
=&\ \tl(\bH'_{\varLambda}, \br''_{\varLambda}; \bz') -\tl(\bH''_{\varLambda}, \br''_{\varLambda}; \bz'')  + \tl( \bH''_{\varLambda}, \br''_{\varLambda}; \bz'') - \tl(\bD'_{\varLambda}, \br'_{\varLambda}; \bz') \notag\\
\leq&\ \tl(\bH'_{\varLambda}, \br''_{\varLambda}; \bz') -\tl(\bH''_{\varLambda}, \br''_{\varLambda}; \bz'')  + \tl(\bH''_{\varLambda}, \br'_{\varLambda}; \bz'') - \tl(\bH'_{\varLambda}, \br'_{\varLambda}; \bz'),
\end{align}
where the last inequality holds because $\br''$ is the minimizer of $\tl(\bH'', \br; \bz'')$.

We shall prove that $\tl(\bH'_{\varLambda}, \br_{\varLambda}; \bz') - \tl(\bH''_{\varLambda}, \br_{\varLambda}; \bz'')$ is Lipschitz w.r.t. $\br$, which implies the Lipschitz of $\bv'(\bD; \bz)$ and $\be'(\bD; \bz)$. By algebra, we have
\begin{align*}
\nabla_{\br} \( \tl(\bH'_{\varLambda}, \br_{\varLambda}; \bz') - \tl(\bH''_{\varLambda}, \br_{\varLambda}; \bz'')\) =&\ \lambda_1 \Big[ H'^{\top}_{\varLambda}(H'_{\varLambda} - H''_{\varLambda}) + (H'_{\varLambda} - H''_{\varLambda})^{\top} H''_{\varLambda} \\
&\ + H'^{\top}_{\varLambda}(\bz'' - \bz')  + (H''_{\varLambda} - H'_{\varLambda})^{\top} \bz'' \Big].
\end{align*}
Note that $\fronorm{H'_{\varLambda}}$, $\fronorm{H''_{\varLambda}}$ and $\bz''$ are all uniformly bounded by Assumption~\ref{as:z} and Prop.~\ref{prop:bound:veABMDu}. Hence, there exists uniform constants $c_1$ and $c_2$, such that
\begin{align*}
 \twonorm{ \nabla_{\br} \( \tl(\bz', H'_{\varLambda}, \br_{\varLambda}) - \tl(\bz'', H''_{\varLambda}, \br_{\varLambda})\) } \leq c_1 \fronorm{H'_{\varLambda} - H''_{\varLambda}} + c_2 \twonorm{\bz' - \bz''},
\end{align*}
which implies that $\tl(\bH'_{\varLambda}, \br_{\varLambda}; \bz') - \tl(\bH''_{\varLambda}, \br_{\varLambda}; \bz'')$ is Lipschitz w.r.t $\br_{\varLambda}$ where the Lipschitz coefficient $c(H'_{\varLambda}, H''_{\varLambda}, \bz', \bz'') = c_1 \fronorm{H'_{\varLambda} - H''_{\varLambda}} + c_2 \twonorm{\bz' - \bz''}$. Combining this fact with \eqref{eq:pf:diff tl' 1} and \eqref{eq:pf:diff tl' 2}, we obtain
\begin{align*}
\kappa_1 \twonorm{\br''_{\varLambda} - \br'_{\varLambda}}^2 \leq c(H'_{\varLambda}, H''_{\varLambda}, \bz', \bz'') \twonorm{\br''_{\varLambda} - \br'_{\varLambda}}.
\end{align*}
Therefore, $\br(\bD; \bz)$ is Lipschitz and so are $\bv(\bD; \bz)$ and $\be(\bD; \bz)$. Note that according to Proposition~\ref{prop:l:Lipschtiz},
\begin{align*}
 \nabla f(\bD') - \nabla f(\bD'')=&\ \EXP_{\bz}\Big[(\bH' \br' - \bz) (\bv')\trans - (\bH''\br'' - \bz) (\bv'')\trans\Big]\\
=& \EXP_{\bz}\Big[ \bH'\br'(\bv' - \bv'')\trans + (\bH' - \bH'')\br' (\bv'')\trans \\
&\ + \bH''(\br' - \br'')(\bv'')\trans + \bz(\bv'' - \bv')\trans \Big].
\end{align*}
Thus,
\begin{align*}
&\ \fronorm{\nabla f(D') - \nabla f(D'')}\\
\stackrel{\zeta_1}{\leq}&\ \EXP_{\bz}\Big[ \twonorm{H'\br'} \twonorm{\bv' - \bv''} + \fronorm{H' - H''}  \fronorm{\br'\bv''^{\top}} \\
&\ + \fronorm{H''} \twonorm{\br' - \br''}  \twonorm{\bv''} + \twonorm{\bz}  \twonorm{\bv' - \bv''} \Big]\\
\stackrel{\zeta_2}{\leq}&\ \EXP_{\bz} \Big[ (\gamma_1 + \gamma_2 \twonorm{\bz}) \fronorm{H' - H''} \Big]\\
\stackrel{\zeta_3}{\leq}&\ \gamma_0 \fronorm{D' - D''},
\end{align*}
where $\gamma_0$, $\gamma_1$ and $\gamma_2$ are all uniform constants. Here, $\zeta_1$ holds due to the convexity of $\fronorm{\cdot}$. $\zeta_2$ is derived by using the result that $\br(\bH; \bz)$ and $\bv(\bH; \bz)$ are both Lipschitz and $H'$, $H''$, $\br'$, $\br''$, $\bv'$ and $\bv''$ are all uniformly bounded. $\zeta_3$ holds because $\bz$ is uniformly bounded and $\fronorm{\bH' - \bH''} = \fronorm{\bD' - \bD''}$. Thus, we complete the proof.
\end{proof}

\begin{proof}
(Proof of Theorem~\ref{thm:stationary}) 
Since $\frac{1}{t}\bA_t$ and $\frac{1}{t} \bB_t$ are uniformly bounded (Proposition~\ref{prop:bound:veABMDu}), there exist sub-sequences of $\{ \frac{1}{t}\bA_t \}$ and $ \{\frac{1}{t} \bB_t \}$ that converge to $A_{\infty}$ and $B_{\infty}$ respectively. Then $\bD_t$ will converge to $\bD_{\infty}$. Let $\bW$ be an arbitrary matrix in $\Rpd$ and $\{ h_k \}_{k\geq 1}$ be any positive sequence that converges to zero.

As $g_t(\bD)$ is a surrogate function of $f_t(\bD)$, for all $t$ and $k$, we have
\begin{align*}
g_t(\bD_t + h_k \bW) \geq f_t(\bD_t + h_k \bW).
\end{align*}
Let $t$ tend to infinity, and note that $f(D) = \lim_{t \rightarrow \infty} f_t(D)$, we have
\begin{align*}
g_{\infty}(D_{\infty} + h_k W) \geq f(D_{\infty} + h_k W).
\end{align*}

Note that the Lipschitz property of $\nabla f(\bD)$ indicates that the second derivative of $f(\bD)$ is uniformly bounded. By a simple calculation, we can also show that it also holds for $g_t(\bD)$. This fact implies that we can take the first order Taylor expansion for both $g_t(\bD)$ and $f(\bD)$ even when $t$ tends to infinity (because the second order derivatives of them always exist). That is,
\begin{align*}
\tr(h_k \bW\trans \nabla g_{\infty}(\bD_{\infty})) + o(h_k \fronorm{\bW}) \geq \tr(h_k \bW\trans \nabla f(\bD_{\infty})) + o(h_k \fronorm{\bW}).
\end{align*}
Above, we use the fact that $\lim_{t \rightarrow \infty} g_t(\bD_t) - f(\bD_t) = 0$ as implied by Corollary~\ref{cor:diff f}. By multiplying $\frac{1}{h_k \fronorm{\bW}}$ on both sides and note that $\{ h_k \}_{k\geq 1}$ is a positive sequence,  it follows that
\begin{align*}
\tr\(\frac{1}{\fronorm{\bW}} \bW\trans \nabla g_{\infty}(D_{\infty})\) + \frac{o(h_k \fronorm{\bW})}{h_k \fronorm{\bW}} \geq \tr\(\frac{1}{\fronorm{\bW}} \bW\trans \nabla f(D_{\infty})\) + \frac{o(h_k \fronorm{\bW})}{h_k \fronorm{\bW}}.
\end{align*}
Now let $k$ go to infinity, we obtain
\begin{align*}
\tr\(\frac{1}{\fronorm{\bW}} \bW\trans \nabla g_{\infty}(\bD_{\infty})\)  \geq \tr\(\frac{1}{\fronorm{\bW}} \bW\trans \nabla f(\bD_{\infty})\).
\end{align*}
Note that this inequality holds for any matrix $\bW \in \Rpd$, so we actually have
\begin{align*}
\nabla g_{\infty}(\bD_{\infty}) = \nabla f(\bD_{\infty}).
\end{align*}
As $\bD_{\infty}$ is the minimizer of $g_{\infty}(\bD)$, we have
\begin{align*}
\nabla f(\bD_{\infty}) = \nabla g_{\infty}(\bD_{\infty}) = 0.
\end{align*}
The proof is complete.
\end{proof}

\clearpage

\bibliographystyle{abbrvnat}
\bibliography{OLRR}

\end{document}